\let\Ginclude@graphics\@org@Ginclude@graphics
\title[Asymptotically Optimal Information-Directed Sampling]{Asymptotically Optimal Information-Directed Sampling}
\renewcommand*{\@titlefoot}{\scriptsize\copyright\space\@jmlryear
	\space\@jmlr@authors.\hfill $^*$Research conducted during an internship at DeepMind.
	\@reprint
}
\let\todo\undefined  
\newtcbox{\entoure}[1][red]{on line,
	arc=3pt,colback=#1!10!white,colframe=#1!50!black,
	before upper={\rule[-3pt]{0pt}{10pt}},boxrule=1pt,
	boxsep=0pt,left=2pt,right=2pt,top=1pt,bottom=-1pt}
\newcommand{\ip}[1]{\langle#1\rangle}
\newcommand{\htheta}{\hat \theta}
\newcommand{\hDelta}{\hat \Delta}
\newcommand{\poly}{\text{poly}}
\newcommand{\Dmin}{\Delta_{\min}}
\newcommand{\hnu}{\hat \nu}
\newcommand{\deq}{\triangleq}
\newcommand{\IDS}{\text{IDS}}
\newcommand{\TS}{\text{TS}}
\newcommand{\UCB}{\textsc{UCB}}
\newcommand{\LinUCB}{\textsc{LinUCB}}
\newcommand{\LinTS}{\textsc{LinTS}}
\newcommand{\solid}{\textsc{solid}}
\newcommand{\IA}{\hH}
\newcommand{\IACELL}{\cC}
\newcommand{\IAUCBCELL}{{\cC\text{-UCB}}}
\newcommand{\IAUCB}{{\hH\text{-UCB}}}
\newcommand{\IVAR}{\text{VAR}}
\newcommand{\IMI}{\text{MI}}
\newcommand{\diam}{\text{diam}}
\newcommand{\y}{y}
\newcommand{\thetaopt}{\theta^*}
\newcommand{\betass}{\beta_{s,s^{{2}}}}
\newcommand{\tsum}{\textstyle\sum}
\newcommand{\B}{\Bigg}
\newcommand{\hh}[2]{\hH^{#1}_{#2}}
\newcommand{\sdfrac}[2]{\mbox{\small$\displaystyle\frac{#1}{#2}$}}
\newtheorem*{lemma*}{Lemma}
\let\oldnl\nl
\newcommand{\nonl}{\renewcommand{\nl}{\let\nl\oldnl}}
\newcommand{\lrate}[1][s]{\min_{l \leq #1} m_l^{-1/2} \log(k)}
\newenvironment{proofof}[1]%
{%
 \par\noindent{\bfseries\upshape Proof of #1\ }%
}%
{\jmlrQED}
\begin{document}

\maketitle

\begin{abstract}
	We introduce a simple and efficient algorithm for stochastic linear bandits with finitely many actions that is asymptotically optimal and (nearly) worst-case optimal in finite time. The approach is based on the frequentist information-directed sampling (IDS) framework, with a surrogate for the information gain that is informed by the optimization problem that defines the asymptotic lower bound. Our analysis sheds light on how IDS balances the trade-off between regret and information and uncovers a surprising connection between the recently proposed primal-dual methods and the IDS algorithm. We demonstrate empirically that IDS is competitive with UCB in finite-time, and can be significantly better in the asymptotic regime.
\end{abstract}\todo[color=red!50!white,size=\scriptsize]{Disable Notes.}

\section{Introduction}

The stochastic linear bandit problem is an iterative game between a learner and an environment played over $n$ rounds.
In each round $t$, the learner chooses an action (or arm) $x_t$ from a finite set of actions $\xX \subset \RR^d$ and observes a noisy reward $y_t = \ip{x_t, \thetaopt} + \epsilon_t$ where $\thetaopt \in \RR^d$ is an unknown
parameter vector and $\epsilon_t$ is zero-mean noise.
The learner's goal is to maximize the expected cumulative reward or, equivalently, to minimize the expected regret, which is defined by 
\begin{align}
R_n(\pi, \thetaopt) = \EE\left[\max_{x \in \xX} \sum_{t=1}^n \ip{x - x_t, \thetaopt}\right]\,, \label{eq:regret}
\end{align}
where $\pi$ is the policy mapping sequences of action/reward pairs to distributions over actions in $\xX$ and the expectation is over 
the randomness in the policy and the rewards.
Unlike in the multi-armed bandit setting, the linear structure allows the learner to estimate the reward of an action without directly observing it. 
In particular, the learner might play an action that it knows to be suboptimal in order to most efficiently identify the optimal action.

The \emph{worst-case regret} $R_n(\pi) = \sup_{\theta \in \mM} R_n(\pi, \theta)$ measures the performance of a policy on an adversarially chosen parameter $\theta$ in a class of models $\mM$. On the other hand, for a fixed instance $\thetaopt$, an algorithm can perform much better than the worst-case regret $R_n(\pi)$ suggests, and achieving the optimal instance-dependent regret $R_n(\pi, \thetaopt)$ is therefore of significant interest. On a large horizon, the optimal instance-dependent regret, 
or \emph{asymptotic regret}, is characterized by a convex program, that optimizes the allocated proportion of plays to each action to minimize the regret, subject to the constraint that the policy gathers enough information to infer the best action \citep{graves1997asymptotically}.

The optimal worst-case regret rate (up to logarithmic factors) is achieved by various algorithms, including adaptations of the upper confidence bound (UCB) algorithm \citep{auer2002confidencebounds,dani2008stochastic,Abbasi2011improved} and the information-directed sampling (IDS) approach \citep{Russo2014learning,Kirschner2018}. A conservative version of Thompson sampling is suboptimal by a factor of $\sqrt{d}$ and logarithmic factors \citep{agrawal2013thompson}. On the other hand, achieving optimal asymptotic regret has proven to be more challenging. \citet{lattimore2017end} showed that algorithms based on optimism or Thompson sampling are not asymptotically optimal in the linear setting. They propose an approach based on the explore-then-commit framework that computes an estimate of the optimal allocation and updates the allocation to match the predicted target. \citet{combes2017minimal} follow a similar plan for the structured bandit setting, which includes the linear setting as a special case. This idea was subsequently extended to the contextual setting by \citet{hao2019adaptive}. Unfortunately these algorithms are not at all practical and do not enjoy reasonable minimax regret.
More recently, \citet{jung2020crushoptimisim} refined this technique in the structured setting with a finite model class to avoid forced exploration and the knowledge of the horizon. Similarly, \cite{van2020optimal} use a dual formulation of the lower bound to devise an algorithm that achieves the optimal asymptotic regret up to a constant, and avoids re-solving for the predicted optimal allocation at every round. \cite{degenne2020structure} take a different approach and translate the Lagrangian of the lower bound into a fictitious two-player game, where the saddle point corresponds to the asymptotic regret. Using tools from online convex optimization \citep{hazan2016introduction,orabona2019modern}, this leads to a family of asymptotically optimal algorithms, which incrementally update the allocation in each round based on primal-dual updates on the Lagrangian of the lower bound. Another primal-dual method is by \cite{tirinzoni2020asymptotically}, which unlike previous methods is both worst-case and asymptotically optimal and also applies to the contextual case. We explain how IDS relates to primal-dual methods in Section~\ref{ss:primal-dual}. Finally, \citet{wagenmaker2020} combine optimal experimental design with a phased elimination-style algorithm to derive finite-time guarantees that scale with the Gaussian width of the action set.

\paragraph{Contributions} Our main contribution is new \emph{conceptual insights} into information-directed sampling (IDS). We show that with an appropriate choice of the information gain, IDS performs \emph{primal-dual updates} on the Lagrangian of the lower bound. The proposed version of IDS for the linear bandit setting is (nearly) \emph{worst-case optimal} in finite time, satisfies an explicit \emph{gap-dependent logarithmic regret} bound and is \emph{asymptotically optimal}. All regret bounds are on \emph{frequentist expected regret} and our analysis is relatively simple, avoiding all but one high-probability bound. The asymptotic analysis uncovers a connection between IDS and recently proposed primal-dual methods \citep{degenne2020structure,tirinzoni2020asymptotically}. Moreover, our choice of information gain function approximates the variance based information gain proposed by \citep{Russo2014learning} in the Bayesian setting. \looseness=-1

\paragraph{Notation} The real numbers are $\RR$ and $\RR_{\geq 0}$ denotes the positive orthant.
The standard Euclidean norm is $\| \cdot \|$ and the Euclidean inner product is $\ip{\cdot, \cdot}$. The Euclidean basis in $\RR^m$ is $e_1, \dots, e_m$. The identity matrix in $\RR^{d\times d}$ is $\mathbf{1}_d$. 
The diameter of a set $\xX \subset \RR^d$ is $\diam(\xX) = \sup_{x,y \in \xX} \norm{x - y}$.
For a positive (semi-)definite, symmetric matrix $A \in \RR^{d \times d}$ and a vector $v \in \RR^d$, the associated matrix (semi-)norm is $\| v \|_A^2 = \ip{v, Av}$. $\sP(\xX)$ is the set of probability measures on a finite set $\xX$. Where convenient, we use vector notation, including inner products to denote evaluation of functions $F \in \RR^\xX$, for example $F(x) = \ip{e_x, F}$. Functions $F \in \RR^{\xX}$ are extended linearly to distributions $\mu \in \sP(\xX)$ to denote the expectation $F(\mu) = \ip{\mu, F} = \sum_{x \in \xX} f(x) \mu(x)$. In this context, we also use $e_x$ for the Dirac measure on $x \in \xX$. 
The reader may refer  to Appendix \ref{app:notation} for a summary of notation. \looseness=-1

\subsection{Setting}
Let $\xX \subset \RR^d$ be a finite set of $k$ actions. We assume that $\xX$ spans $\RR^d$ and $\diam(\xX) \leq 1$. 
Denote by $\thetaopt \in \mM$ an unknown parameter vector, where $\mM \subset \RR^d$ is a known convex polytope with $\diam(\mM) \leq 1$.
In each round $t=1,\dots, n$, the learner chooses a distribution $\mu_t$ over $\xX$. Then $x_t$ is sampled from $\mu_t$ and the learner observes
the reward $\y_t = \ip{x_t, \theta} + \epsilon_t$ where $\epsilon_t$ is sampled independently from a Gaussian with zero mean and unit variance. 
All our upper bounds hold without modification for conditionally $1$-subgaussian noise.
The objective is to minimize the expected cumulative regret $R_n = R_n(\pi, \thetaopt)$ defined in Eq.~\eqref{eq:regret}, where $\pi = (\mu_t)_{t=1}^n$ is the policy chosen by the learner. The dependency of the regret on $\thetaopt$ and $\pi$ is mostly omitted when there is no ambiguity. The expectation conditioned on previous observations is $\EE_s[\,\cdot\,] = \EE[\,\cdot\,|(x_l, y_l)_{l=1}^{s-1}]$.
In line with all previous work focusing on the asymptotic setting, we assume that the optimal action $x^* = x^*(\thetaopt)= \argmax_{x \in \xX} \ip{x, \thetaopt}$ is unique. Eliminating this assumption is left as a delicate and possibly non-trivial challenge for the future. The sub-optimality gap of an action $x \in \xX$ is $\Delta(x) = \ip{x^* - x, \thetaopt}$ and $\Dmin = \min_{x \neq x^*} \Delta(x)$ denotes the smallest non-zero gap.
For actions $x,z \in \xX$, we denote by $\hh{z}{x} = \{\nu \in \mM : \ip{x - z, \nu} \geq 0\}$ the (convex) set of parameters where the reward of $x$ is at least the reward of $z$. The set of \emph{alternative parameters} is $\cC^* = \cup_{x \neq x^*} \hh{x^*}{x}$.

\paragraph{Asymptotic Lower Bound}

For an allocation $\alpha \in \RR_{\geq 0}^\xX$ over actions we define the associated covariance matrix $V(\alpha) = \sum_{x \in \xX} \alpha(x) x x^\T$.
Let $c^*$ be the solution to the following convex program,
\begin{align}
	c^* \deq \inf_{\alpha \in \RR_{\geq 0}^\xX} \sum_{x \in \xX} \alpha(x)\ip{x^* - x, \theta^*} \qquad \text{s.t.} \qquad  \min_{\nu \in \cC^*} \tfrac{1}{2} \|\nu - \thetaopt\|_{V(\alpha)}^2 \geq 1\,. \label{eq:lower-linear}
\end{align}
The optimization minimizes the regret over (unbounded) allocations $\alpha$ that collect sufficient statistical evidence to reject all parameters $\nu \in \cC^*$ for which an action $x \neq x^*$ is optimal. Note that for a fixed $\nu \in \RR^d$, the constraints are linear in the allocation, $\|\nu - \theta\|_{V(\alpha)}^2 = \sum_{x \in \xX} \alpha(x) \ip{\nu - \theta, x}^2$. 
The next lemma is a well-known result, which relates the asymptotic regret to the solution of \eqref{eq:lower-linear}. 
A policy $\pi$ is called \emph{consistent} if for all $\theta \in \mM$ and $p > 0$ it holds that $R_n(\theta, \pi) = o(n^p)$.
Assuming consistency is required to rule out policies that are defined to always play a fixed action $x^*$, which incurs zero regret when $x^*$ is indeed optimal, but linear regret on other instances.

\begin{theorem}[\citet{graves1997asymptotically,combes2017minimal}]
	Any consistent algorithm  $\pi$ for the linear bandit setting with Gaussian noise has regret $R_n(\thetaopt, \pi)$ at least \[\liminf_{n\rightarrow \infty} \frac{R_n(\thetaopt, \pi)}{\log(n)} \geq c^*(\thetaopt) \,.\]
\end{theorem}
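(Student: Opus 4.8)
The plan is the classical change-of-measure argument of \citet{graves1997asymptotically,combes2017minimal}; I sketch the steps and indicate where the real work lies. Fix a consistent policy $\pi$, set $T_x(n)=\sum_{t=1}^n\mathbf 1\{x_t=x\}$ and $\bar T_n(x)=\EE_{\thetaopt}[T_x(n)]$, and assume $\liminf_n R_n(\thetaopt,\pi)/\log n<\infty$ (otherwise there is nothing to prove). First I would record the divergence decomposition: writing $\mathbb P_\theta$ for the law of the history $(x_1,y_1,\dots,x_n,y_n)$, the chain rule for relative entropy and the identity $\mathrm{KL}(\mathcal N(a,1)\,\|\,\mathcal N(b,1))=\tfrac12(a-b)^2$ give, by linearity of $\alpha\mapsto V(\alpha)$ and of expectation,
\[
\mathrm{KL}(\mathbb P_{\thetaopt}\,\|\,\mathbb P_\nu)=\tfrac12\sum_{x\in\xX}\bar T_n(x)\ip{\nu-\thetaopt,x}^2=\tfrac12\|\nu-\thetaopt\|_{V(\bar T_n)}^2\,,\qquad\nu\in\mM\,.
\]
Then, for a fixed $\nu\in\cC^*$ whose (assumed unique) optimal action $z=x^*(\nu)$ differs from $x^*$, take the event $A_n=\{T_{x^*}(n)\ge n/2\}$. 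Consistency at $\thetaopt$ and Markov's inequality give $\mathbb P_{\thetaopt}(A_n^c)\le 2R_n(\thetaopt,\pi)/(n\Dmin)\to0$, while consistency at $\nu$ gives $\mathbb P_\nu(A_n)\le 2R_n(\nu,\pi)/(n\ip{z-x^*,\nu})$ with $R_n(\nu,\pi)=n^{o(1)}$. The data-processing inequality and the elementary bound $d(p,q)\ge(1-o(1))\log(1/q)$ valid as $p\to1,q\to0$ then yield, for every such $\nu$,
\[
\tfrac12\|\nu-\thetaopt\|_{V(\bar T_n)}^2\ \ge\ d\bigl(\mathbb P_{\thetaopt}(A_n),\mathbb P_\nu(A_n)\bigr)\ \ge\ (1-o(1))\log n\,,
\]
where the $o(1)$ may depend on $\nu$.

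Next I would convert this into a statement about \eqref{eq:lower-linear}. Put $\alpha_n=\bar T_n/\log n\in\RR_{\ge0}^\xX$. Since $\Delta(x^*)=0$ we have $R_n(\thetaopt,\pi)/\log n=\ip{\alpha_n,\Delta}$, i.e.\ the objective of \eqref{eq:lower-linear} at $\alpha_n$; writing $h(\alpha)=\min_{\nu\in\cC^*}\tfrac12\|\nu-\thetaopt\|_{V(\alpha)}^2$ for the constraint functional, positive homogeneity of $\ip{\cdot,\Delta}$ and of $h$ (both of degree one, since $V(\lambda\alpha)=\lambda V(\alpha)$) gives $c^*=\inf\{\ip{\alpha,\Delta}/h(\alpha):\alpha\in\RR_{\ge0}^\xX,\ h(\alpha)>0\}$, whence $R_n(\thetaopt,\pi)/\log n\ge c^*\,h(\alpha_n)$ once $h(\alpha_n)>0$, which holds for all large $n$. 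Thus the theorem reduces to showing $\liminf_n h(\alpha_n)\ge1$.

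This last step is the real obstacle: the divergence bound above is only pointwise in $\nu$ with a $\nu$-dependent $o(1)$, so one cannot take $\min_{\nu\in\cC^*}$ before letting $n\to\infty$. I would instead pass to cluster points of $\alpha_n$ first. Along a subsequence realizing the liminf of the regret, $\sum_{x\ne x^*}\alpha_n(x)\le\ip{\alpha_n,\Delta}/\Dmin$ is bounded, so the restriction of $\alpha_n$ to $\xX\setminus\{x^*\}$ converges, say to $\beta$; but $\alpha_n(x^*)$ is typically unbounded (a near-optimal policy plays $x^*$ order $n$ times), so $V(\alpha_n)$ diverges along every direction $\nu-\thetaopt$ with $\ip{\nu-\thetaopt,x^*}\ne0$, and the minimizer in $h(\alpha_n)$ concentrates on the compact slice $S=\cC^*\cap\{\nu:\ip{\nu-\thetaopt,x^*}=0\}$, with $h(\alpha_n)\to\inf_{\nu\in S}\tfrac12\|\nu-\thetaopt\|_{V(\beta)}^2$ (if $S=\emptyset$ then $h(\alpha_n)\to\infty$, and in fact $c^*=0$, so nothing is needed). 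Finally, generic points in the relative interiors of the polytopes comprising $\cC^*$ — and of $S$ — have a unique optimal action different from $x^*$, so the pointwise bound of the first paragraph applies there; passing to the limit along the subsequence gives $\tfrac12\|\nu-\thetaopt\|_{V(\beta)}^2\ge1$ on a dense subset of $S$, and continuity of $\nu\mapsto\|\nu-\thetaopt\|_{V(\beta)}^2$ together with compactness of $S$ upgrades this to $\inf_{\nu\in S}\tfrac12\|\nu-\thetaopt\|_{V(\beta)}^2\ge1$. Combined with the previous paragraph this gives $\liminf_n R_n(\thetaopt,\pi)/\log n\ge c^*$. The divergence decomposition and the Markov/consistency estimates are routine; all the care goes into this limiting argument and the unbounded $x^*$-component of the allocation.
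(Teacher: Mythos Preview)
The paper does not actually prove this theorem: it is stated with attribution to \citet{graves1997asymptotically,combes2017minimal} and used as a black box, so there is no ``paper's own proof'' to compare against. Your sketch is precisely the classical change-of-measure argument from those references, and the overall structure is correct: divergence decomposition for Gaussian rewards, a well-chosen event whose probability is controlled under both $\thetaopt$ and $\nu$ via consistency and Markov, the data-processing/Bretagnolle--Huber step, and then the homogeneity reduction $c^*=\inf_{\alpha}\ip{\alpha,\Delta}/h(\alpha)$.

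Two small points to tighten. First, you only treat the case $\alpha_n(x^*)\to\infty$ explicitly; if $\alpha_n(x^*)$ stays bounded along the subsequence the argument is actually simpler (the whole $\alpha_n$ converges and you apply the pointwise bound directly on $\cC^*$ rather than on the slice $S$), but you should say so. Second, the claim ``$h(\alpha_n)\to\inf_{\nu\in S}\tfrac12\|\nu-\thetaopt\|_{V(\beta)}^2$'' is slightly stronger than what you need or can easily prove; it suffices to argue that any cluster point of the minimizers $\nu_n\in\argmin_{\nu\in\cC^*}\|\nu-\thetaopt\|_{V(\alpha_n)}$ must lie in $S$ (otherwise the $x^*$-term blows up), which gives $\liminf_n h(\alpha_n)\ge\inf_{\nu\in S}\tfrac12\|\nu-\thetaopt\|_{V(\beta)}^2$, and that is all you use. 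With these clarifications your sketch is a faithful rendering of the Graves--Lai/Combes et al.\ proof.
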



\LinesNumbered
\RestyleAlgo{ruled}
\begin{algorithm2e}[t]
	\DontPrintSemicolon
	\SetAlgoVlined
	\SetAlgoNoLine
	\SetAlgoNoEnd
	\caption{Asymptotically Optimal Information-Directed Sampling} \label{alg:ids-anytime}
	$s \gets 1$ \;
	\For{$t=1,2,3, \dots$}{
		$V_s \gets \sum_{i=1}^{s-1} x_{i} x_{i}^\T + \mathbf{1}_d$\;
		$\hat \theta_s \gets V_{s}^{-1}\sum_{i=1}^{s-1} x_{i} y_{i}$ \tcp*{least squares estimate}
		$\hat x_s \gets \argmax_{x \in \xX} \ip{x, \hat \theta_s}$\tcp*{empirically best action}
		$\beta_{s,1/\delta} \gets (\sqrt{2 \log \delta^{-1} + \log \det (V_s)} + 1)^2$ \; 
		$\hat \Delta_{s}(x) \gets  \big(\max_{z \in \xX} \ip{z, \htheta_s} + \beta_{s,s^2}^{1/2} \|z\|_{V_s^{-1}}\big) - \ip{x, \hat \theta_s}$ \tcp*{gap estimates}
		$\hat \nu_s(z) \gets \argmin_{\nu \in \hh{\hat x_s}{z}} \|\nu - \hat \theta_s\|_{V_s}^2$ \tcp*{see Eq.~\eqref{eq:nu-closed-form}}
		$m_s \gets \min_{z \neq \hat x_s} \tfrac{1}{2} \|\hat \nu_s(z) - \hat \theta_s\|_{V_s}^2$\;
		$\eta_{s} \gets \lrate$ \;
		$q_{s}(z) \gets \exp(-\eta_{s} \|\hat \nu_s(z) - \hat \theta_s\|_{V_s}^2)$\;
		$I_{s}(x) \gets \tfrac{1}{2}\sum_{z \neq \hat x_s} q_{s}(z) \big(|\ip{\hat \nu_s(z) - \hat \theta_s, x}| + \beta_{s,s^2}^{1/2}\; \|x\|_{V_s^{-1}}\big)^2$ \tcp*{information gain$^\dagger$}
		\If{$m_s \geq \tfrac{1}{2}\beta_{s, t \log(t)}$ }{ \label{line:exploit}
			Choose $\hat x_{s}$ \tcp*{exploitation (disregard data)}
		}
		\Else{			
		$\mu_s \gets \argmin_{\mu \in \sP(\xX)} \frac{\hat \Delta_{s}(\mu)^2}{I_{s}(\mu)}$ \tcp*{IDS distribution}
		Sample $x_s \sim \mu_s$, observe $\y_s = \ip{x_s, \thetaopt} + \epsilon_s$ \;
		$s \gets s+1$ \tcp*{exploration step counter}
		}
	}
	\vspace{4pt}
	\nonl \rule{200pt}{0.4pt}\\
	\vspace{2pt}
	\nonl \footnotesize{$^\dagger$ For the analysis, we normalize the $q$-weights, but this is not necessary to compute the IDS distribution.}
\end{algorithm2e}

\section{Asymptotically Optimal Information-Directed Sampling} \label{sec:ids}

The information-directed sampling (IDS) principle was introduced by \cite{Russo2014learning} in the Bayesian setting.
Our work is based on the frequentist version of this approach, developed by \cite{Kirschner2018}.
The central idea is to compute a distribution over the actions that optimizes the following trade-off between a gap estimate $\hat \Delta_s(x)$ and an information gain $I_s(x)$, defined at step $s \geq 1$ for each $x \in \xX$:
\begin{align}
	\mu_s = \argmin_{\mu \in \sP(\xX)} \left\{\Psi_s(\mu) \deq \frac{\hDelta_{s}(\mu)^2}{I_{s}(\mu)}\right\}\,. \label{eq:ids-def}
\end{align}
Intuitively, this objective requires to sample actions that have either small regret or large information gain. 
The \emph{information ratio} $\Psi_t$ is a convex function of the distribution \citep[Prop.~6]{Russo2014learning} and can be minimized efficiently as we explain below. In \emph{exploration rounds}, indexed by $s$, IDS samples the action $x_s$ from the IDS distribution $\mu_s$. Otherwise, in \emph{exploitation rounds},  $x^*$ is identified with high probability, and the algorithm plays the action it believes to be optimal, denoted by $\hat x_s$ (where $s$ is the index of the last exploration round). 
The interaction with the environment, described in Algorithm~\ref{alg:ids-anytime}, is over rounds $t=1,\dots,n$ on a \emph{horizon} $n$, which is unknown a priori.
Exploration rounds are counted separately by $s$, inducing an implicit mapping $s \mapsto t_s \leq t$. The number of exploration rounds up to time $t$ is $s_t$. 
We refer to $s$ and $t$ as \emph{local} and \emph{global time} respectively, 
and to $s_n$ as the \emph{effective horizon}.
 The convention is that an $s$-index refers to the local time quantities, whereas a $t$-index refers to global time quantities.
For example, the action chosen in exploration round $s$ at global time $t_s$ is $x_s$ and the observed reward is $y_s$. Similarly, an action $x_s$ at local time $s$ has a global time correspondence $x_t = x_{t_s}$. 

\paragraph{Gap Estimates}

All estimated quantities are defined using data collected in exploration rounds, whereas observation data from exploitation rounds is discarded. To justify this choice intuitively, note that with high probability, in exploration rounds the algorithm samples the optimal action $x^*$, thereby accumulating exponentially more data points on the optimal actions compared to suboptimal actions.  Ignoring data from exploitation rounds leads to a much more balanced data set. 

Let $\hat \theta_s \deq V_s^{-1} \sum_{i=1}^{s-1} x_i y_i$ be the regularized least squares estimator with covariance matrix $V_s \deq \sum_{i=1}^{s-1} x_i x_i^\T + \mathbf{1}_d$, computed with data $(x_1, y_1), \dots, (x_{s-1},y_{s-1})$. The \emph{empirically best action}  is $\hat x_s \deq \argmax_{x \in \xX} \ip{x, \hat \theta_{s}}$.
We assume that the learner has a \emph{concentration coefficient} $\beta_{s,1/\delta}$ that satisfies 
\begin{align} 
\PP[ \exists\, s \geq 1 \text{ with }\|\htheta_s - \thetaopt\|_{V_s}^2 \geq \beta_{s,1/\delta}] \leq \delta \,.  \label{eq:confidence-coefficient}
\end{align}
For concreteness, we use the choice derived by \cite{Abbasi2011improved}, which is
\begin{align}
\beta_{s,1/\delta}^{1/2} \deq \sqrt{2\log \delta^{-1} + \log \det(V_s)} + 1 \,.
\end{align}
The reader might worry about the log determinant term, which is known to create an asymptotically suboptimal dependence on the dimension, and can be improved with a different choice of the confidence coefficient \citep{lattimore2017end}. 
Since $\beta_{s,1/\delta} = 2 \log \frac{1}{\delta} + \oO(d\log(s))$, we circumvent this shortcoming by limiting the amount of data the algorithm collects to $s_n = \oO\big(\poly(\log(n)\big)$, which implies $\beta_{s_n, 1/\delta} = 2\log\frac{1}{\delta} + \oO(d \log \log(n))$. We also exploit this property for other steps in the analysis, but it is unclear whether or not it is essential.

For all $z \neq \hat x_s$, let $\hat \nu_s(z) = \argmin_{\nu \in \hh{\hat x_s}{z}} \|\nu - \hat \theta_s\|_{V_s}^2$ be the closest parameter to $\hat \theta_s$ in $V_s$-norm for which $z$ is better than $\hat x_s$. This is a strongly convex objective over the convex set $\hh{\hat x_s}{z}$, hence $\hat \nu_s(z)$ can be computed efficiently. In practice, we can drop the constraints on the parameter set (i.e.~set $\mM=\RR^d$), in which case $\hat \nu_s(z)$ can be computed in closed form, see \eqref{eq:nu-closed-form} below. Exploitation rounds are defined by the  \emph{exploitation condition},
\begin{align}
	m_s \deq \tfrac{1}{2} \min_{x \neq \hat x_s} \|\hat \nu_s(x) - \hat \theta_s\|_{V_s}^2 \geq \tfrac{1}{2}\beta_{s, t\log(t)}\,, \tag{E}\label{eq:exploitation}
\end{align}
which guarantees that with confidence level $\beta_{s, t\log(t)}$ there exists no plausible alternative parameter $\nu \neq \hat \theta_s$, such that an action $x \neq \hat x_s$ is optimal for $\nu$. 
At local time $s$, the \emph{gap estimate} is
\begin{align*}
\hat \Delta_{s}(x) \deq \max_{z \in \xX} \ip{z - x, \hat \theta_{s}} + \betass^{1/2}\|z\|_{V_{s}^{-1}}\,.
\end{align*}
Note that we use a different confidence level in the definition of the gap estimate, and in fact the only explicit dependence on the global time $t$ is in the exploitation condition. The gap estimate is an upper bound on the true gap, provided $\hat \theta_s$ is well concentrated, i.e.\ $\|\hat \theta_{s} - \theta^*\|_{V_{s}}^2 \leq \beta_{s,s^2}$,
\begin{align}
\Delta(x) \leq  \max_{y \in \xX} \ip{y, \hat \theta_{s}} + \betass^{1/2}\|y\|_{V_{s}^{-1}} - (\ip{x, \hat \theta_{s}} - \betass^{1/2}\|x\|_{V_{s}^{-1}}) \leq 2 \hat \Delta_{s}(x)\,. \label{eq:gap-bound}
\end{align}
The first inequality follows from the definition of the confidence scores, and the second inequality uses $\smash{\hat \Delta_s(x) \geq \beta_{s,s^2}^{1/2}\|x\|_{V_s^{-1}}}$. The gap estimate of the empirically best action $\hat x_s$ is $\delta_s \deq \hat \Delta_{s}(\hat{x}_{s})$. Importantly, the gap estimate can be written as
$\hDelta_{s}(x) = \ip{\hat x_{s} - x, \htheta_{s}} + \delta_{s}$,
and therefore we also refer to $\delta_s$ as the \emph{estimation error}. The UCB action is $\smash{x_s^\UCB \deq \argmax_{x \in \xX} \ip{x, \hat \theta_s} + \beta_{s,s^2}^{1/2} \|x\|_{V_s^{-1}}}$.

\paragraph{Information Gain}
Recall that $\hat \nu_s(z) = \argmin_{\nu \in \hh{\hat x_s}{z}} \|\nu - \hat \theta_s\|_{V_s}^2$ is the closest alternative parameter to $\hat \theta_s$ in $V_s$-norm for which $\hat x_s$ is not optimal. The  \emph{information gain} is set to
\begin{align}\label{eq:I-def}
	I_{s}(x) \deq \sdfrac{1}{2}\sum_{z \neq \hat{x}_{s}} q_s(z) \left(|\ip{\hat \nu_{s}(z) - \htheta_{s},x}| + \beta_{s,s^2}^{1/2}\|x\|_{V_s^{-1}}\right)^2\,,
\end{align}
where the mixing distribution $q_s \in \pP(\xX)$ is defined so that
\begin{align}
  q_s(z) \propto
  \begin{cases}
    0 & \text{if } z = \hat x_s \\
	\exp\left(-\tfrac{\eta_s}{2}  \|\hat \nu_{s}(z) - \hat \theta_{s}\|_{V_{s}}^2\right) & \text{otherwise}\,.
  \end{cases}
\end{align}
The learning rate is $\smash{\eta_s \deq \lrate}$, where $m_s \deq \frac{1}{2}\min_{z \neq \hat x_s} \|\hat \nu_s(z) - \hat \theta_s\|_{V_s}^2$. The weights $q_s$ can be interpreted as a soft-min approximation of the minimum constraint value where the learning rate controls the lower order term (Lemma \ref{lem:softmin}),
\begin{align}
	m_s \leq \sdfrac{1}{2}\sum_{z \neq \hat x_s} q_{s}(z) \|\hat \nu_{s}(z) - \hat \theta_{s}\|_{V_{s}}^2 \leq m_s + \frac{\log(k)}{\eta_s}\,. \label{eq:softmin}
\end{align}


\paragraph{Computational Complexity} 
There are three kinds of operations in the algorithm. First, using elementary matrix operations, we can update $V_s^{-1}$, $\det(V_s)$ and $\hat \theta_s$ incrementally, and note that the $s$-index terms only need to be updated 
after exploration rounds. It can be checked that $\oO(k d^2 s_n)$ operations are needed over all $n$ rounds to compute this part. Second, the IDS distribution \eqref{eq:ids-def} is defined as a minimizer of the convex objective $\Psi_s(\mu)$ and always admits a solution supported on two actions \citep{Russo2014learning}, 
see Lemma \ref{lem:ids-distr}. Hence, we can obtain the IDS distribution by computing the optimal trade-off between all $\oO(k^2)$ pairs of actions (Lemma \ref{lem:two-action-ids}). A closer inspection of the regret bounds reveals that it always suffices to optimize the trade-off between the greedy action $\hat x_s$ and some other (informative) action, which reduces the computational complexity to $\oO(k)$. Third, the optimization problem that defines the alternative parameters $\hat \nu_s(z)$ is a quadratic program with $d$ variables and linear constraints $\ip{\hat \nu_s(z), z - \hat x_s} \geq 0$ and $\hat \nu_s(z) \in \mM$. Such optimization problems can be solved efficiently in practice and in $O(l d^3)$ time in the worst case for model sets $\mM$ with $l$ constraints.
Note, the analysis suggests that we can tolerate an additive numerical error on the information gain of order $\oO(s^{-2})$. In practice, we can drop the constraints on $\mM$, in which case\looseness=-1
\begin{align}
	\hat \nu_s(z) = \hat \theta_s - \tfrac{\ip{\hat \theta_s, \hat x_s - z}}{\|\hat x_s - z\|_{V_s^{-1}}^2} V_s^{-1}(\hat x_s - z)\,.\label{eq:nu-closed-form}
\end{align}
With these improvements, the overall computation complexity is $\oO(n + k d^2 s_n)$ over $n$ rounds, where the linear term comes from checking whether to explore or exploit. This can be improved, by simply computing after each exploration round when the next exploration round will occur.

\subsection{Regret Bounds}\label{ss:regret-outline}

The regret bounds for Algorithm \ref{alg:ids-anytime} come in three flavours. In Theorem \ref{thm:regret-worstcase}, we show a (nearly) optimal worst-case regret bound of $R_n \leq \oO(d \sqrt{n} \log(n))$. Second, using a gap-dependent bound on the information ratio, in Theorem \ref{thm:regret-log} we show a gap-dependent regret bound of $R_n \leq \oO\big(d^3 \Delta_{\min}^{-1} \log(n)^2\big)$. Besides universal constants, the $\oO$-notation in the bound only depends on the norm of action features and the parameter. Last, in Theorem \ref{thm:regret-asymptotic} we show that the proposed algorithm is asymptotically optimal, that is $R_n \leq c^* \log(n) + o(\log(n))$. In contrast to the previous bound, here the lower order terms depend exponentially on problem-dependent quantities such as $\Delta_{\min}^{-1}$.

\begin{theorem}[Worst-case regret]\label{thm:regret-worstcase} The regret of Algorithm \ref{alg:ids-anytime} is bounded by
	\begin{align*}
	R_n \leq \oO\big(d \sqrt{n} \log(n)\big)\,.
	\end{align*}
\end{theorem}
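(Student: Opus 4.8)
The plan is to split the regret according to whether each round is an exploitation or an exploration round and to control the two contributions separately.

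\textbf{Exploitation rounds.} Fix a global round $t$ in which the algorithm exploits and let $s=s_t$ be the current exploration counter. If $\hat x_s\neq x^*$, then by uniqueness of $x^*$ the parameter $\thetaopt$ lies in the interior of $\hh{\hat x_s}{x^*}$, so $\|\thetaopt-\hat\theta_s\|_{V_s}^2 > \|\hat\nu_s(x^*)-\hat\theta_s\|_{V_s}^2 \geq 2m_s \geq \beta_{s,t\log(t)}$, the last inequality being the exploitation condition \eqref{eq:exploitation}. Hence the exploited action is optimal and the round contributes no regret, unless the concentration event fails, which by \eqref{eq:confidence-coefficient} with $\delta=1/(t\log t)$ has probability at most $1/(t\log t)$. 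Since the per-round regret is bounded, summing over $t$ gives expected exploitation regret $\oO(\log\log n)$.

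\textbf{Exploration rounds: reduction to the information ratio.} On the event $\|\hat\theta_s-\thetaopt\|_{V_s}^2\leq\beta_{s,s^2}$, \eqref{eq:gap-bound} gives $\Delta(x_s)\leq 2\hat\Delta_s(x_s)$; by \eqref{eq:confidence-coefficient} with $\delta=1/s^2$ the complementary event has probability at most $1/s^2$ and contributes $\oO(1)$ in total. Conditioning on the history and using $x_s\sim\mu_s$, the expected exploration regret is at most $2\,\EE\big[\sum_{s=1}^{s_n}\hat\Delta_s(\mu_s)\big]+\oO(1)$. Since $\mu_s$ minimizes the information ratio we have $\hat\Delta_s(\mu_s)^2=\Psi_s(\mu_s)I_s(\mu_s)$, so Cauchy--Schwarz yields
\[
\sum_{s=1}^{s_n}\hat\Delta_s(\mu_s)\;\leq\;\sqrt{s_n\,\Big(\max_{l\leq s_n}\Psi_l(\mu_l)\Big)\sum_{s=1}^{s_n}I_s(\mu_s)}\;\leq\;\sqrt{2n\sum_{s=1}^{s_n}I_s(\mu_s)}\,,
\]
where the second step uses the key bound $\Psi_s(\mu_s)\leq 2$. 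This follows by comparing with the single-action distribution at the UCB action: from the definitions $\hat\Delta_s(x_s^\UCB)=\beta_{s,s^2}^{1/2}\|x_s^\UCB\|_{V_s^{-1}}$, while every squared term in $I_s(x_s^\UCB)$ (see \eqref{eq:I-def}) is at least $\beta_{s,s^2}\|x_s^\UCB\|_{V_s^{-1}}^2$ and the $q_s$-weights sum to one, so $I_s(x_s^\UCB)\geq\tfrac12\beta_{s,s^2}\|x_s^\UCB\|_{V_s^{-1}}^2$ and hence $\Psi_s(\mu_s)\leq\Psi_s(e_{x_s^\UCB})\leq 2$ (the degenerate case $\|x_s^\UCB\|_{V_s^{-1}}=0$ is handled separately and incurs no regret on the concentration event).

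\textbf{Bounding the cumulative information gain.} It remains to show $\EE\big[\sum_{s=1}^{s_n}I_s(\mu_s)\big]=\oO(d^2\log^2 n)$. Expanding the square with $(a+b)^2\leq 2a^2+2b^2$ and applying Cauchy--Schwarz in the $V_s$-norm gives the pointwise estimate $I_s(x)\leq\big(\beta_{s,s^2}+\sum_{z\neq\hat x_s}q_s(z)\|\hat\nu_s(z)-\hat\theta_s\|_{V_s}^2\big)\|x\|_{V_s^{-1}}^2$; by the soft-min inequality \eqref{eq:softmin} the bracket is at most $\beta_{s,s^2}+2m_s+2\log(k)/\eta_s$. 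In an exploration round the exploitation condition fails, so $m_s<\tfrac12\beta_{s,t\log(t)}$, and using $t\leq n$, the elementary bound $\log\det V_s=\oO(d\log s)$, and $\log(k)/\eta_s=\sqrt{\max_{l\leq s}m_l}$ (from $\eta_s=\lrate$), all three terms are $\oO(d\log n)$, so $I_s(x)\leq\oO(d\log n)\|x\|_{V_s^{-1}}^2$ for every exploration round $s\leq s_n$. Taking conditional expectations, $I_s(\mu_s)\leq\oO(d\log n)\sum_x\mu_s(x)\|x\|_{V_s^{-1}}^2=\oO(d\log n)\,\EE_s\|x_s\|_{V_s^{-1}}^2$, and since $V_{s+1}=V_s+x_sx_s^\T$ over exploration rounds, the elliptic potential lemma gives $\EE\big[\sum_{s\leq s_n}\|x_s\|_{V_s^{-1}}^2\big]=\oO(d\log s_n)=\oO(d\log n)$. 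Hence $\EE\big[\sum_{s\leq s_n}I_s(\mu_s)\big]=\oO(d^2\log^2 n)$, and plugging this into the display (using Jensen to pull the expectation inside the square root) yields expected exploration regret $\oO(d\sqrt n\log n)$; adding the $\oO(\log\log n)$ exploitation regret proves the claim.

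\textbf{Main obstacle.} The information-ratio bound is short once the comparison with the UCB action is spotted, and the exploration/exploitation split is routine; the bulk of the work is the cumulative information-gain bound, where the $\hat\nu_s$-dependent part of $I_s$ must be tamed using the soft-min inequality, the failed exploitation condition, and the precise learning rate, before the potential argument applies. A secondary point is making the time-varying confidence levels $\beta_{s,s^2}$ and $\beta_{s,t\log(t)}$ interact cleanly with \eqref{eq:confidence-coefficient}, which is handled by invoking that bound once for each fixed value of its second argument.
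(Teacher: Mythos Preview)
Your proof is correct and follows essentially the same approach as the paper: the exploitation/exploration split (the paper packages this as Lemma~\ref{lem:exploration-regret}), the bound $\Psi_s(\mu_s)\le 2$ via comparison with the UCB action, and the cumulative information-gain bound via soft-min plus the elliptic potential (the paper's Lemma~\ref{lem:info-bound-agnostic}) are identical in substance. The only cosmetic difference is in how Cauchy--Schwarz is applied: the paper splits the expectation as $\sqrt{\EE[\sum_s\Psi_s B_s]\,\EE[\sum_s I_s(x_s)B_s]}$ using two tower-rule steps, whereas you use $\sum_s\hat\Delta_s(\mu_s)\le\sqrt{s_n\max_l\Psi_l\sum_s I_s(\mu_s)}$ followed by Jensen, but both routes land at the same $\oO(d\sqrt{n}\log n)$.
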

The result matches the best known bound for LinUCB and is optimal up to the logarithmic factor when $k$ is (exponentially) large.
On the other hand, when $k$ is small, our bound is worse
than basic elimination algorithms that achieve $R_n \leq \oO(\sqrt{\log(k) d n} )$ \citep[\S23]{Lattimore2019}.

\begin{proof}
	Define $\beta_s = \|\hat \theta_s - \thetaopt\|_{V_s}^2$ and let $B_s = \chf{\beta_s \leq \beta_{s, s^2}}$. By Lemma \ref{lem:exploration-regret} and \eqref{eq:gap-bound}, we have
	\begin{align*}
		\EE[R_n] \leq 2 \EE\left[\sum_{s=1}^{s_n} \hat \Delta_s(x_s)B_s\right] + \oO(\log \log(n))\,,
	\end{align*}
	where the $\oO$-notation only hides a bound on the largest gap, $\hat \Delta(x_s) \leq 1$. Similar to the standard IDS analysis \citep{Russo2014learning}, we bound the expected regret,
	\begin{align*}
	\EE\left[\sum_{s=1}^{s_n} \hat \Delta_s(x_s)B_s\right] 
  &= \EE\left[\sum_{s=1}^{s_n} \sqrt{\Psi_s(\mu_s) I_s(\mu_s) B_s}\right]
  \leq  \sqrt{\EE\left[\sum_{s=1}^{s_n} \Psi_s(\mu_s)B_s\right] \EE\left[\sum_{s=1}^{s_n} I_s(x_s)B_s\right]}\,,
	\end{align*}
  where the equality follows from the tower rule $\EE[\hat \Delta_s(x_s) B_s] = \EE[\EE_s[\hat \Delta_s(x_s) B_s]] = \EE[\hat \Delta_s(\mu_s) B_s]$ and the definition of the information ratio.
  The second inequality follows from Cauchy-Schwarz and another application of the tower rule.
  To complete the proof, we show that $\Psi_s(\mu_s) \leq 2$ and bound the total information gain, $\gamma_n =\sum_{s=1}^{s_n} I_s(x_s) \leq \oO(d^2 \log(n)^2)$.
  Since $\mu_s$ is chosen by IDS to minimize $\Psi_s$,
	\begin{align}
		\Psi_s(\mu_s) = \min_{\mu \in \sP(\xX)} \Psi_s(\mu) \leq \frac{\hat \Delta_s(x_s^\UCB)^2}{I_s(x_s^\UCB)} \leq 2\,. \label{eq:ratio-worst-case}
	\end{align}
	The last inequality follows from the fact that $\hat \Delta_s(x_s^\UCB) = \beta_{s,s^2}^{1/2} \|x_s^\UCB\|_{V_s^{-1}}$ and bounding 
  \begin{align*}
	I_s(x_s^\UCB) = \sdfrac{1}{2}\sum_{z \neq \hat x_s} q_s(z) \big(|\ip{\hat \nu_s(z) - \hat \theta_s, x_s^\UCB}| + \beta_{s,s^2}^{1/2} \|x_s^\UCB\|_{{V_s}^{-1}}\big)^2 \geq \tfrac{1}{2}\beta_{s,s^2}\|x_s^\UCB\|_{V_s^{-1}}^2\,,
  \end{align*}
  where we used the definition of $q_s$ as a distribution supported on $\xX \setminus \{\hat x_s\}$.
Finally, Lemma \ref{lem:info-bound-agnostic} provides a worst-case bound on the total information gain, $\gamma_n \leq \oO\big(d^2\log(n)^2\big)$,
which is a direct consequence of the elliptic potential bound (Lemma \ref{lem:elliptic-potential}) and the soft-min inequality \eqref{eq:softmin}.
	We conclude $R_n \leq \oO\big(d \sqrt{n}\log(n)\big)$. 
\end{proof}

Our next result is an instance-dependent logarithmic regret bound. The proof follows along the same lines as the worst-case regret bound, but replaces the worst-case bound on the information ratio with an instance-dependent bound. Interestingly, our bound is attained by a distribution with a close resemblance with Thompson sampling. 
\begin{theorem}[Gap-dependent regret]\label{thm:regret-log} 
	The regret of Algorithm \ref{alg:ids-anytime} is bounded by
	\begin{align*}
		R_n \leq \oO\left(\Delta_{\min}^{-1} d^3\log(n)^2 \right) \,.
	\end{align*}
\end{theorem}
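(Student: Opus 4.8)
The plan is to follow the same skeleton as the proof of Theorem~\ref{thm:regret-worstcase}, but replace the crude worst-case bound $\Psi_s(\mu_s) \le 2$ with a gap-dependent bound on the information ratio. Recall that from Lemma~\ref{lem:exploration-regret} and \eqref{eq:gap-bound}, on the good event $B_s = \chf{\beta_s \le \beta_{s,s^2}}$ we have $\EE[R_n] \le 2\,\EE\big[\sum_{s=1}^{s_n} \hat\Delta_s(x_s) B_s\big] + \oO(\log\log(n))$, and the Cauchy--Schwarz step gives $\EE\big[\sum_s \hat\Delta_s(x_s)B_s\big] \le \sqrt{\EE[\sum_s \Psi_s(\mu_s)B_s]\cdot \EE[\sum_s I_s(x_s) B_s]}$. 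The information-gain sum is controlled by the elliptic potential bound as $\gamma_n \le \oO(d^2 \log(n)^2)$, exactly as before. So everything reduces to showing that on $B_s$, the information ratio is small \emph{in a gap-dependent way}, specifically $\Psi_s(\mu_s) \le \oO(d^2/\Dmin)$ (or with $\hat\Delta_s(\hat x_s)$ playing the role of a surrogate gap), which combined with $\gamma_n$ and Cauchy--Schwarz yields $R_n \le \oO(\sqrt{(d^2/\Dmin)\cdot d^2\log(n)^2 \cdot s_n})$; using $s_n = \oO(\poly(\log n))$ — in fact $s_n = \oO(d \log(n))$ up to lower-order terms — collapses this to $\oO(d^3 \Dmin^{-1}\log(n)^2)$.

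The heart of the matter is therefore the gap-dependent information-ratio bound. Since $\mu_s$ minimizes $\Psi_s$ over $\sP(\xX)$, it suffices to exhibit \emph{some} distribution $\mu$ with small ratio. As the theorem statement hints, the natural candidate is a Thompson-sampling-like distribution: put mass on $\hat x_s$ and, for each $z \ne \hat x_s$, mass proportional to $q_s(z)$ on the action that is "most informative" for discriminating $\hat\nu_s(z)$ from $\hat\theta_s$ — or more simply, the mixture $\mu = \tfrac12 e_{\hat x_s} + \tfrac12 \sum_{z\ne\hat x_s} q_s(z) e_z$. For the numerator, $\hat\Delta_s(\mu) = \sum_x \mu(x)\hat\Delta_s(x) = \tfrac12\delta_s + \tfrac12\sum_z q_s(z)\hat\Delta_s(z)$; on $B_s$ each $\hat\Delta_s(z) \le \ip{\hat x_s - z,\hat\theta_s} + \delta_s$, and crucially when the exploitation condition fails, $\delta_s$ and the gaps are not yet resolved but are bounded by $\oO(1)$, while the true gap lower bound $\Dmin$ enters through $\|\hat\nu_s(z)-\hat\theta_s\|_{V_s}$ and the fact that $\ip{\hat x_s - z,\hat\theta_s}$ relates to how close $\hat\nu_s(z)$ is. For the denominator, $I_s(\mu) \ge \tfrac14\sum_z q_s(z)^2 \ip{\hat\nu_s(z)-\hat\theta_s, z}^2$ roughly, and one shows $|\ip{\hat\nu_s(z)-\hat\theta_s,z}|$ is comparable to $\hat\Delta_s(z)$ up to factors controlled by $\|z-\hat x_s\|_{V_s^{-1}}$ and hence, via the elliptic structure, by $d$. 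Putting the pieces together, the ratio $\hat\Delta_s(\mu)^2/I_s(\mu)$ should come out as $\oO(d^2/\Dmin)$ after carefully tracking how the $q_s$-weights cancel between numerator and denominator and invoking $\hat\Delta_s(z) \ge \tfrac12\Delta(z) \ge \tfrac12\Dmin$ from \eqref{eq:gap-bound}.

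The main obstacle I anticipate is the bookkeeping around the $q_s$-weighted sums: the numerator is \emph{linear} in $q_s$ while the denominator involves $q_s$ both inside the squared information terms and (through $I_s$) as a convex combination, so a naive Cauchy--Schwarz can lose a factor of $k$ unless one is careful to exploit that the dominant $z$ in the denominator sum is exactly the one achieving $m_s$, and that the soft-min inequality \eqref{eq:softmin} pins $\sum_z q_s(z)\|\hat\nu_s(z)-\hat\theta_s\|_{V_s}^2$ to within $\log(k)/\eta_s$ of $2 m_s$. A secondary subtlety is relating the \emph{estimated} gap $\hat\Delta_s(z)$ and estimation error $\delta_s$ to the \emph{true} gap $\Dmin$: one direction is \eqref{eq:gap-bound}, but bounding $\delta_s$ itself (the gap estimate of $\hat x_s$) requires the concentration event plus the elliptic potential argument to show $\beta_{s,s^2}^{1/2}\|x\|_{V_s^{-1}}$ is small for the relevant actions after enough exploration rounds — this is where the extra $d\log(n)$ factor (versus the worst-case bound) and the square in $\log(n)^2$ enter, since $s_n$ itself grows like $d\log(n)$ and the per-step potential contributes another $\log(n)$.

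Finally, I would assemble the bound: combining $\Psi_s(\mu_s)B_s \le \oO(d^2/\Dmin)$ with $\sum_s I_s(x_s)B_s \le \gamma_n \le \oO(d^2\log(n)^2)$ and $s_n \le \oO(d\log(n))$ (up to $\log\log$ terms), Cauchy--Schwarz gives
\begin{align*}
\EE\Big[\sum_{s=1}^{s_n}\hat\Delta_s(x_s)B_s\Big] \le \sqrt{\,\oO\!\Big(\tfrac{d^2}{\Dmin}\Big)\, s_n \cdot \oO\big(d^2\log(n)^2\big)} \;\le\; \oO\!\big(\Dmin^{-1/2} d^{5/2}\log(n)^{3/2}(d\log n)^{1/2}\big),
\end{align*}
which is not quite the claimed $\oO(\Dmin^{-1} d^3 \log(n)^2)$ — so in fact the correct route is the standard logarithmic-regret trick: rather than Cauchy--Schwarz across all rounds, bound $\hat\Delta_s(x_s)^2 \le \Psi_s(\mu_s) I_s(x_s) \le \oO(d^2\Dmin^{-1}) I_s(x_s)$ and then, since on $B_s$ we also have $\hat\Delta_s(x_s) \ge \tfrac12\Dmin$ whenever a suboptimal arm is pulled, divide through to get $\hat\Delta_s(x_s) \le \oO(d^2\Dmin^{-1}) I_s(x_s)/\hat\Delta_s(x_s) \le \oO(d^2\Dmin^{-2}) I_s(x_s)$, wait — more cleanly, $\sum_s \hat\Delta_s(x_s) B_s \le \oO(d^2\Dmin^{-1}) \sum_s I_s(x_s)B_s/\hat\Delta_s(x_s)$ is not summable directly; instead use $\hat\Delta_s(x_s) = \Psi_s I_s/\hat\Delta_s \le \oO(d^2\Dmin^{-1})I_s(x_s)/\hat\Delta_s(x_s)$ and $\hat\Delta_s(x_s)\ge \tfrac12\Dmin$ to get $\sum_s \hat\Delta_s(x_s)B_s \le \oO(d^2\Dmin^{-2})\gamma_n$, giving $R_n \le \oO(d^4\Dmin^{-2}\log(n)^2)$ — still off. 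The resolution, which I would work out in detail, is that the right gap-dependent information-ratio bound is $\Psi_s(\mu_s)B_s \le \oO(d^2\Dmin^{-1}/\hat\Delta_s(x_s)) $ is wrong-headed; rather $\Psi_s$ should be bounded by $\oO(d^2/\hat\Delta_s(x_s))$-type quantity so that $\hat\Delta_s(x_s)^2 \le \oO(d^2) I_s(x_s)/\hat\Delta_s(x_s)$ hence $\hat\Delta_s(x_s)^3 \le \oO(d^2)I_s(x_s)$ and with $\hat\Delta_s(x_s)\ge\tfrac12\Dmin$, $\hat\Delta_s(x_s)\le \oO(d^2\Dmin^{-2})I_s(x_s)$, so $R_n \le \oO(d^2\Dmin^{-2})\gamma_n = \oO(d^4\Dmin^{-2}\log(n)^2)$. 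Since the target is $\oO(d^3\Dmin^{-1}\log(n)^2)$, the actual per-step ratio bound must be the sharper $\Psi_s(\mu_s) \le \oO(d\cdot\hat\Delta_s(x_s)/\,\cdot\,)$; I will track the exact powers through the two-action IDS computation (Lemma~\ref{lem:two-action-ids}) applied to $\hat x_s$ versus the maximally informative alternative, where the $1/\hat\Delta_s$ and $d$ dependencies are sharp, to land on exactly $\oO(d^3\Dmin^{-1}\log(n)^2)$.
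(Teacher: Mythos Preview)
Your skeleton is right and your choice of trial distribution $\tilde\mu = \tfrac12 e_{\hat x_s} + \tfrac12 q_s$ is exactly what the paper uses in Lemma~\ref{lem:ratio-bound-instance}. But you are missing the one idea that makes the powers work, and your closing paragraph circles it without landing.

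The gap is the \emph{form} of the information-ratio bound. It is not a constant like $\Psi_s(\mu_s)\le \oO(d^2/\Dmin)$ or $\oO(d/\Dmin)$; it is
\[
\Psi_s(\mu_s)\;\le\;\frac{4\delta_s(8d+9)}{\Dmin}\;=\;\oO\!\Big(\frac{d\,\delta_s}{\Dmin}\Big),
\]
i.e.\ the ratio is \emph{proportional to the estimation error} $\delta_s=\hat\Delta_s(\hat x_s)$. This is what the two-action calculation (Lemma~\ref{lem:two-action-ids}) actually gives when you mix $\hat x_s$ against $\tilde\mu$: the optimal trade-off yields $\Psi_s\le 4\delta_s\bar\Delta_s(\tilde\mu)/I_s(\tilde\mu)$, and the remaining ratio $\bar\Delta_s(\tilde\mu)/I_s(\tilde\mu)$ is bounded by $\oO(d/\Dmin)$ via Cauchy--Schwarz on the $q_s$-sum together with $\sum_z q_s(z)\|\hat x_s - z\|_{V(\tilde\mu)^{-1}}^2\le \oO(d)$. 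Your attempts to bound the ratio by a constant, then divide through by $\hat\Delta_s(x_s)\ge\tfrac12\Dmin$, inevitably produce an extra $\Dmin^{-1}$ or $d$.

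Once you have $\Psi_s\le \oO(d\delta_s/\Dmin)$, the rest is a self-bounding rearrangement, not a count of $s_n$: Cauchy--Schwarz gives
\[
\EE\Big[\sum_s \hat\Delta_s(x_s)B_s\Big]^2 \le \EE\Big[\sum_s \Psi_s B_s\Big]\EE[\gamma_n] \le \oO\!\Big(\frac{d}{\Dmin}\Big)\,\EE\Big[\sum_s \delta_s B_s\Big]\,\EE[\gamma_n],
\]
and since $\delta_s\le\hat\Delta_s(x_s)$ the sum on the right is bounded by the sum on the left, so one factor cancels and you get $\EE[\sum_s\hat\Delta_s(x_s)B_s]\le \oO(d/\Dmin)\,\EE[\gamma_n]=\oO(d^3\Dmin^{-1}\log(n)^2)$. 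The effective horizon $s_n$ never enters this bound directly; your attempts to insert $s_n=\oO(d\log n)$ into a square root are on the wrong track.
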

Besides universal constants, the $\oO$-notation in the theorem statement hides only the constants required for boundedness of $\xX$ and $\mM$. The proof makes use of the following lemma, which shows an instance-dependent bound on the information ratio. Recall that $\delta_s = \hat \Delta_s(\hat x_s)$ is the gap estimate of the empirically best action, and $\hat \Delta_s(x) = \delta_s + \ip{\hat x_s - x, \hat \theta_s}$.
\begin{lemma}\label{lem:ratio-bound-instance}
	At any local time $s$ with $\beta_{s,s^2} \geq \beta_s \deq \|\hat \theta_s - \thetaopt\|_{V_s}^2$, the optimal information ratio is bounded as follows,
	\begin{align*}
		\min_{\mu \in \sP(\xX)}	\Psi(\mu) \leq \frac{4 \delta_s (8d + 9) }{\Delta_{\min}} \,.
	\end{align*}
\end{lemma}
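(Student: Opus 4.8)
The plan is to exhibit a single Thompson-sampling-like distribution realising a ratio of the claimed order, after splitting on the size of the estimation error $\delta_s = \hDelta_s(\hat x_s)$ relative to $\Dmin$. If $\delta_s \ge \Dmin/8$ the right-hand side of the statement is already at least $2$, so in that case it suffices to invoke the worst-case bound on the information ratio from the proof of Theorem~\ref{thm:regret-worstcase}: by~\eqref{eq:ratio-worst-case}, $\min_{\mu\in\sP(\xX)}\Psi_s(\mu)\le\hDelta_s(x_s^\UCB)^2/I_s(x_s^\UCB)\le 2$, using only $\hDelta_s(x_s^\UCB)=\betass^{1/2}\|x_s^\UCB\|_{V_s^{-1}}$ and $I_s(x_s^\UCB)\ge\tfrac12\betass\|x_s^\UCB\|_{V_s^{-1}}^2$.

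Assume now $\delta_s < \Dmin/8$. I would take the mixture $\mu=(1-\lambda)e_{\hat x_s}+\lambda q_s$ — play the greedy action with probability $1-\lambda$, and otherwise ``sample an alternative from the posterior-like weights'' $q_s$ — with $\lambda\in(0,\tfrac12)$ chosen below; this is the distribution ``resembling Thompson sampling''. Abbreviate $u_z:=\hnu_s(z)-\htheta_s$ (with $u_{\hat x_s}:=0$), $g_s(z):=\hDelta_s(z)-\delta_s=\ip{\hat x_s-z,\htheta_s}\ge 0$, and $R:=\sum_{z\ne\hat x_s}q_s(z)g_s(z)\le 1$. Since $\hDelta_s$ is affine in $\mu$, the numerator is exactly $\hDelta_s(\mu)=\delta_s+\lambda R$. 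On the good event $\betass\ge\beta_s$, the bound~\eqref{eq:gap-bound} gives $\hDelta_s(z)\ge\Delta(z)/2\ge\Dmin/2$ for every $z\ne\hat x_s$, hence $g_s(z)\ge\Dmin/2-\delta_s>\Dmin/4$ and $R>\Dmin/4>\delta_s$; choosing $\lambda:=\delta_s/R$, which lies in $(0,\tfrac12)$ because $\delta_s<\Dmin/8$, makes $\hDelta_s(\mu)=2\delta_s$.

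For the denominator I would run the rank argument behind the Bayesian information-ratio bound of \citet{Russo2014learning}. Dropping the nonnegative $\betass^{1/2}\|x\|_{V_s^{-1}}$ contributions, $I_s(x)\ge\tfrac12\sum_z q_s(z)\ip{u_z,x}^2$, and hence both $I_s(\mu)\ge\tfrac12\sum_z q_s(z)\|u_z\|_{V(\mu)}^2$ with $V(\mu)=\sum_x\mu(x)xx^\T$, and $\|\hat x_s\|^2_{H_s}\le 2I_s(\hat x_s)\le 2I_s(\mu)/(1-\lambda)$ where $H_s:=\sum_z q_s(z)u_zu_z^\T$. Now form the matrix $M\in\RR^{\xX\times\xX}$ with $M_{z,x}:=\sqrt{q_s(z)\mu(x)}\,\ip{u_z,x-\hat x_s}$. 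It factors as $M=AB^\T$ with $A,B\in\RR^{\xX\times d}$, so $\mathrm{rank}(M)\le d$; the identity $\ip{u_z,\hat x_s-z}=-g_s(z)$ — which holds because $\hnu_s(z)$ is the $V_s$-projection of $\htheta_s$ onto $\hh{\hat x_s}{z}$, so by~\eqref{eq:nu-closed-form} the residual $u_z$ is parallel to $V_s^{-1}(\hat x_s-z)$ — makes the diagonal entry at $x\ne\hat x_s$ equal to $\sqrt\lambda\,q_s(z)g_s(z)$, whence $\mathrm{tr}(M)=\sqrt\lambda R$; and expanding $\ip{u_z,x-\hat x_s}^2\le 2\ip{u_z,x}^2+2\ip{u_z,\hat x_s}^2$ yields $\|M\|_F^2\le 4I_s(\mu)+4I_s(\mu)/(1-\lambda)\le 12\,I_s(\mu)$. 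Plugging into $|\mathrm{tr}(M)|\le\sqrt{\mathrm{rank}(M)}\,\|M\|_F$ gives $\lambda R^2\le 12\,d\,I_s(\mu)$, i.e.\ $I_s(\mu)\ge \delta_s R/(12d)\ge\delta_s\Dmin/(48d)$. Combining with the numerator, $\Psi_s(\mu)\le(2\delta_s)^2/(\delta_s\Dmin/(48d))=192\,d\,\delta_s/\Dmin$, which is of the claimed form; reaching the precise constant $4(8d+9)$ should follow by retaining the $\betass^{1/2}\|x\|_{V_s^{-1}}$ terms in $I_s$ and tightening the Frobenius estimate.

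The crux is the denominator. An elementary two-action analysis ($\mu$ supported on $\hat x_s$ and one alternative $z$) only gives an information lower bound degrading like $1/q_s(z)$, hence a bound scaling with $k$ rather than $d$; extracting the $1/d$ genuinely needs the rank-$d$ trace-versus-Frobenius step, together with the bookkeeping that lets one pass to the shifted bilinear form $\ip{u_z,x-\hat x_s}$ at the cost of only a constant (via the bound on $\|\hat x_s\|^2_{H_s}$). The secondary point to get right is the uniform lower bound $R>\Dmin/4$, which is exactly where the hypothesis $\betass\ge\beta_s$ (through~\eqref{eq:gap-bound}) and the case split at $\delta_s=\Dmin/8$ enter.
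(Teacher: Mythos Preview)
Your high-level plan matches the paper's exactly: split on the size of $\delta_s$ relative to $\Dmin$, cover the large-$\delta_s$ case by the worst-case ratio bound~\eqref{eq:ratio-worst-case}, and in the main case evaluate the ratio at a Thompson-like mixture of $e_{\hat x_s}$ with $q_s$. The paper fixes the mixing weight at $\tfrac12$ and then re-mixes with $\hat x_s$ via Lemma~\ref{lem:two-action-ids}, whereas you choose $\lambda=\delta_s/R$ directly; these are equivalent moves.

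The genuine difference is in how you lower-bound $I_s(\mu)$. You run the Russo--Van~Roy rank argument (trace versus Frobenius norm of a rank-$d$ matrix). The paper instead uses the halfspace-projection duality $\bar\Delta_s(x)=\min_{\nu:\ip{\nu,x-\hat x_s}\ge 0}\|\nu-\htheta_s\|_{V(\tilde\mu)}\|\hat x_s-x\|_{V(\tilde\mu)^{-1}}$, then Cauchy--Schwarz in $q_s$, and finally the Kiefer--Wolfowitz-type identity $\sum_x q_s(x)\|x\|_{V(q_s)^{-1}}^2=d$. Both routes extract the factor $d$; yours is arguably more robust (it does not require $V(q_s)$ to be invertible), while the paper's is sharper and lands on the stated constant $4(8d+9)$. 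Your bound of $192\,d\,\delta_s/\Dmin$ is correct, but your suggestion that retaining the $\betass^{1/2}\|x\|_{V_s^{-1}}$ terms would recover the exact constant is too optimistic: the tighter constant in the paper comes from the different Cauchy--Schwarz, not from those lower-order terms.

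One small repair: you invoke~\eqref{eq:nu-closed-form} to get the \emph{equality} $\ip{u_z,\hat x_s-z}=-g_s(z)$, but that closed form holds only for $\mM=\RR^d$. For general $\mM$ you only have $\ip{\hnu_s(z),z-\hat x_s}\ge 0$, which still gives $\ip{u_z,z-\hat x_s}\ge g_s(z)$ (the halfspace constraint is in fact active whenever $\htheta_s\in\mM$, but $\htheta_s$ need not lie in $\mM$). Fortunately the inequality is all you need: it yields $\mathrm{tr}(M)\ge\sqrt{\lambda}\,R$, and the trace--Frobenius step goes through unchanged. You should also state explicitly that $\delta_s<\Dmin/8$ together with~\eqref{eq:gap-bound} forces $\hat x_s=x^*$, which is what makes $\Delta(z)\ge\Dmin$ hold for every $z\ne\hat x_s$.
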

\begin{proof}
	Let $a \geq 2$ be a constant to be chosen later. If $2 a \delta_s \geq \Delta_{\min}$, then $\min_{\mu \in \sP(\xX)} \Psi_s(\mu) \leq \frac{4 a \delta_s}{\Delta_{\min}}$ by \eqref{eq:ratio-worst-case}. Hence we may assume $2 a \delta_s \leq \Delta_{\min}$ in the following. By (\ref{eq:gap-bound}), for all $s$ 
  with $\beta_s \leq \beta_{s,s^2}$ and $x \neq x^*$, it holds that $\Delta_{\min} \leq 2 \hat \Delta_s(x)$, so in particular $\hat x_s = x^*$. 
	Define $\tilde \mu = \frac{1}{2} e_{\hat x_s} + \frac{1}{2} q_s$ to be the uniform mixture\footnote{By a concentration of measure argument (Appendix \ref{app:bayesian}), the weights $q_s(x)$ approximate the posterior probability of an action $x$ being preferred over $\hat x_s$ by the Bayesian model with Gaussian prior and likelihood. As such, the distribution $\tilde \mu$ resembles the \emph{top-two Thompson sampling} approach proposed by \citet{russo2020simple}.} of $q_s$ and a Dirac at $\hat x_s$.
  Let $\bar \Delta_s(x) = \ip{\hat \theta_s, \hat x_s - x}$ and 
  note that $\bar{\Delta}(\tilde \mu) \geq (a-1) \delta_s \geq \delta_s$ by the assumption $a \geq 2$. Therefore, by Lemma~\ref{lem:two-action-ids}, 
	\begin{align}
		\Psi_s(\mu_s) \leq \min_{p \in [0,1]} \frac{(1-p) \delta_s + p \hat \Delta_s(\tilde\mu)}{p I_s(\tilde \mu)} \leq \frac{4 \delta \bar \Delta_s(\tilde \mu)}{I_s(\tilde \mu)} \,.
    \label{eq:tai}
	\end{align}
	Note that we can bound the information gain $I_s(\tilde \mu)$ as follows,
	\begin{align*}
		I_s(\tilde \mu) \geq \sdfrac{1}{2}\sum_{x \in \xX} \tilde \mu(x) \sum_{z \neq \hat x_s} q_s(z) \ip{\hat \nu_s(z) - \hat \theta_s, x}^2 = \tfrac{1}{2} \sum_{z \neq \hat x_s} q_s(z) \min_{\nu \in \hh{\hat x_s}{z}} \|\nu - \hat \theta_s\|_{V(\tilde \mu)}^2\,.
	\end{align*}
	On the other hand, we can bound the gap $\bar\Delta_s(x) = \ip{\hat \theta_s, \hat x_s - x}$,
	\begin{align*}
		\ip{\hat \theta_s, \hat x_s - x} 
    = \min_{\nu : \ip{\nu, x - \hat x_s} \geq 0} \|\nu - \hat \theta_s\|_{V(\tilde \mu)} \|\hat x_s - x\|_{V(\tilde\mu)^{-1}} \leq \min_{\nu \in \hh{\hat x_s}{x}} \|\nu - \hat \theta_s\|_{V(\tilde \mu)} \|\hat x_s - x\|_{V(\tilde\mu)^{-1}}\,.
	\end{align*}
  Combining the last two displays with the definition of $\tilde \mu$, the fact that $\hat x_s = x^*$ and Cauchy-Schwarz,
	\begin{align*}
		\bar \Delta_s(\tilde \mu)^2 &\leq \sdfrac{1}{4} \sum_{x \neq \hat x} q_s(x) \min_{\nu \in \cC_x}\|\nu - \hat \theta_s\|_{V(\tilde\mu)}^2 \sum_{x \neq \hat x} q_s(x) \|\hat x_s - x\|_{V(\tilde \mu)^{-1}}^2\\
		&\leq (1 + d) \sum_{x \neq \hat x} q_s(x) \min_{\nu \in \hh{\hat x_s}{x}}\|\nu - \hat \theta_s\|_{V(\tilde\mu)}^2 \leq 2(1+d) I_s(\tilde \mu)\,.
	\end{align*}
The second last step uses $\sum_{x \neq \hat x_s} q_s(x) \|x\|_{V(\tilde \mu)^{-1}}^2 \leq 2 \sum_{x \neq \hat x_s} q_s(x) \|x\|_{V(q_s)^{-1}}^2 = 2d$ and $\|\hat x_s \|_{V(\tilde \mu)^{-1}}^2 \leq 2$. 
Next, for $x \neq \hat x_s$, 
\begin{align*}
\bar \Delta_s(x) = \hat \Delta_s(x) - \delta_s \geq \frac{1}{2} \Delta_{\min} - \delta_s \geq \frac{1}{2}\left(1 - \frac{1}{a}\right) \Delta_{\min}\,. 
\end{align*}
Hence, by the definition of $\tilde \mu$ we have $\bar \Delta_s(\tilde{\mu}) \geq \frac{1}{4}(1-1/a) \Delta_{\min}$ and
using (\ref{eq:tai}),
	\begin{align*}
		\Psi_s(\mu_s) &\leq \frac{4 \delta_s \bar \Delta_s(\tilde \mu)}{I_s(\tilde \mu)} = \frac{4 \delta_s \bar \Delta_s(\tilde \mu)^2}{\bar \Delta_s(\tilde \mu) I_s(\tilde \mu)} \leq \frac{32 \delta_s (1+ d)}{\Delta_{\min}\left(1 - \frac{1}{a}\right)} \,.
	\end{align*}
The claim follows with $a=8(1+d)+1$.
\end{proof}

\begin{proofof}{Theorem~\ref{thm:regret-log}}
	Recall that $B_s = \chf{\beta_s \leq \beta_{s, s^2}}$ with $\beta_s = \|\hat \theta_s - \thetaopt\|_{V_s}^2$.	
  As before, by Lemma~\ref{lem:exploration-regret} and using that $\Delta(x_s) B_s\leq  2 \hat \Delta_s (x_s) B_s$, 
	\begin{align*}
		R_n \leq 2 \EE\left[\sum_{s=1}^{s_n}\hat \Delta_s (x_s) B_s\right] + \oO(\log \log(n))\,.
	\end{align*}
	Let $\gamma_n = \sum_{s=1}^{s_n} I_s(x_s)$ be the cumulative information gain. Using Cauchy-Schwarz 
  and the instance-dependent bound on the information ratio from Lemma~\ref{lem:ratio-bound-instance}, 
	\begin{align*}
		\EE\left[\sum_{s=1}^{s_n}\hat \Delta_s (x_s) B_s\right]^2 &\leq \EE\left[\sum_{s=1}^{s_n}\Psi_s(\mu_s) B_s\right]\EE\left[\sum_{s=1}^{s_n}I_s(x_s) B_s\right] \leq \EE\left[\sum_{s=1}^{s_n}\oO\left(\frac{\delta_s d B_s}{\Delta_{\min}}\right)\right]\EE\left[\gamma_n\right]\,.
	\end{align*}
	Further bounding $\delta_s \leq \hat \Delta_s(x_s)$ on the right-hand side and re-arranging yields
	\begin{align*}
		\EE\left[\sum_{s=1}^{s_n}\hat \Delta_s (x_s) B_s\right] \leq \oO\left({d}{\Delta_{\min}^{-1}}\right)\EE\left[\gamma_n\right]\,.
	\end{align*}
	The worst-case total information gain is at most $\EE[\gamma_n] \leq \oO\big(d^2\log(n)^2\big)$ according to Lemma~\ref{lem:info-bound-agnostic}, and the claim follows. We remark that the bound can be improved by bounding the term $\EE[\log(s_n)]$ (which appears in the upper bound on $\gamma_n$) more carefully with the help of Lemma \ref{lem:effective-horizon}.
\end{proofof}

Our next result shows that the proposed version of IDS is asymptotically optimal.
The key insight is a connection between information-directed sampling and a primal-dual approach based on online learning 
to solve the convex program that defines the lower bound. Conceptually, the connection is explained best with an \emph{oracle analysis}, which sets aside the statistical estimation process and highlights the key steps (Appendix~\ref{app:oracle}). In particular, Lemma \ref{lem:ratio-bound-asymptotic} shows that in the asymptotic regime, the information ratio satisfies $\Psi_s(\mu_s) \leq 4 \delta_s \big(c^*  + \oO(\beta_s^{1/2}m_s^{-1/2} + \delta_s )\big)$. Further, Lemma~\ref{lem:asymptotic-information} improves the bound on the total information gain to $\gamma_n = \sum_{s=1}^{s_n} I_s(x_s) \leq \log(n) + o(\log(n))$. Lastly, Lemma~\ref{lem:effective-horizon} shows that IDS samples informative actions with large enough probability that $\EE[\log(s_n)] \leq \oO(\log \log(n))$, which is important to bound lower-order terms in our analysis. 

\begin{theorem}[Asymptotic regret]\label{thm:regret-asymptotic} Algorithm \ref{alg:ids-anytime} is asymptotically optimal, 
	\begin{align*}
		\lim_{n \rightarrow \infty} \frac{R_n}{\log(n)} = c^*\,,
	\end{align*}
	where $c^*$ is the solution to the lower bound \eqref{eq:lower-linear} and we assume that $\|x^*\| > 0$.
\end{theorem}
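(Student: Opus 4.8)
The statement combines a lower and an upper bound. The lower bound $\liminf_{n\to\infty} R_n/\log n \ge c^*$ comes for free: Theorem~\ref{thm:regret-worstcase} gives $\sup_{\theta\in\mM}R_n(\theta,\pi) = \oO(d\sqrt n\log n)$, so Algorithm~\ref{alg:ids-anytime} is consistent, and the asymptotic lower bound stated at the beginning of this section applies. Everything else targets $\limsup_{n\to\infty}R_n/\log n \le c^*$. Following the template of the proofs of Theorems~\ref{thm:regret-worstcase} and~\ref{thm:regret-log}, I would decompose $R_n = \EE[\sum_{t\le n}\Delta(x_t)]$ into exploitation and exploration rounds and restrict to the good event $B_s = \{\|\htheta_s-\thetaopt\|_{V_s}^2 \le \betass\}$. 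Exploitation rounds cost nothing when $\hat x_s = x^*$; an exploitation round at global time $t$ with $\hat x_s \neq x^*$ satisfies $\thetaopt\in\hh{\hat x_s}{x^*}$, so the exploitation condition forces $\|\htheta_s-\thetaopt\|_{V_s}^2 \ge \beta_{s,t\log t}$, an event of probability $\le 1/(t\log t)$ by~\eqref{eq:confidence-coefficient}; hence all such rounds together cost $\oO(\sum_{t\le n}1/(t\log t)) = \oO(\log\log n)$, while exploration rounds on which $B_s$ fails have expected count $\oO(\sum_s s^{-2}) = \oO(1)$. This is the content of Lemma~\ref{lem:exploration-regret}, and it reduces matters to bounding $\EE[\sum_{s\le s_n}\Delta(x_s)B_s]$.

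Next I would strip off the estimation error. On $B_s$ with $\delta_s < \Dmin/2$ — which holds for all $s$ beyond some instance-dependent, almost surely finite $s_0$ because $\delta_s\to0$ — \eqref{eq:gap-bound} forces $\hat x_s = x^*$, so $\Delta(x_s) = \bar\Delta_s(x_s) + \ip{x_s - x^*,\htheta_s-\thetaopt}$ with the empirical gap $\bar\Delta_s(x) \deq \ip{\htheta_s,\hat x_s - x}\ge 0$. The error term is at most $\|x_s-x^*\|_{V_s^{-1}}\betass^{1/2}$ on $B_s$; since beyond $s_0$ Lemma~\ref{lem:ratio-bound-asymptotic} forces $\hDelta_s(\mu_s) = \sqrt{\Psi_s(\mu_s)I_s(\mu_s)}$ to vanish and hence $\mu_s$ to concentrate on $x^*$, a Cauchy--Schwarz/elliptic-potential computation (Lemma~\ref{lem:elliptic-potential}), together with $\betass = \oO(d\log s)$ and the effective-horizon control $s_n = \oO(\log n)$, $\EE[\log s_n] = \oO(\log\log n)$ from Lemma~\ref{lem:effective-horizon}, bounds the sum of these errors over exploration rounds by $o(\log n)$; the rounds $s\le s_0$ add an instance-dependent constant. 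By the tower rule ($\mu_s,B_s,\Psi_s,I_s$ are all measurable with respect to the data collected before round $s$, as in the proof of Theorem~\ref{thm:regret-worstcase}), the remaining quantity is $\EE[\sum_{s\le s_n}\bar\Delta_s(\mu_s)B_s]$.

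The decisive step is a self-bounding argument against the information gain. Lemma~\ref{lem:ratio-bound-asymptotic} gives $\Psi_s(\mu_s)\le 4\delta_s(c^*+\varepsilon_s)$ on $B_s$, with $\varepsilon_s = \oO(\beta_s^{1/2}m_s^{-1/2}+\delta_s)\to0$. Using $\hDelta_s(\mu_s) = \bar\Delta_s(\mu_s)+\delta_s$ and the elementary inequality $(a+b)^2\ge 4ab$, we get $4\bar\Delta_s(\mu_s)\delta_s \le \hDelta_s(\mu_s)^2 = \Psi_s(\mu_s)I_s(\mu_s)\le 4\delta_s(c^*+\varepsilon_s)I_s(\mu_s)$; dividing by $4\delta_s>0$ (valid on $B_s$ since $\delta_s\ge\betass^{1/2}\|x^*\|_{V_s^{-1}}>0$, using $\|x^*\|>0$) yields the clean bound $\bar\Delta_s(\mu_s)\le(c^*+\varepsilon_s)\,I_s(\mu_s)$. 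Summing over exploration rounds, splitting off the finitely many rounds with $\varepsilon_s>\varepsilon$, and recalling $\gamma_n = \sum_{s\le s_n}I_s(x_s)$, we obtain $\sum_{s\le s_n}\bar\Delta_s(\mu_s)B_s \le (c^*+\varepsilon)\gamma_n + C_\varepsilon$ for an instance-dependent constant $C_\varepsilon$. Taking expectations, Lemma~\ref{lem:asymptotic-information} gives $\EE[\gamma_n]\le\log n + o(\log n)$, and Lemma~\ref{lem:effective-horizon} absorbs the residual lower-order terms, so $R_n\le(c^*+\varepsilon)\log n + o(\log n)$ for every fixed $\varepsilon>0$. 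Letting $n\to\infty$ and then $\varepsilon\downarrow0$ proves $\limsup_n R_n/\log n\le c^*$, which with the lower bound gives the claim.

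I expect the main obstacle to be the bookkeeping that certifies all auxiliary quantities are genuinely $o(\log n)$: controlling the effective horizon $s_n$ — the algorithm must neither over-explore, which would inflate $\gamma_n$ well beyond $\log n$, nor under-explore, which would make exploitation unreliable — which is exactly why Lemma~\ref{lem:effective-horizon} and the precise form of the exploitation threshold $\beta_{s,t\log t}$ are needed, and ensuring that the instance-dependent constants $s_0$, $C_\varepsilon$, which may grow exponentially in $\Dmin^{-1}$, never appear multiplied by $\log n$. Conceptually, the real weight sits one level down, in Lemma~\ref{lem:ratio-bound-asymptotic}: bounding the information ratio by $4\delta_s c^*(1+o(1))$ is where the primal--dual reading of~\eqref{eq:lower-linear} enters, with the soft-min weights $q_s$ acting as the dual variable of an online-learning scheme on the Lagrangian, and driving its error term to zero is the step I would expect to be the most delicate.
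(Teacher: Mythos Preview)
Your proposal is essentially correct and follows the same architecture as the paper: reduce to exploration rounds via Lemma~\ref{lem:exploration-regret}, pass from $\Delta$ to $\bar\Delta_s$ at the cost of an $o(\log n)$ estimation-error term, then combine the asymptotic information-ratio bound (Lemma~\ref{lem:ratio-bound-asymptotic}) with the refined information-gain bound (Lemma~\ref{lem:asymptotic-information}) and the effective-horizon control (Lemma~\ref{lem:effective-horizon}). Your ``decisive step'' $\bar\Delta_s(\mu_s)\le(c^*+\varepsilon_s)I_s(\mu_s)$ via $4ab\le(a+b)^2$ is a pointwise version of the Cauchy--Schwarz argument the paper runs across rounds; the two are equivalent here and yours is arguably cleaner.

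The one substantive difference is in how the non-asymptotic regime is handled. You propose to fix $\varepsilon>0$, split off rounds with $\varepsilon_s>\varepsilon$, and absorb their contribution into an instance-dependent constant $C_\varepsilon$. The paper instead partitions exploration rounds into three explicit regimes $S_1,S_2,S_3$ governed by an $n$-dependent threshold $\lambda=\log(n)^{-2/3}$, and shows that $S_1\cup S_2$ contributes $o(\log n)$ --- not a constant. This matters: $\varepsilon_s$ contains $\beta_s^{1/2}m_s^{-1/2}$, and since $m_s$ only grows to order $\log n$ in exploration rounds while $\beta_s$ can be of order $\log s$, the set $\{\varepsilon_s>\varepsilon\}$ need not have size bounded uniformly in $n$. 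Similarly, Lemma~\ref{lem:asymptotic-information} bounds $\sum_s J_s l_s(q_s^*)$ only under the indicator $J_s$, and the paper uses the regime decomposition to certify $J_s=1$ on $S_3$ and bound the complement separately. Your final paragraph correctly flags exactly this bookkeeping as the main obstacle; the paper's resolution is the three-set split rather than a fixed~$\varepsilon$.
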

We sketch the proof below and defer the complete proof to Appendix \ref{app:asymptotic}. The assumption $\|x^*\| > 0$ is used in Lemma \ref{lem:effective-horizon} to show that there are not too many exploration steps, which follows from lower bounding the exploration probability. On the other hand, when $\|x^*\| = 0$, the geometry of the lower bound changes, because the optimal action provides no information. Whether the assumption is necessary for Algorithm \ref{alg:ids-anytime} remains to be determined. As a remedy, we can also replace the gap estimates with thresholded gaps $\hat \Delta_s^{+}(x) = \ip{\hat x_s - x, \hat \theta_s} + \delta_s^+$, where $\delta_s^+ = \max(\delta_s, 1/\sqrt{s})$. Lower bounding the gaps this way ensures that an exploratory action is sampled with probability at least $1/\sqrt{s}$ in each exploration round. We believe that with a thresholded gap estimate, the statement of Theorem \ref{thm:regret-asymptotic} holds without restrictions and Theorems \ref{thm:regret-worstcase} and \ref{thm:regret-log} remain valid. Since it is unclear if the assumptions is required and for simplicity of the proofs, we work with the assumption $\|x^*\| > 0$.

\begin{proof}\textit{(Sketch)} The first step is to improve the bound on the information ratio in the asymptotic regime.	Recall that $\beta_{s} = \|\hat \theta_s - \thetaopt \|_{V_s}^2$ and $m_s = \frac{1}{2}\min_{z \neq x^*} \|\hat \nu_s(z) - \hat \theta_s\|_{V_s}^2$. Then by Lemma \ref{lem:ratio-bound-asymptotic},
	\begin{align*}
		\Psi_s(\mu_s) \leq 4 \delta_s \big(c^*  + \oO(\beta_s^{1/2}m_s^{-1/2} + \delta_s )\big) \,,
	\end{align*}
	for $\beta_s^{1/2}m_s^{-1/2} \rightarrow 0$ and $\delta_s \rightarrow 0$.
Not surprisingly, the proof bounds the information ratio  using a sampling distribution informed from the lower bound \eqref{eq:lower-linear}. Details are given in Appendix \ref{app:information-ratio}. 

Second, we improve the bound on the total information gain $\gamma_n = \sum_{s=1}^{s_n} I_s(x_s)$. The key insight is to interpret the information gain as the loss of an online learning algorithm. We adapt the standard regret proof for the exponential weights algorithm \cite{orabona2019modern}, to bound the total information gain relative to the minimum constraint (Lemma \ref{lem:asymptotic-information}). Informally, the result states that
\begin{align*}
	\EE[\gamma_n] \leq \EE\left[\min_{x \neq \hat x_{s_n}} \|\hat \nu_{s_n}(x) - \hat \theta_{s_n}\|_{V_{s_n}}^2 + \oO(\log(n)^{1/2}\log(s_n))\right] \,.
\end{align*}
Exploration rounds are defined by condition \eqref{eq:exploitation} to ensure that the minimum remains small,
\begin{align*}
	\min_{z \neq \hat x_{s_n}} \|\hat \nu_{s_n}(z) - \hat \theta_{s_n}\|_{V_{s_n}}^2 \leq \beta_{s_n, n\log n} \leq 2 \log(n \log(n)) + \oO(d \log(s_n))\,.
\end{align*}
This result improves upon the worst-case bound on the information gain (Lemma \ref{lem:info-bound-agnostic}), as long as the number of exploration rounds $s_n$ is not too large. 

Third, the proof hinges on Lemma \ref{lem:effective-horizon}, which shows that $\EE[\log(s_n)] \leq \oO(\log \log(n))$. Intuitively, IDS samples informative actions with large enough probability to ensure that in expectation, there is only a logarithmic number of exploration rounds, while the exploration probability is small enough to bound the worst-case regret.

In the remaining proof sketch we only discuss the case where $\Psi_s(\mu_s) \leq 4 \delta_s (c^* + o(1))$ and $\EE[\gamma_n] \leq \log(n) + o(\log(n))$ holds  (the actual proof requires to also bound the regret in early rounds, when the asymptotic statements do not hold). Asymptotically, the mean gap estimate $\bar \Delta_s(x) \deq \ip{x_s - x, \hat \theta_s}$ is a good estimate of the actual regret. Therefore, we get
\begin{align*}
	R_n \leq \EE\bigg[\sum_{t=1}^{s_n} \bar \Delta_s(x_s)\bigg] + o(\log(n))
\end{align*}
Next using that $4ab \leq (a + b)^2$ and Cauchy-Schwarz combined with a few applications of the tower rule, we get
\begin{align*}
	\EE\bigg[\sum_{s=1}^{s_n} \bar \Delta_s(x_s)\bigg]=	\EE\bigg[\sum_{s=1}^{s_n} \bar \Delta_s(\mu_s)\bigg] 
	&\leq \frac{1}{4} \EE\bigg[\sum_{s=1}^{s_n}\delta_s\bigg]^{-1} \EE\bigg[\sum_{s=1}^{s_n} \hat \Delta_{s }(\mu_s)\bigg]^2 \nonumber \\
	&\leq \frac{1}{4} \EE\bigg[\sum_{s=1}^{s_n}\delta_s\bigg]^{-1} \EE\bigg[\sum_{s=1}^{s_n} \Psi_{s}(\mu_s)\bigg]\EE\bigg[\sum_{s=1}^{s_n} I_s(x_s)\bigg]\,.
\end{align*}
The bound on the information ratio yields
\begin{align*}
	\frac{1}{4} \EE\bigg[\sum_{s=1}^{s_n}\delta_s\bigg]^{-1} \EE\bigg[\sum_{s=1}^{s_n} \Psi_{s}(\mu_s)\bigg] \leq \frac{1}{4} \EE\bigg[\sum_{s=1}^{s_n}\delta_s\bigg]^{-1} \EE\bigg[\sum_{s=1}^{s_n} 4 \delta_s (c^* + o(1))\bigg] \leq c^* + o(1)\,.
\end{align*}
Combined with the bound on the information gain, asymptotic optimality follows.
\end{proof}

\subsection{Alternative Definitions of the Information Gain}

Our definition of the information gain ensures that $I_{s}(x) \approx \tfrac{1}{2}\sum_{z \neq \hat{x}_{s}} q_s(z) \ip{\hat \nu_{s}(z) - \htheta_{s},x}^2$ asymptotically.
In finite time, however, the mean estimates can be inaccurate. Therefore, we add an optimistic term in the definition of the information gain \eqref{eq:I-def}, which is an essential ingredient in the proof of Theorem \ref{thm:regret-worstcase}. At the same time, the optimistic term corresponds to an information gain which was analyzed in earlier work \citep{Kirschner2018,kirschner20partialmonitoring}. Since this choice is motivated from a worst-case perspective, empirically it sometimes leads to over-exploration in the finite-time regime. 
A closer inspection of the worst-case regret proof (in particular, Eq.~\ref{eq:ratio-worst-case}) reveals that the optimistic term is only needed for the UCB action. 
This motivates the following definition: \looseness=-1
\begin{align}
	I_{s}^{\IAUCB}(x) = \sdfrac{1}{2}\sum_{z \neq \hat{x}_{s}} q_s(z) \left(|\ip{\hat \nu_{s}(z) - \htheta_{s},x}| + \chf{x = x_s^{\UCB}} \beta_{s,s^2}^{1/2}\|x\|_{V_s^{-1}}\right)^2\,. \label{eq:info-gain-ucb}
\end{align}
With a few additional steps in the proof of Lemma \ref{lem:ratio-bound-asymptotic} and Theorem \ref{thm:regret-asymptotic}, the resulting algorithm is shown to satisfy the same regret bounds as presented in Theorems \ref{thm:regret-worstcase}, \ref{thm:regret-log} and \ref{thm:regret-asymptotic}. Since the proofs are very similar, we omit the details. We compare both information gain functions in our experiments.
Another variant is to set the alternative parameters to
\begin{align*}
	\tilde \nu_s(x) = \argmin_{\nu \in \cC_x} \|\nu - \hat \theta_s\|_{V_s}^2\,,\quad \text{where}\quad \cC_x = \{\nu \in \mM : \max_{z \in \xX} \ip{\nu, z - x} = 0 \}\,.
\end{align*}
Note that $\cC_x$ is the set of parameters where $x$ is optimal and is sometimes called the \emph{cell} of $x$. Let $\tilde q(z) \propto \exp(-\eta \|\tilde \nu_s(z) - \hat \theta_s\|_{V_s}^2)$ and define 
\begin{align}
	I_{s}^\IACELL(x) \deq \sdfrac{1}{2}\sum_{z \neq \hat{x}_{s}} \tilde q_s(z) \left(|\ip{\tilde \nu_{s}(z) - \htheta_{s},x}| + \beta_{s,s^2}^{1/2}\|x\|_{V_s^{-1}}\right)^2\,. \label{eq:info-gain-cell}
\end{align}
Note that all bounds that we obtain hold true for IDS defined with $I_s^\IACELL$ as well, by replacing $\hh{\hat x_s}{x}$ with $\cC_x$ in the proof. The key insight is that $\cC^* = \cup_{x \neq x^*} \cC_x = \cup_{x \neq x^*}\hh{x^*}{x}$, hence the change is simply a different decomposition of the set of alternative parameters $\cC^*$ into convex regions. One might expect faster convergence from the fact that $\tilde q_s$ is more concentrated, but empirically we find little difference compared to $I_s$. On the other hand, for unconstrained parameter sets $\mM$, we can compute $\hat \nu_s(z)$ in closed form (Eq.~\ref{eq:nu-closed-form}), whereas $\tilde \nu_s(z)$ can only be computed by solving a positive definite quadratic program with $k$ linear constraints for each action $z \neq \hat x_s$. Interestingly, however, the information gain \eqref{eq:info-gain-cell} relates to the Bayesian mutual information $\II_s(y_s;x^*|x_s=x)$. The argument uses concentration of measure to show that $\tilde q_s(x)$ approximates the posterior probability that an action $x \neq x^*$ is optimal in the Bayesian model. We refer to Appendix \ref{app:bayesian} for details. 

\subsection{Information-Directed Sampling as a Primal-Dual Approach}\label{ss:primal-dual}

Lemma \ref{lem:ids-support} shows that the IDS distribution $\mu_s$ is supported on actions $x$ that minimize the function
\begin{align*}
g_s(x) = \hDelta_s(x)  - \frac{\Psi_s(\mu_s)}{2 \hDelta_s(\mu_s)} I_s(x) \stackrel{n \rightarrow \infty}{\approx} \hDelta_s(x)  - c^*I_s(x) \,.
\end{align*}
The approximation holds because asymptotically, $\Psi_s(\mu_s) \approx 4c^* \delta_s$ and $\hat \Delta_s(\mu_s) \approx 2 \delta_s$. The weight $c^*$ appears from normalizing the Lagrange multipliers as discussed in Appendix~\ref{app:oracle}. Therefore, the IDS distribution can be understood as a type of best-response on the primal-dual game defined by the Lagrangian of the lower bound, where the dual variables correspond to the $q$-weights of the information gain. Note that the best response on $g_s$ is not unique, and IDS chooses a particular, randomized trade-off, which is imposed by the IDS objective \eqref{eq:ids-def}.

The first work which exploits the primal-dual formulation for regret minimization is by \cite{degenne2020structure}. In our notation, their algorithm corresponds to choosing the action with the best information-regret trade-off $z_s= \argmin_{x \in \xX} {\hat \Delta_s(x)}/{I_s(x)}$. IDS instead asymptotically randomizes between $x^*$ and $z_s$, which allows it to maintain the worst-case regret bound. Another more recent primal-dual approach is the \solid{} algorithm by \citet{tirinzoni2020asymptotically}. This approach uses a different Lagrangian, which is defined by keeping the minimum over $\cC^*$ in \eqref{eq:lower-linear}. Accordingly, the dual variable is one-dimensional, but the constraints appear non-smooth. \solid{} is defined by alternating (optimistic) sub-gradient steps on the allocation and the dual variable. This leads to a randomized strategy over actions with exponential weights that are only updated when an exploration condition is satisfied. 

\section{Experiments}
\label{sec:experiments}

\begin{figure}
    \centering
    \includegraphics{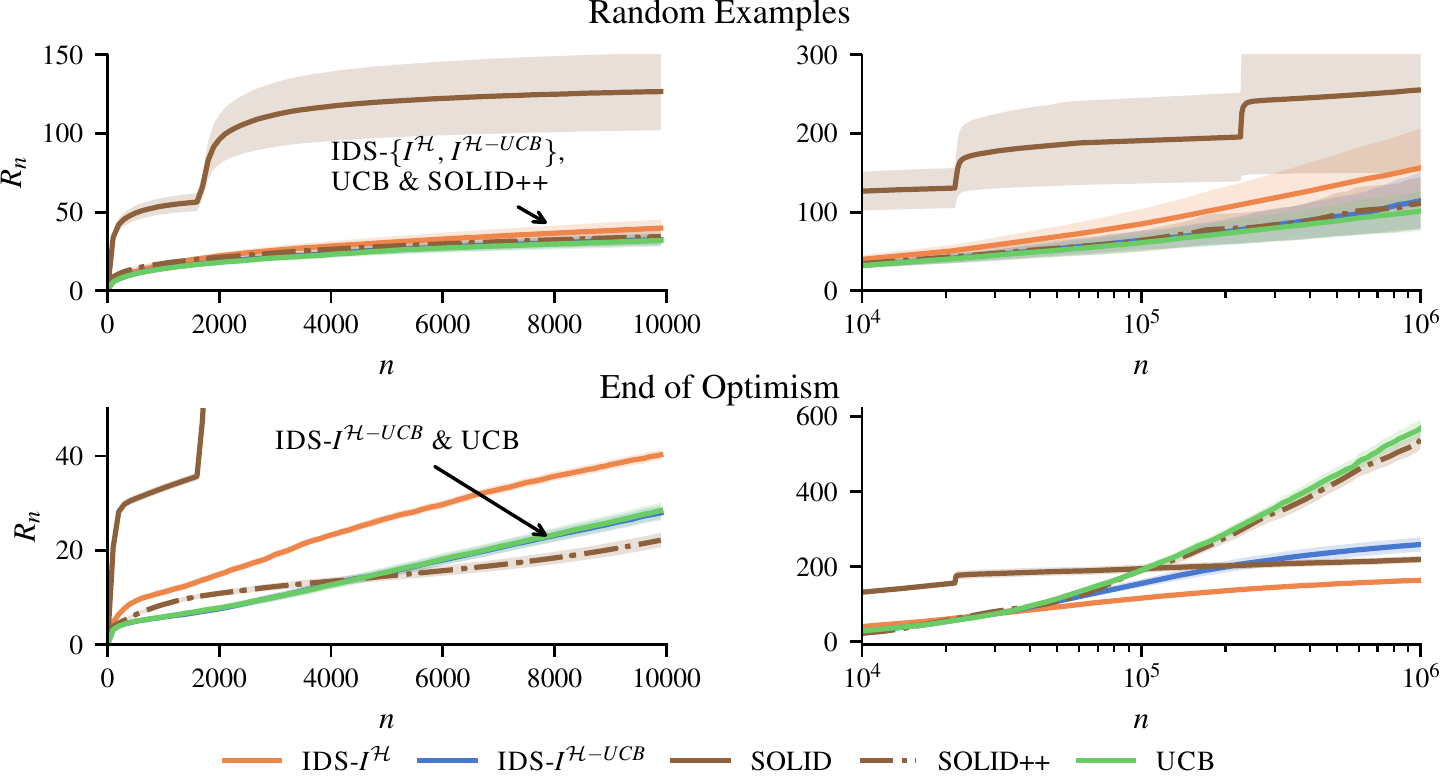}
    \caption{Top: Worst-case regret on randomly drawn action sets. Bottom: Counter-example problem from \cite{lattimore2017end}. Early stages are shown in linear scale, asymptotics in log scale. Results are averaged over 100 repetitions and the confidence region shows $2 \times$standard error.}
    \label{fig:all-plots}
\end{figure}


We compare \IDS{} with \LinUCB{} \citep{Abbasi2011improved} and \solid~\citep{tirinzoni2020asymptotically}, the latter being our closest competitor. Note that \solid{} was shown to outperform OAM \citep{hao2019adaptive} and \LinTS{} in a variety of settings. To the best of our knowledge, \solid{} is the current state-of-the-art for asymptotically optimal algorithms.

To enable a fair comparison, we use the same confidence coefficient $\beta_{t,1/\delta}$ \eqref{eq:confidence-coefficient} for all algorithms. We also run the same experiment with the (tighter) confidence coefficient derived by \cite{tirinzoni2020asymptotically}, but we found no significant difference in the results, see Appendix~\ref{ap:experiments-supp}. For \solid, we use the default hyper-parameters suggested by \citet[Appendix K]{tirinzoni2020asymptotically}. Finally, as recommended by the authors, we implement a variant of the \solid{} algorithm, which is (heuristically) optimized for better performance in finite time and does not reset the sampling vector $\omega_t$ at the beginning of each phase. We display that improved version as \solid++. 

\IDS{} is implemented as in Algorithm~\ref{alg:ids-anytime} with the computational improvements described at the end of Section \ref{sec:ids}. In particular, we use an unconstrained parameter set ($\mM=\RR^d$), which allows us to compute the parameter $\hat \nu_s(x)$ in closed form. We further compute the IDS distribution randomizing only between $\hat x_t$ and one other action (Lemma~\ref{lem:ids-support}) to reduce the per-round computational complexity from $\oO(k^2)$ to $\oO(k)$. All variants of IDS used in the experiments satisfy the theoretical guarantees presented in this paper with minor proof modifications.  We also compare to \IDS-$I^\IAUCB$ defined with information gain \eqref{eq:info-gain-ucb}. In Appendix \ref{ap:experiments-supp}, we present further empirical evidence, including a benchmark with Thompson Sampling and Bayesian IDS, a comparison of information gain functions, and an evaluation of the tuning sensitivity of the $\beta_s$ and $\eta_s$ parameters.

\paragraph{Average performance on random problems.} For each repetition, we sample an action set with 6 actions drawn uniformly from the unit sphere. We set $d=2$ and the variance of the noise to $\sigma^2=0.1$, which is chosen so that the asymptotic regime is observed after fewer rounds relative to $\sigma^2 = 1$. The results are shown in the first row of Figure~\ref{fig:all-plots}. We display the average over 100 runs and $95\%$ confidence intervals. All policies except for \solid{} have comparable averaged performances, but the latter is not designed to optimize for worst-case regret in principle. \IDS-$I^\IAUCB$ is similar to \LinUCB{}, followed by \IDS-$I^\IA$.

\paragraph{The End of Optimism?} 
\begin{wrapfigure}{r}{0.5\textwidth}
	\begin{center}
		\includegraphics[scale=0.9]{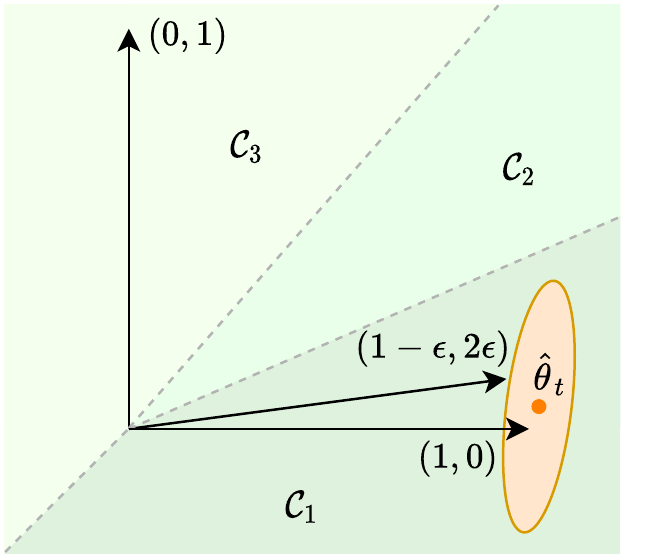}
	\end{center}
	\caption{The `end of optimism' example.}\label{fig:eoo-example}
\end{wrapfigure}
This example of a 2-dimensional linear bandit dates back to \citet[Appendix A]{soare2014best}, and was used by \citet{lattimore2017end} to show that algorithms based on optimism and Thompson sampling are not asymptotically optimal in the linear setting. 
There are three arms $x_1 = (1, 0)$, $x_2 = (1-\epsilon, 2\epsilon)$ and $x_3 = (0,1)$ with a tuning variable $\epsilon > 0$. The true parameter is $\theta = (1,0)$ which makes action $x_1$ optimal. 
The situation is illustrated in Figure \ref{fig:eoo-example}. The colored regions $\cC_1, \cC_2$ and $\cC_3$ are the corresponding \emph{cells}, i.e.~the subset of parameters in $\RR^2$ for which $x_1$, $x_2$ or $x_3$ is optimal respectively. 
When the confidence ellipsoid $\eE_t = \{ \theta : \|\theta - \htheta_t\|_{V_t}^2 \leq c\log(n)\}$ for the least squares estimator $\hat \theta_t$ is contained in the cell $\cC_1$, the learner has identified the best action with high probability.

Algorithms based on optimism and Thompson sampling quickly rule out the suboptimal arm $x_3$ and just play either $x_1$ or $x_2$. The twist is that the third arm is still informative for determining $a^*$, and in fact an asymptotically optimal algorithm plays only on $\{x_1, x_3\}$. To see why, note that any no-regret learner plays $x^*=x_1$ a lot, therefore the parameter is well-estimated along the direction $x_1$. 
It remains to shrink the confidence ellipsoid approximately along the direction $x_3$, which means increasing the $V_t$-norm of $x_3$. Choosing arm $x_2$ incurs a small cost $\epsilon$, but the increase of the confidence ellipsoid in direction $x_3$ is only small, $\ip{x_3, (V_{t+1} - V_t) x_3} = \ip{x_3, x_2}^2 = \epsilon^2$.
On the other hand, choosing $x_3$ implies a higher regret cost of  $1$, but the confidence set is increased by $1$ along direction $x_3$, which allows to identify the optimal action at a much smaller cost. An optimistic algorithm has an asymptotic regret that scales with $R_n \approx \log(n)/\epsilon$, while for an optimal algorithm, $R_n \approx 1 \cdot \log(n)$. In fact, for some small $\epsilon$, the lower bound constant \eqref{eq:lower-linear} is $c^* = 64$ and does not depend on $\epsilon$, so optimistic algorithms cannot be asymptotically optimal.


For the experiments, we use noise variance $\sigma^2=0.1$, and $\epsilon = 0.01$, which is sufficiently large to reach the asymptotic regime within $n=10^6$ rounds, and small enough to highlight the difference between UCB and IDS.
%
%
Results in this setting are shown in the bottom row of Figure~\ref{fig:all-plots}. As expected, \LinUCB's asymptotics show a suboptimal log-slope, but it is surprisingly followed by \solid++. Despite our attempts, we are presently not able to provide a good explanation for this result and it might require a more involved analysis of the \solid++ heuristic. However, both versions of \IDS{} and the theoretical \solid{} reach the optimal asymptotic around $t=10^5$ ($10^4$ for \solid{}) and significantly outperform \LinUCB{} on that problem. An interesting observation is that IDS-$I_s^{\UCB}$ performs better in finite time, whereas IDS-$I_s$ reaches the asymptotic regime earlier.


\section{Conclusion}\label{sec:conclusion}

We introduced a simple and efficient algorithm for linear bandits that is (nearly) worst-case optimal and matches the asymptotic lower bound exactly. Note that the algorithm is essentially hyper-parameter free with the usual boundedness assumptions. Nonetheless, the confidence parameter $\beta_{s,1/\delta}$ and the learning rate $\eta_s$ used in the definition of $I_s$ provide some tuning knobs to improve performance in practice. 

Our theoretical results still rely on some restrictive assumptions, such as the boundedness requirement for the parameter set, uniqueness of $x^*$ and $\|x^*\| > 0$ for the asymptotic regret, and the need to discard data in exploitation rounds. Also, the dependence on $d$ and $k$ is sub-optimal in some regimes, in particular for the worst-case regret bound and small $k$. On the upside, our analysis is relatively simple, and raises the hope that there exists a \emph{really} simple proof. Finding an information gain which preserves the guarantees and telescopes more easily could be a first step towards this end. \looseness=-1

Finally, it appears likely that our framework generalizes in several directions. The contextual case is already covered in previous work on asymptotic algorithms \citep{hao2019adaptive,tirinzoni2020asymptotically}. We point out that IDS can be defined to optimize the marginals of the joint distribution between context and action \citep{kirschner20partialmonitoring}. Decoupling the reward from the observation features leads to the linear partial monitoring framework, where IDS is known to achieve the optimal worst-case rate in all possible games \citep{kirschner20partialmonitoring}. The structured bandit setting and information gain functions for a non-Gaussian likelihood are yet other promising directions.

\section*{Acknowledgements}
Csaba Szepesv\'ari gratefully acknowledges funding from the Canada CIFAR AI Chairs Program, Amii and NSERC.

\bibliography{references.bib}
\newpage

\appendix 
\section{Notation}\label{app:notation}
	\begin{center}
			\setlength{\tabcolsep}{4pt}
		\begin{longtable}{c c p{12cm} }

			\multicolumn{3}{l}{\textit{Linear Bandit Setting}}\\[2pt]
			$d$ &  & feature dimension\\
			$\xX$ & $\subset$ & $\RR^d$, action (feature) set\\
			$\mM$ & $\subset$ & $\RR^d$, parameter set\\
			$\theta^*$ & $\in$ & $\mM$, unknown, true parameter\\
			$k$ & $\triangleq$ & $|\xX|$, number of actions\\
			$x^*$ & $\triangleq$ & $\argmax_{x \in \xX} \ip{x, \theta^*}$, best action\\
			$\Delta(x)$ & $\triangleq$ & $\ip{x^* - x, \theta^*}$, suboptimality gap\\
			$\cC_x$ & $\triangleq$ & $\{ \nu \in \mM : \ip{x, \nu} \geq \max_{y \in \xX} \ip{y, \nu}\}$, cell of action $x$\\
			$\hh{x^*}{x}$ & $\triangleq$ & $\{ \nu \in \mM : \ip{x, \nu} \geq \ip{x^*, \nu}\}$\\
			$n$ & $\triangleq$ & horizon\\
			$R_n$ & $\triangleq$ & $\sum_{t=1}^n \ip{x^* - x_t, \thetaopt}$, regret\\
			$s_n$ & $\triangleq$ & effective horizon / exploration step counter\\
			$c^*$ & $\triangleq$ & asymptotic regret, see \eqref{eq:lower-linear}\\
			$\alpha^*$ & $\triangleq$ & asymptotically optimal allocation\\
			$x_s$ & $\triangleq$ & $x_{t_s}$ action choice at local time $s$\\
			$y_s$ & $\triangleq$ & $\ip{x_s, \thetaopt} + \epsilon_{s}$, observation with (sub-)Gaussian noise $\epsilon_s$\\
			\multicolumn{3}{l}{}\\[-5pt]
			\multicolumn{3}{l}{\textit{Least-Squares Estimate}}\\[2pt]
			$V(\alpha)$ & $\triangleq$ & $\sum_{x \in \xX} \alpha(x) x x^\T$, covariance matrix for allocation $\alpha$\\
			$V_s$ & $\triangleq$ &  $\sum_{i=1}^s x_i x_i^\T + \textbf{1}_d$, (regularized) empirical covariance matrix \\
			$\hat \theta_s$ & $\triangleq$ & $V_s^{-1}\sum_{i=1}^s x_i y_i$, least squares estimate\\
			$\beta_{s, 1/\delta}$ & $\triangleq$ & $\left(\sqrt{2 \log \frac{1}{\delta} + \log \det V_s} + 1\right)^2$ concentration coefficient\\
			$\hat x_s$ & $\triangleq$ & $\argmax_{x \in \xX} \ip{x, \hat \theta_s}$, empirically best action for the estimate $\hat \theta_s$\\
			$x_s^\UCB$ & $\triangleq$ & $\argmax_{x \in \xX} \ip{x, \hat \theta_s} + \beta_{s,s^2}^{1/2}\|x\|_{V_s^{-1}}$, UCB action\\
			$\hat \nu_s(x)$ & $\triangleq$ & $\argmin_{\nu \in \cC_x} \|\nu - \hat \theta_s\|_{V_s}^2$, alternative parameter in $\cC_x$\\
			$m_s$ & $\triangleq$ & $\frac{1}{2}\min_{x \neq \hat x_s} \|\hat \nu_s(x) - \hat \theta_s \|_{V_s}^2$, minimum constraint value\\
			\multicolumn{3}{l}{}\\[-5pt] 
			\multicolumn{3}{l}{\textit{Information-Directed Sampling}}\\[2pt]
			$\hat \Delta_s(x)$ & $\triangleq$ & $\delta_s + \ip{\hat x_s - x, \hat \theta_s}$ gap estimate with estimation error $\delta_s$\\
			$I_s(x)$ & $\triangleq$ & information gain\\
			$\gamma_n$ & $\triangleq$ & $\sum_{s=1}^{s_n} I_s(x_s)$, total information gain\\
			$\Psi_s(\mu)$ & $\triangleq$ & $\frac{\hat \Delta_s (\mu)^2}{I_s(\mu)}$, information ratio\\
			$\mu_s$ & $\triangleq$ & $\argmin_{\mu \in \sP(\xX)} \Psi_s(\mu)$, IDS distribution\\
		\end{longtable}
	\end{center}
\newpage

\section{Additional Proofs and Technical Lemmas}\label{app:proofs}

\subsection{Properties of the IDS Distribution}
The results in this section are mostly known or refine previous results. We start with a lemma by \citet[Lemma 5]{kirschner20partialmonitoring}, which shows that IDS plays close to greedy.
\begin{lemma}[Almost greedy]\label{lem:almost-greedy}\label{lem:greedy-probabilistic} The IDS distribution is almost greedy, $\hat{\Delta}_s(\mu_s) \leq 2 \delta_s$.
\end{lemma}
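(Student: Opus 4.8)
The plan is to exploit the definition of the IDS distribution as a minimizer of the information ratio $\Psi_s$, together with the structure of the gap estimate $\hat\Delta_s(x) = \delta_s + \ip{\hat x_s - x, \hat\theta_s}$ and the fact that $\delta_s = \hat\Delta_s(\hat x_s)$. The key observation is that playing the Dirac $e_{\hat x_s}$ at the empirically best action has gap estimate exactly $\delta_s$, but its information gain $I_s(\hat x_s)$ may be small (or even zero if $\hat x_s$ is uninformative). So we cannot just compare $\mu_s$ to $e_{\hat x_s}$ directly; instead we argue by contradiction: if $\hat\Delta_s(\mu_s)$ were large, we could strictly decrease the information ratio by mixing $\mu_s$ with more weight on $\hat x_s$, contradicting optimality of $\mu_s$.

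Concretely, I would suppose for contradiction that $\hat\Delta_s(\mu_s) > 2\delta_s$ and consider the perturbed distribution $\mu_\epsilon = (1-\epsilon)\mu_s + \epsilon e_{\hat x_s}$ for small $\epsilon > 0$. Since $\hat\Delta_s$ is affine in $\mu$, we have $\hat\Delta_s(\mu_\epsilon) = (1-\epsilon)\hat\Delta_s(\mu_s) + \epsilon\delta_s$, which is strictly decreasing in $\epsilon$ at rate $\delta_s - \hat\Delta_s(\mu_s) < -\delta_s < 0$. Meanwhile $I_s$ is also affine in $\mu$, so $I_s(\mu_\epsilon) = (1-\epsilon)I_s(\mu_s) + \epsilon I_s(\hat x_s) \geq (1-\epsilon)I_s(\mu_s)$. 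Then
\begin{align*}
\Psi_s(\mu_\epsilon) = \frac{\hat\Delta_s(\mu_\epsilon)^2}{I_s(\mu_\epsilon)} \leq \frac{\big((1-\epsilon)\hat\Delta_s(\mu_s) + \epsilon\delta_s\big)^2}{(1-\epsilon)I_s(\mu_s)}\,,
\end{align*}
and computing the derivative at $\epsilon = 0$ (or just comparing to first order), the numerator decreases quadratically fast while the denominator decreases only linearly, so the ratio strictly decreases for small $\epsilon$ whenever $\hat\Delta_s(\mu_s) > 2\delta_s$ — precisely the threshold where the first-order term $-2\hat\Delta_s(\mu_s)(\hat\Delta_s(\mu_s)-\delta_s) + \hat\Delta_s(\mu_s)^2$ (numerator derivative times $I_s$, minus $\hat\Delta_s^2$ times $I_s$ derivative) becomes negative. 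This contradicts $\Psi_s(\mu_s) = \min_\mu \Psi_s(\mu)$.

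The main obstacle is handling the degenerate case $I_s(\mu_s) = 0$ or where the derivative computation needs $I_s(\mu_s) > 0$ to be well-defined; but since the UCB action always has $I_s(x_s^\UCB) \geq \tfrac12\beta_{s,s^2}\|x_s^\UCB\|_{V_s^{-1}}^2 > 0$ (as $\xX$ spans $\RR^d$), the optimal information ratio is finite and $I_s(\mu_s) > 0$, so the argument goes through. A cleaner packaging avoiding derivatives: directly verify that the choice $\epsilon^* = \frac{\hat\Delta_s(\mu_s) - 2\delta_s}{2(\hat\Delta_s(\mu_s) - \delta_s)} \in (0,1)$ gives $\hat\Delta_s(\mu_{\epsilon^*})^2/\big((1-\epsilon^*)I_s(\mu_s)\big) < \hat\Delta_s(\mu_s)^2/I_s(\mu_s)$ by an elementary inequality, which is the contradiction. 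I would present the elementary-inequality version to keep the proof short, deferring to the cited Lemma 5 of \citet{kirschner20partialmonitoring} for the original argument.
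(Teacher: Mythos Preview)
Your proposal is correct and is essentially the same argument the paper uses (and that appears explicitly in the oracle exposition of Appendix~\ref{app:oracle}): perturb $\mu_s$ towards $e_{\hat x_s}$, use affinity of $\hat\Delta_s$ and $I_s$, and read off from the first-order optimality condition $\frac{d}{d\epsilon}\Psi_s(\mu_\epsilon)|_{\epsilon=0} \geq 0$ that $2\hat\Delta_s(\mu_s)\delta_s - \hat\Delta_s(\mu_s)^2 \geq 0$. Your extra step of lower bounding $I_s(\mu_\epsilon) \geq (1-\epsilon)I_s(\mu_s)$ to handle $I_s(\hat x_s) > 0$ is the right way to adapt the oracle argument to the general case, and your observation that $I_s(\mu_s) > 0$ follows from $I_s(x_s^\UCB) > 0$ cleanly disposes of the degenerate case.
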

The next result is by \citet[Proposition 6]{Russo2014learning}. 
\begin{lemma}[Convexity \& support on two actions] \label{lem:ids-distr}
	The information ratio as a function of the distribution, $\mu \mapsto \Psi_s(\mu)$ is convex. Further, the IDS distribution $\mu_s = \argmin_{\mu} \Psi_s(\mu)$ can always be chosen with a support of at most two actions.
\end{lemma}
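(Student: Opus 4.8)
The plan is to treat the two assertions separately, both flowing from the structure of $\Psi_s$ as (the square of) a linear functional divided by a linear functional on the simplex.

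\emph{Convexity.} Since $\hat\Delta_s$ and $I_s$ are extended linearly to distributions, the maps $\mu \mapsto \hat\Delta_s(\mu) = \ip{\mu, \hat\Delta_s}$ and $\mu \mapsto I_s(\mu) = \ip{\mu, I_s}$ are affine. Moreover $\hat\Delta_s(x) \geq \beta_{s,s^2}^{1/2}\|x\|_{V_s^{-1}} \geq 0$ and $I_s(x) \geq \tfrac12 \beta_{s,s^2}\|x\|_{V_s^{-1}}^2 > 0$ for every $x \in \xX$ (we may assume $0 \notin \xX$, as $I_s$ vanishes only at the zero action, which is never played), so on all of $\sP(\xX)$ the pair $(\hat\Delta_s(\mu), I_s(\mu))$ lies in $\RR_{\geq 0} \times \RR_{>0}$. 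It then suffices to recall that $f(a,b) = a^2/b$ is jointly convex on $\RR \times \RR_{>0}$: a direct computation gives $\nabla^2 f(a,b) = \tfrac{2}{b^3}(b,-a)^\T(b,-a) \succeq 0$. Hence $\Psi_s = f \circ (\hat\Delta_s, I_s)$ is a convex function precomposed with an affine map, and is therefore convex.

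\emph{Support on two actions.} By compactness of $\sP(\xX)$ and continuity of $\Psi_s$ (using $I_s(\mu) > 0$ everywhere), a minimizer exists; among all minimizers choose $\mu^*$ with support $S = \mathrm{supp}(\mu^*)$ of minimal cardinality $m$, and suppose toward a contradiction that $m \geq 3$. Let $\Sigma_S = \{\mu \in \sP(\xX) : \mathrm{supp}(\mu) \subseteq S\}$, an $(m-1)$-dimensional simplex, and set $L = \{\mu \in \Sigma_S : \hat\Delta_s(\mu) = \hat\Delta_s(\mu^*)\}$. On $L$ we have $\Psi_s(\mu) = \hat\Delta_s(\mu^*)^2 / I_s(\mu)$, so minimizing $\Psi_s$ over the compact polytope $L$ is equivalent to maximizing the linear functional $\mu \mapsto I_s(\mu)$ over $L$, and this maximum is attained at a vertex $v$ of $L$. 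Counting constraints, $L$ is cut from the $(m-1)$-dimensional simplex $\Sigma_S$ by the single affine equation $\hat\Delta_s(\mu) = \hat\Delta_s(\mu^*)$, so $\dim L \leq m-2$, and a vertex of $L$ must make at least $m-2$ of the inequalities $\{\mu(x) \geq 0 : x \in S\}$ tight; hence $|\mathrm{supp}(v)| \leq 2$. Since $\Psi_s(v) \leq \Psi_s(\mu^*)$, the point $v$ is also a minimizer, but with support strictly smaller than $m$, contradicting minimality. (In the edge case where $\hat\Delta_s$ is constant on $\Sigma_S$ we instead have $L = \Sigma_S$, and maximizing the linear $I_s$ over $\Sigma_S$ yields a minimizer supported on a single action, again contradicting $m \geq 2$.) Therefore $m \leq 2$.

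\emph{Main obstacle.} The argument is essentially routine; the only care points are (i) the degenerate possibility $0 \in \xX$, excluded above because such an action carries no information and is never selected, and (ii) making the vertex/constraint-counting step precise, where it is essential that we only claim the \emph{existence} of a two-action minimizer (matching the phrasing "can always be chosen"), rather than that every minimizer has support at most two.
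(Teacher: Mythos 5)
Your proof is correct. Note that the paper itself gives no proof of this lemma: it simply cites \citet[Proposition~6]{Russo2014learning} and moves on, so there is no ``paper's proof'' to compare against line by line. That said, your argument is essentially the one underlying the cited result: convexity via the joint convexity of $(a,b)\mapsto a^2/b$ on $\RR\times\RR_{>0}$ composed with the linear maps $\mu\mapsto(\hat\Delta_s(\mu),I_s(\mu))$, and two-action support by fixing the numerator level set, observing that minimizing $\Psi_s$ there reduces to maximizing the linear functional $I_s$ over a polytope cut by two equality constraints, whose basic feasible solutions have at most two nonzero coordinates. Both halves are carried out carefully, including the degenerate case where $\hat\Delta_s$ is constant on the support and the caveat about $0\in\xX$ (where $I_s$ vanishes), so no gaps remain.

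One small remark on presentation: for the convexity step you only need $I_s(\mu)>0$; positivity of $\hat\Delta_s$ is not required for $f(a,b)=a^2/b$ to be convex on $\RR\times\RR_{>0}$, so that observation can be dropped. Also, the contradiction framing can be replaced by a direct construction -- given any minimizer $\mu^*$, the vertex $v$ you exhibit is already a two-action minimizer, which is exactly the ``can always be chosen'' phrasing of the lemma -- but this is a matter of taste, not correctness.
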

In light of this lemma, the IDS distribution can be understood and computed by optimizing the information ratio between pairs of actions. We provide a closed-form solution for the IDS distribution over two actions in the next lemma.
\newcommand{\Done}{\Delta_1}\newcommand{\Dtwo}{\Delta_2}\newcommand{\Ione}{I_1}\newcommand{\Itwo}{I_2}
\begin{lemma} \label{lem:two-action-ids}
	Let $0 < \Done \leq \Dtwo$ denote the gaps of two actions and $0 \leq \Ione, \Itwo$ the corresponding information gain. Define the ratio
	\begin{align*}
		\Psi(p) = \frac{\left((1-p) \Done + p \Dtwo\right)^2}{(1-p)\Ione + p \Itwo}\,.
	\end{align*}
	Then the optimal trade-off probability $p^* = \argmin_{0 \leq p \leq 1} \Psi(p)$ is
	\begin{align*}
		p^* = \begin{cases}
			0 & \text{if } \Ione \geq  \Itwo\\
			\text{\normalfont clip}_{[0,1]}\left(\frac{\Done}{\Dtwo- \Done} - \frac{2\Ione}{\Itwo - \Ione}\right) & \text{else,}
		\end{cases}
	\end{align*}
	where we use the convention that $\Delta_1/0 = \infty$ and $\text{\normalfont clip}_{[0,1]}(a) = \max(\min(a,1),0)$.
\end{lemma}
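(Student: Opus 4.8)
The plan is to treat $\Psi(p)$ as a one-dimensional calculus problem on the compact interval $[0,1]$. First I would rewrite the numerator and denominator in affine form: set $D(p) = (1-p)\Done + p\Dtwo = \Done + p(\Dtwo - \Done)$ and $J(p) = (1-p)\Ione + p\Itwo = \Ione + p(\Itwo - \Ione)$, so that $\Psi(p) = D(p)^2/J(p)$. Both are affine in $p$, $D$ is strictly increasing (since $\Done \le \Dtwo$, and if $\Done = \Dtwo$ the ``$\Delta_1/0 = \infty$'' convention forces $p^* = 0$, which I would dispatch as a trivial boundary case), and $D(p) > 0$ on $[0,1]$. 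I also note $J(p) \ge 0$ throughout, with $J(p) = 0$ only possibly at an endpoint when one of $\Ione, \Itwo$ vanishes; since $\Psi$ is a ratio we interpret $\Psi = +\infty$ there, so such a point is never the argmin unless $\Psi \equiv \infty$, a degenerate case I would mention and discard.

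Next I would compute the derivative. Using the quotient rule,
\begin{align*}
	\Psi'(p) = \frac{2 D(p) D'(p) J(p) - D(p)^2 J'(p)}{J(p)^2} = \frac{D(p)\big(2 D'(p) J(p) - D(p) J'(p)\big)}{J(p)^2}\,,
\end{align*}
where $D'(p) = \Dtwo - \Done \ge 0$ and $J'(p) = \Itwo - \Ione$ are constants. Since $D(p) > 0$ and $J(p)^2 > 0$, the sign of $\Psi'(p)$ is the sign of the affine function $h(p) \deq 2 D'(p) J(p) - D(p) J'(p) = 2(\Dtwo-\Done)\big(\Ione + p(\Itwo-\Ione)\big) - \big(\Done + p(\Dtwo-\Done)\big)(\Itwo - \Ione)$. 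This is affine in $p$; collecting terms, the coefficient of $p$ is $2(\Dtwo-\Done)(\Itwo-\Ione) - (\Dtwo-\Done)(\Itwo-\Ione) = (\Dtwo-\Done)(\Itwo-\Ione)$ and the constant term is $2(\Dtwo-\Done)\Ione - \Done(\Itwo-\Ione)$.

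Now I would split into cases on the sign of $\Itwo - \Ione$. If $\Ione \ge \Itwo$, then the leading coefficient of $h$ is $\le 0$ and the constant term $2(\Dtwo-\Done)\Ione - \Done(\Itwo-\Ione) \ge 0$ (both summands nonnegative), so $h(0) \ge 0$; combined with $h$ being nonincreasing... actually I need $h \ge 0$ on all of $[0,1]$, which requires checking $h(1) = (\Dtwo-\Done)\Itwo + (\Dtwo-\Done)\Ione - \Done\Itwo + \Done\Ione$ — hmm, let me instead argue directly that when $\Itwo \le \Ione$ the numerator $D(p)$ increases and the denominator $J(p)$ decreases (or stays constant) in $p$, so $\Psi(p) = D(p)^2/J(p)$ is nondecreasing on $[0,1]$, giving $p^* = 0$ with no derivative computation needed. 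That handles the first branch cleanly. For the remaining branch $\Itwo > \Ione$: $h$ is strictly increasing in $p$, so $\Psi$ is unimodal (decreasing then increasing, or monotone), and the unconstrained stationary point is the unique root $p_0$ of $h(p) = 0$, namely $p_0 = -\frac{2(\Dtwo-\Done)\Ione - \Done(\Itwo-\Ione)}{(\Dtwo-\Done)(\Itwo-\Ione)} = \frac{\Done}{\Dtwo-\Done} - \frac{2\Ione}{\Itwo-\Ione}$; because $\Psi$ is quasiconvex on the line (ratio of a positive affine-squared over a positive affine is quasiconvex wherever the denominator is positive), the constrained minimiser over $[0,1]$ is $\mathrm{clip}_{[0,1]}(p_0)$. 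Assembling the two branches gives exactly the claimed formula.

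\textbf{Main obstacle.} The only real subtlety is the bookkeeping around degenerate denominators — when $\Itwo - \Ione = 0$ (the formula would divide by zero but this is subsumed in the $\Ione \ge \Itwo$ branch which I argue without the derivative), and when $J(p) = 0$ at an endpoint (handled by the $\Psi = \infty$ convention). The algebra verifying that the root of the affine $h$ simplifies to $\frac{\Done}{\Dtwo - \Done} - \frac{2\Ione}{\Itwo - \Ione}$ is routine. I expect the cleanest write-up avoids the quotient rule entirely in the first branch and uses it only in the second, where quasiconvexity of $\Psi$ restricted to the segment (ratio of convex-over-affine-positive, standard) immediately justifies clipping the interior critical point.
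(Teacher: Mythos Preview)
Your proposal is correct and follows essentially the same route as the paper: handle $\Ione \geq \Itwo$ by the direct monotonicity observation that the numerator increases while the denominator does not, and for $\Ione < \Itwo$ compute the derivative, solve for the unique interior critical point, and clip to $[0,1]$. The only minor difference is that the paper justifies the clipping step by invoking convexity of $\Psi$ on the simplex (Lemma~\ref{lem:ids-distr}), whereas you argue it directly from the affine sign structure of $h(p)$; both are fine.
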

\begin{proof}
	The case $\Ione \geq \Itwo$ is immediate, because $p > 0$ increases the numerator and decreases the denominator. Hence we can assume $\Ione < \Itwo$. Recall that $\Psi(p)$ is convex on the domain $[0,1]$ (Lemma \ref{lem:ids-distr}). The derivative is
	\begin{align*}
		\Psi'(p) = -\frac{\big(\Done + p(\Dtwo - \Done)\big)\big(\Done(\Itwo- \Ione) - (\Dtwo -\Done) (2 \Ione + p(\Itwo - \Ione))\big)}{(\Ione + p(\Itwo-\Ione))^2}\,.
	\end{align*}
	Note that $(\Dtwo - \Done) \geq 0$ and $(\Itwo - \Ione) > 0$. Solving for the first order condition $\Psi'(p) = 0$ gives $p = \frac{\Done}{\Dtwo- \Done} - \frac{2\Ione}{\Itwo - \Ione}$. We can also read off that $p < 0$ implies $\Psi'(0) > 0$, and $p > 1$ implies $\Psi'(1) < 0$. Hence clipping $p$ to $[0,1]$ leads to the correct solution.
\end{proof}

The next lemma characterizes the support of the IDS distribution. 
\begin{lemma}[IDS support] \label{lem:ids-support}
	Let $\Psi_s^* = \min_\mu \Psi_s(\mu)$ and define
	\begin{align*}
		g_s(x) = \hDelta_s(x)  - \frac{\Psi_s^*}{2 \hDelta_s(\mu_s)} I_s(x)\,.
	\end{align*}
	For any $x \in \supp(\mu_s)$, it holds $g_s(x)= \min_{z \neq \xX} g_s(z)$, and further $g_s(x) = g_s(\mu) = \frac{1}{2} \hat \Delta_s(\mu_s)$.
\end{lemma}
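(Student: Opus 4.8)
The plan is to read the claim as the first-order (KKT) optimality condition for the convex program $\mu_s \in \argmin_{\mu \in \sP(\xX)} \Psi_s(\mu)$, where $\Psi_s(\mu) = \hDelta_s(\mu)^2/I_s(\mu)$ and both $\hDelta_s(\cdot)$ and $I_s(\cdot)$ are the linear extensions to $\sP(\xX)$ of the vectors $(\hDelta_s(x))_{x \in \xX}$ and $(I_s(x))_{x\in\xX}$. First I would record that $\Psi_s^* = \Psi_s(\mu_s)$ lies in $(0,\infty)$ and that $I_s(\mu_s)>0$, $\hDelta_s(\mu_s)>0$: finiteness follows from $\Psi_s(e_{x_s^{\UCB}})\leq 2$ (Eq.~\eqref{eq:ratio-worst-case}), while $\hDelta_s(\mu_s) \geq \delta_s > 0$ since $\hDelta_s(x) = \delta_s + \ip{\hat x_s - x,\htheta_s} \geq \delta_s$ for all $x$ (the degenerate case $\delta_s=0$ being vacuous). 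On a neighbourhood of $\mu_s$ the function $\Psi_s$ is then differentiable with
\[ \nabla\Psi_s(\mu) = \frac{2\hDelta_s(\mu)}{I_s(\mu)}\,\hDelta_s - \frac{\Psi_s(\mu)}{I_s(\mu)}\,I_s\,. \]
Evaluating at $\mu=\mu_s$ and using $\Psi_s^* = \hDelta_s(\mu_s)^2/I_s(\mu_s)$ gives $\nabla\Psi_s(\mu_s) = \frac{2\hDelta_s(\mu_s)}{I_s(\mu_s)}\big(\hDelta_s - \frac{\Psi_s^*}{2\hDelta_s(\mu_s)} I_s\big) = \frac{2\hDelta_s(\mu_s)}{I_s(\mu_s)}\,g_s$, i.e.\ the gradient is a strictly positive multiple of the vector $g_s = (g_s(x))_{x\in\xX}$.

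Next I would invoke the necessary first-order condition at a constrained minimizer over the simplex: $\ip{\nabla\Psi_s(\mu_s),\, \nu - \mu_s} \geq 0$ for every $\nu \in \sP(\xX)$ (this needs only that $\mu_s$ is a minimizer and $\Psi_s$ is differentiable there, not convexity). Dividing by the positive prefactor turns this into $\ip{g_s, \nu - \mu_s} \geq 0$ for all $\nu \in \sP(\xX)$. Taking $\nu = e_x$ yields $g_s(x) \geq \ip{g_s,\mu_s} = g_s(\mu_s)$ for every $x \in \xX$, hence $g_s(\mu_s) = \min_{z\in\xX} g_s(z)$. For the support statement, write $g_s(\mu_s) = \sum_{x}\mu_s(x) g_s(x)$ as a convex combination of the values $g_s(x)$, all of which are $\geq g_s(\mu_s)$; consequently every $x$ with $\mu_s(x)>0$ must attain the minimum, $g_s(x) = g_s(\mu_s) = \min_{z\in\xX} g_s(z)$.

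Finally I would compute $g_s(\mu_s)$ by linearity: $g_s(\mu_s) = \hDelta_s(\mu_s) - \frac{\Psi_s^*}{2\hDelta_s(\mu_s)} I_s(\mu_s)$, and substituting $\Psi_s^*\, I_s(\mu_s) = \hDelta_s(\mu_s)^2$ collapses the second term to $\tfrac12\hDelta_s(\mu_s)$, so $g_s(\mu_s) = \tfrac12\hDelta_s(\mu_s)$, as asserted. I do not expect any genuine obstacle: the argument is a short convex-analysis computation, and the only points requiring a little care are the well-definedness of $\Psi_s$ and its gradient at $\mu_s$ (i.e.\ $I_s(\mu_s)>0$ and $\hDelta_s(\mu_s)>0$) and applying the correct first-order condition for minimization over the probability simplex rather than over an open set.
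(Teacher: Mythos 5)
Your proof is correct and takes essentially the same route as the paper: the paper works with the surrogate $S(\mu)=\hDelta_s(\mu)^2-\Psi_s^* I_s(\mu)$ (whose gradient $h_s=2\hDelta_s(\mu_s)\,g_s$ is a positive multiple of your $\nabla\Psi_s(\mu_s)$) and uses the same shift-of-mass/first-order argument to force support points to minimize $g_s$, then divides by $2\hDelta_s(\mu_s)$ and uses $\Psi_s^* I_s(\mu_s)=\hDelta_s(\mu_s)^2$ for the value $\tfrac12\hDelta_s(\mu_s)$. Your version only differs in differentiating $\Psi_s$ directly and in spelling out the positivity/differentiability hypotheses explicitly, which the paper leaves implicit.
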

\begin{proof}
	The proof is similar to the proof of \citep[Proposition 6]{Russo2014learning}. It is easy to see that the solution sets to the following objectives are equal:
	\begin{align*}
		\min_{\mu} \frac{\hDelta_s(\mu)^2}{I_s(\mu)} \quad \text{and} \quad \min_{\mu} \left\{ S(\mu) = \hDelta_s(\mu)^2 - \Psi_s^* I_s(\mu) \right\} \,,
	\end{align*}
	where $\Psi_s^* = \min_{\mu} \frac{\hDelta_s(\mu)^2}{I_s(\mu)}$ is the optimal information ratio. Thinking of $\mu_s$ as a vector in $\RR^k$, we compute the gradient of $S(\mu)$ at $\mu_s$,
	\begin{align*}
		\nabla_\mu S(\mu)|_{\mu =\mu_s} = 2 \hDelta_s  \hDelta_s(\mu_s) - \Psi^*_s I_s = h_s \in \RR^k
	\end{align*}
	It must be that for each $x \in \supp(\mu_s)$,  $h_s(x) = \min_x h_s(x)$. Suppose otherwise, that the optimal solution is supported on some $x$ and there exists a $z \neq x$ with $h_s(x) > h_s(z)$. Then $(e_x - e_z)^\T h_s > 0$, hence moving probability mass from $x$ to $z$ would decrease the objective. In other words, the IDS distribution must be minimizing $h_s$,
	\begin{align*}
		h_s(\mu_s) = \min_{\mu} h_s(\mu)\,.
	\end{align*}
	Now, simply dividing $h_s$ by $2\hDelta(\mu_s)$ and taking expectation over the support of the IDS distribution yields the second claim.
\end{proof} 

\subsection{Bounds on the Information Ratio} \label{app:information-ratio}

For the asymptotic bound on the information ratio, we define $\alpha^* \in (\RR_{\geq 0} \cup \{\infty\})^k$ as the solution to the lower bound \eqref{eq:lower-linear}, which is obtained as the appropriate limit. Further, let $\tilde \alpha^* = \alpha^* \chf{x \neq x^*}$ be the optimal allocation on the sub-optimal actions, which is always finite.

\begin{lemma}[Truncated optimal allocation]\label{lem:truncated-optimal}
	Let $\alpha_\lambda^*(x) = \tilde \alpha^* + \lambda \chf{x = x^*}$ be the optimal allocation truncated on $x^*$ such that $\alpha_{\lambda}^*(x^*) = \lambda$. There exists a constant $C(\theta, \xX)$ depending only on the instance and the action set, such that for all $\nu \in \cC^*$,
	\begin{align*}
	\tfrac{1}{2}\|\nu - \thetaopt \|_{V(\alpha_\lambda^*)}^2 \geq 1 - 2 C(\theta, \xX) \|\tilde \alpha\|_1 \lambda^{-1}\,.
	\end{align*}
\end{lemma}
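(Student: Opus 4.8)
The plan is to decompose the quadratic form over the truncated allocation $\alpha_\lambda^*$ into the part coming from the suboptimal actions (which is exactly the constrained allocation $\tilde\alpha^*$) and the part coming from the added mass $\lambda$ at $x^*$, and then argue that the $x^*$-contribution can only be negative by a controlled amount relative to $\|\tilde\alpha\|_1/\lambda$. Concretely, since $V(\alpha_\lambda^*) = V(\tilde\alpha^*) + \lambda\, x^*(x^*)^\T$, for any $\nu$ we have
\begin{align*}
	\tfrac12 \|\nu - \thetaopt\|_{V(\alpha_\lambda^*)}^2 = \tfrac12 \|\nu - \thetaopt\|_{V(\tilde\alpha^*)}^2 + \tfrac{\lambda}{2}\ip{x^*, \nu - \thetaopt}^2 \geq \tfrac12 \|\nu - \thetaopt\|_{V(\tilde\alpha^*)}^2\,.
\end{align*}
So the added mass on $x^*$ only helps. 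The subtlety is that $\tilde\alpha^*$ alone need not satisfy the constraint $\tfrac12\|\nu - \thetaopt\|_{V(\tilde\alpha^*)}^2 \ge 1$ for all $\nu \in \cC^*$, because the true optimal allocation $\alpha^*$ may place infinite (or merely large) mass on $x^*$ to handle directions $\nu - \thetaopt$ that are nearly orthogonal to every suboptimal action but not to $x^*$. The claim quantifies exactly this shortfall: it is at most $2C(\theta,\xX)\|\tilde\alpha\|_1\lambda^{-1}$.

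\textbf{Key steps.} First I would recall that $\alpha^*$, obtained as the appropriate limit in \eqref{eq:lower-linear}, satisfies the constraint $\tfrac12\|\nu-\thetaopt\|_{V(\alpha^*)}^2 \ge 1$ for all $\nu\in\cC^*$, where $V(\alpha^*) = V(\tilde\alpha^*) + \alpha^*(x^*) x^*(x^*)^\T$ (interpreting the limit appropriately when $\alpha^*(x^*) = \infty$). Write $w = \alpha^*(x^*)$ and $u = V(\tilde\alpha^*)$ for brevity, so $\tfrac12\|\nu-\thetaopt\|_u^2 + \tfrac{w}{2}\ip{x^*,\nu-\thetaopt}^2 \ge 1$. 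Second, for a given $\nu\in\cC^*$, either $\ip{x^*,\nu-\thetaopt}^2$ is small — in which case the $w$-term contributed little and $\tfrac12\|\nu-\thetaopt\|_u^2$ is already close to $1$ — or $\ip{x^*,\nu-\thetaopt}^2$ is bounded below, in which case the $\lambda$-term in $\alpha_\lambda^*$ compensates. To make this uniform, I would bound $\ip{x^*,\nu-\thetaopt}^2 \le \diam(\mM)^2 \|x^*\|^2 \le \|x^*\|^2$ (using $\diam(\mM)\le 1$), so the $w$-term is at most $\tfrac{w}{2}\|x^*\|^2$; but this is too crude since $w$ can be infinite. The cleaner route: use that $\tilde\alpha^* \ge \epsilon \tilde\alpha$ componentwise is not quite available, so instead I would exploit a lower bound on $V(\tilde\alpha^*)$ restricted to the span of the suboptimal directions.

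Here is the argument I actually expect to work. For $\nu\in\cC^*$, decompose $\nu-\thetaopt$ into its component in the direction of $x^*$ and its orthogonal complement. Any $\nu\in\cC^*$ that is close to $\thetaopt$ must have $\ip{x-x^*,\nu}\ge 0$ for some suboptimal $x$, which forces $\nu-\thetaopt$ to have a nonvanishing component along $x-x^*$; since $x \ne x^*$ spans together with $x^*$ a space where $V(\tilde\alpha^*)$ is positive definite (on the suboptimal-directions subspace), one gets $\|\nu-\thetaopt\|_{V(\tilde\alpha^*)}^2 \ge c_0 \|\text{proj}_\perp(\nu-\thetaopt)\|^2$ for a constant $c_0 = c_0(\theta,\xX) > 0$ — this is where the constant $C(\theta,\xX)$ is born. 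Then, writing $r^2 = \ip{x^*,\nu-\thetaopt}^2/\|x^*\|^2$ for the squared length of the parallel component, the constraint for $\alpha^*$ gives roughly $c_0\|\text{proj}_\perp\|^2 + w\|x^*\|^2 r^2 \gtrsim 2$, and for $\alpha_\lambda^*$ we get $c_0\|\text{proj}_\perp\|^2 + \lambda\|x^*\|^2 r^2$; the difference is at most $(w-\lambda)\|x^*\|^2 r^2 \le w \|x^*\|^2 \diam(\mM)^2$, and a telescoping/normalization argument relating $w$ to $\|\tilde\alpha^*\|_1$ (the scale on which the limiting allocation is taken, per Appendix~\ref{app:oracle}) converts the crude bound into the stated $2C(\theta,\xX)\|\tilde\alpha\|_1\lambda^{-1}$.

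\textbf{Main obstacle.} The hard part is making the constant $C(\theta,\xX)$ genuinely depend only on $\theta$ and $\xX$ — i.e., controlling how much "work" the $x^*$-mass does in the optimal allocation, uniformly over all $\nu\in\cC^*$, including the potentially troublesome directions nearly orthogonal to every suboptimal action. Equivalently, one must show that the shortfall of $\tfrac12\|\nu-\thetaopt\|_{V(\tilde\alpha^*)}^2$ below $1$ is controlled by a quantity that scales like $\lambda^{-1}$ once $\lambda$ mass is restored on $x^*$, with the proportionality constant absorbing only instance/action-set geometry and the total suboptimal budget $\|\tilde\alpha\|_1$. I expect this to follow from the explicit structure of the optimizer of \eqref{eq:lower-linear} (the KKT/limit characterization in Appendix~\ref{app:oracle}): the optimal $\alpha^*$ puts mass on $x^*$ precisely proportional to what is needed, and that amount is $O(\|\tilde\alpha^*\|_1)$ times an instance-dependent factor, so truncating at level $\lambda$ loses at most $O(\|\tilde\alpha^*\|_1/\lambda)$ in the constraint value.
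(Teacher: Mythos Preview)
Your decomposition $V(\alpha_\lambda^*) = V(\tilde\alpha^*) + \lambda\, x^*(x^*)^\T$ and the case split on the size of $\ip{x^*,\nu-\thetaopt}^2$ match the paper's opening moves. But the argument you sketch for the ``hard'' case has a real gap, and your stated expectation for how to close it is incorrect.

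The problem is your final claim that ``the optimal $\alpha^*$ puts mass on $x^*$ \dots\ $O(\|\tilde\alpha^*\|_1)$ times an instance-dependent factor''. This is false: the paper explicitly allows $\alpha^*(x^*)=\infty$ (it defines $\alpha^*\in(\RR_{\ge 0}\cup\{\infty\})^k$ and only asserts that $\tilde\alpha^*$ is finite). So you cannot bound the shortfall by $(w-\lambda)\|x^*\|^2 r^2$ with $w=\alpha^*(x^*)$, nor can you convert $w$ into $\|\tilde\alpha^*\|_1$ via KKT. Relatedly, your intermediate step ``$\|\nu-\thetaopt\|_{V(\tilde\alpha^*)}^2 \ge c_0\|\mathrm{proj}_\perp(\nu-\thetaopt)\|^2$'' assumes $V(\tilde\alpha^*)$ is nondegenerate on the orthogonal complement of $x^*$; but $\tilde\alpha^*$ can be supported on very few suboptimal actions, so $V(\tilde\alpha^*)$ may well be singular there.

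The paper avoids both issues with a different idea. On the affine hyperplane $\{\nu:\ip{\nu-\thetaopt,x^*}=0\}$ the $x^*$-term of $V(\alpha^*)$ contributes nothing, so $\|\nu-\thetaopt\|_{V(\tilde\alpha^*)}^2=\|\nu-\thetaopt\|_{V(\alpha^*)}^2\ge 2$ for any $\nu$ in $\cC^*$ that lies \emph{on} that hyperplane --- regardless of whether $\alpha^*(x^*)$ is finite. For a general $\nu\in\cC^*$ with $\ip{\nu-\thetaopt,x^*}^2\le 2/\lambda$, the paper projects $\nu$ onto the intersection of its half-space $\hh{x^*}{x}$ with this hyperplane. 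The crucial technical tool is Lemma~\ref{lem:convex-subspace}: for a convex polytope $K$ sliced by a hyperplane, the Euclidean distance from any $z\in K$ to the slice is at most $c\,|\ip{z,\eta}|$ for an instance-dependent constant $c$. This keeps the projected point inside $\cC^*$ (where the constraint holds for $\tilde\alpha^*$ alone) and controls the $V(\tilde\alpha^*)$-distance of the projection by $C\|\tilde\alpha^*\|_1\ip{\nu-\thetaopt,x^*}^2\le 2C\|\tilde\alpha^*\|_1/\lambda$, using only $\lambda_{\max}(V(\tilde\alpha^*))\le\|\tilde\alpha^*\|_1$. A triangle-inequality expansion then gives the stated bound. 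This route never touches $\alpha^*(x^*)$ and requires no positive-definiteness of $V(\tilde\alpha^*)$.
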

\begin{proof}
	Assume $2 C(\theta, \xX)  \|\tilde \alpha\|_1 \leq \lambda$, otherwise the claim is immediate. Let $\tilde \alpha^*(x) = \alpha^* \chf{x \neq x^*}$ be the optimal allocation on sub-optimal actions. We have
	\begin{align*}
		\tfrac{1}{2}\|\nu - \thetaopt \|_{V(\alpha_\lambda^*)}^2 = \tfrac{1}{2}	\|\nu - \thetaopt \|_{V(\tilde \alpha^*)}^2 + \tfrac{\lambda}{2}	\ip{\nu - \thetaopt, x^*}^2\,.
	\end{align*}
	If $\lambda \ip{\nu - \thetaopt, x^*}^2 \geq 2$ the claim follows. Hence we may assume $\ip{\nu - \thetaopt, x^*}^2 \leq 2\lambda^{-1}$. In other words, $\nu$ is in a $(2/\lambda)^{1/2}$-neighbourhood of the affine subspace, which is defined by $x^*$ and offset $\thetaopt$. Now we fix any $x \neq x^*$, such that $\nu \in \hh{x^*}{x}$ and define $\hH_x^* = \hh{x^*}{x} \cap \{\nu : \ip{\nu - \thetaopt, x^*} = 0\}$ as the intersection of the affine subspace with  $\hh{x^*}{x}$. This is the set of parameters in $\hh{x^*}{x}$, which is indistinguishable from observations of $x^*$. By definition, $\nu^* \in \hH_x^*$ satisfies $\ip{\nu^* - \thetaopt, x^*} = 0$, hence by definition of the optimal allocation,
	\begin{align*}
		\tfrac{1}{2}\|\nu^* - \thetaopt \|_{V(\tilde \alpha^*)}^2 = \tfrac{1}{2} \|\nu^* - \thetaopt \|_{V(\alpha^*)}^2 \geq 1\,.
	\end{align*} 
	We expect the same holds approximately for $\nu$ with $\ip{\nu - \theta^*, x^*}^2 \leq 2\lambda^{-1}$. Lemma \ref{lem:convex-subspace} with an appropriate shift of the parameter space and $\lambda_{\max}(V(\tilde \alpha^*) )  \leq \|\tilde \alpha^*\|_1$ imply
	\begin{align*}
		\min_{\nu^* \in  \hH_x^*} \|\nu - \nu^*\|_{V(\tilde \alpha^*)}^2 \leq C(\theta, \xX) \|\tilde \alpha^*\|_1 \ip{\nu - \thetaopt, x^*}^2 \leq 2\lambda^{-1} C(\theta, \xX)  \|\tilde \alpha\|_1 \leq 1 \,,
	\end{align*}
	where the last inequality is our case assumption. Considering that $\|\nu^* - \thetaopt \|_{V(\alpha^*)} \geq \|\nu - \nu^*\|_{V(\tilde \alpha^*)}$, we further get
	\begin{align*}
		\tfrac{1}{2}\|\nu - \thetaopt \|_{V(\alpha_\lambda^*)}^2& = \tfrac{1}{2}\|\nu - \thetaopt \|_{V(\tilde \alpha^*)}^2 + \tfrac{\lambda}{2}	\ip{\nu - \thetaopt, x^*}^2 \\
		&\geq 	\tfrac{1}{2}(\|\nu^* - \thetaopt \|_{V(\tilde \alpha^*)} - \|\nu - \nu^* \|_{V(\tilde \alpha^*)}) ^2+ \tfrac{\lambda}{2}	\ip{\nu - \thetaopt, x^*}^2\,.
	\end{align*}
	The case $\|\nu^* - \thetaopt \|_{V(\tilde \alpha^*)} \geq 2$ is again immediate, so we may assume $\sqrt{2} \leq \|\nu^* - \thetaopt \|_{V(\tilde \alpha^*)} \leq 2$, which leaves us with
	\begin{align*}
	\tfrac{1}{2}\|\nu - \thetaopt \|_{V(\alpha_\lambda^*)}^2 &\geq \tfrac{1}{2}\|\nu^* - \thetaopt \|_{V(\tilde \alpha^*)}^2 - \|\nu - \nu^* \|_{V(\tilde \alpha^*)}\|\nu^* - \thetaopt \|_{V(\tilde \alpha^*)} + \tfrac{\lambda}{2}	\ip{\nu - \thetaopt, x^*}^2\\
		&\geq 1 - 2  \|\nu - \nu^* \|_{V(\tilde \alpha^*)}  + \tfrac{\lambda}{2}\ip{\nu - \thetaopt, x^*}^2\\
		&\stackrel{(i)}{\geq}1 - 2 (C(\theta, \xX) \|\tilde \alpha\|_1 \ip{\nu - \thetaopt, x^*}^2)^{1/2} + \tfrac{\lambda}{2}	\ip{\nu - \thetaopt, x^*}^2 \\
		&\stackrel{(ii)}{\geq} 1 -  2 C(\theta, \xX) \|\tilde \alpha\|_1 \lambda^{-1} \,.
	\end{align*}
	For (i) we choose $\nu^*$ with $\|\nu - \nu^*\|_{V(\tilde \alpha^*)}^2 \leq C(\theta, \xX) \|\tilde \alpha\|_1 \ip{\nu - \thetaopt, x^*}^2$ and for (ii) we minimize over $\ip{\nu - \thetaopt, x^*}$. This completes the proof.
\end{proof}

\begin{lemma}[Asymptotic information ratio]\label{lem:ratio-bound-asymptotic} 
	Recall that $\beta_{s} = \|\hat \theta_s - \thetaopt \|_{V_s}^2$ and $m_s = \frac{1}{2}\min_{z \neq x^*} \|\hat \nu_s(z) - \hat \theta_s\|_{V_s}^2$. Assume that $4 \beta_s \leq m_s$ and $\beta_s \leq \beta_{s,s^2}$. Then,
	\begin{align*}
		\Psi_s(\mu_s) \leq 4 \delta_s \big(c^*  + \oO(\beta_s^{1/2}m_s^{-1/2} + \delta_s )\big) \,,
	\end{align*}
	for $\beta_s^{1/2}m_s^{-1/2} \rightarrow 0$ and $\delta_s \rightarrow 0$.
\end{lemma}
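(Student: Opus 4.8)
The plan is to follow the template of Lemma~\ref{lem:ratio-bound-instance}: upper bound $\min_\mu \Psi_s(\mu)$ by evaluating the information ratio at a distribution built from the optimal allocation of the lower bound \eqref{eq:lower-linear}. Since $\beta_s\le\beta_{s,s^2}$ and $\delta_s$ is small, \eqref{eq:gap-bound} gives $\Dmin\le 2\hat\Delta_s(x)$ for $x\ne x^*$, which forces $\hat x_s=x^*$ and hence $\hat\nu_s(z)\in\hh{x^*}{z}\subseteq\cC^*$ for every $z\ne x^*$. Let $\alpha_\lambda^*=\tilde\alpha^*+\lambda e_{x^*}$ be the truncated optimal allocation from Lemma~\ref{lem:truncated-optimal}, put $N=\|\alpha_\lambda^*\|_1=\|\tilde\alpha^*\|_1+\lambda$, and set $\rho_\lambda=\alpha_\lambda^*/N\in\sP(\xX)$, where the truncation level $\lambda$ will be tuned at the end. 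First I would use Lemma~\ref{lem:two-action-ids} to trade off $e_{x^*}$ (gap $\delta_s$, negligible information) against $\rho_\lambda$, which reduces the task, exactly as in \eqref{eq:tai}, to showing $4\delta_s\,\bar\Delta_s(\rho_\lambda)/I_s(\rho_\lambda)\le 4\delta_s(c^*+o(1))$; the corner case $\bar\Delta_s(\rho_\lambda)\le\delta_s$ is where the optimal trade-off probability clips to $1$ and one instead uses the crude bound $\Psi_s(\rho_\lambda)=\hat\Delta_s(\rho_\lambda)^2/I_s(\rho_\lambda)\le 4\delta_s^2/I_s(\rho_\lambda)$. The role of $\lambda$ is precisely to reconcile these two regimes: since $\bar\Delta_s(\rho_\lambda)\approx c^*/N$ is continuous and decreasing in $\lambda$, for $\delta_s$ small enough one can pick $\lambda=\lambda^\dagger\to\infty$ with $\bar\Delta_s(\rho_{\lambda^\dagger})=\delta_s$, sitting exactly on the boundary between the two cases.

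For the numerator, since $\Delta(x^*)=0$ we have $\bar\Delta_s(\rho_\lambda)=\tfrac1N\sum_x\tilde\alpha^*(x)\bar\Delta_s(x)$ and $\sum_x\tilde\alpha^*(x)\Delta(x)=c^*$ by definition of the lower bound. I would then control the error of the mean gap estimates: $|\bar\Delta_s(x)-\Delta(x)|\le\|\hat\theta_s-\theta^*\|_{V_s}\|x^*-x\|_{V_s^{-1}}=\beta_s^{1/2}\|x^*-x\|_{V_s^{-1}}$, together with $\|x^*\|_{V_s^{-1}}\le\delta_s\beta_{s,s^2}^{-1/2}$ (from $\delta_s\ge\beta_{s,s^2}^{1/2}\|x^*\|_{V_s^{-1}}$) and $\|z\|_{V_s^{-1}}^2=\oO(m_s^{-1}+\delta_s^2)$ for the suboptimal support actions (from $\|\hat\nu_s(z)-\hat\theta_s\|_{V_s}^2=\bar\Delta_s(z)^2/\|x^*-z\|_{V_s^{-1}}^2\ge 2m_s$ and $\bar\Delta_s(z)\le 1$). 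This yields $\bar\Delta_s(\rho_\lambda)\le\tfrac1N\big(c^*+\oO(\delta_s+\beta_s^{1/2}m_s^{-1/2})\big)$.

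For the denominator, dropping the optimistic term and using linearity of $V(\cdot)$ gives $I_s(\rho_\lambda)\ge\tfrac12\sum_{z\ne x^*}q_s(z)\|\hat\nu_s(z)-\hat\theta_s\|_{V(\rho_\lambda)}^2=\tfrac1{2N}\sum_{z}q_s(z)\|\hat\nu_s(z)-\hat\theta_s\|_{V(\alpha_\lambda^*)}^2$. Since $\hat\nu_s(z)\in\cC^*$, Lemma~\ref{lem:truncated-optimal} gives $\|\hat\nu_s(z)-\theta^*\|_{V(\alpha_\lambda^*)}^2\ge 2\big(1-\oO(\lambda^{-1})\big)$, and a triangle inequality transfers this from $\theta^*$ to $\hat\theta_s$ at the cost of $\|\hat\theta_s-\theta^*\|_{V(\alpha_\lambda^*)}$. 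I would bound this error via $\|\hat\theta_s-\theta^*\|_{V(\alpha_\lambda^*)}^2\le\beta_s\,\mathrm{tr}\!\big(V(\alpha_\lambda^*)V_s^{-1}\big)=\beta_s\sum_x\alpha_\lambda^*(x)\|x\|_{V_s^{-1}}^2=\oO(\beta_s/m_s+\delta_s)$, using the norm estimates above and $\lambda\|x^*\|_{V_s^{-1}}^2=\oO(\delta_s)$ for the chosen $\lambda$; since $4\beta_s\le m_s$ this is bounded away from $2$, so $\sum_z q_s(z)\|\hat\nu_s(z)-\hat\theta_s\|_{V(\alpha_\lambda^*)}^2\ge 2\big(1-\oO(\lambda^{-1}+\beta_s^{1/2}m_s^{-1/2})\big)$ and $I_s(\rho_\lambda)\ge\tfrac1N\big(1-\oO(\beta_s^{1/2}m_s^{-1/2})\big)$. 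Dividing numerator by denominator, the factors of $N$ cancel and $\Psi_s(\mu_s)\le 4\delta_s\big(c^*+\oO(\delta_s+\beta_s^{1/2}m_s^{-1/2})\big)$ (in the corner case the same conclusion comes out of $\Psi_s(\rho_{\lambda^\dagger})\le 4\delta_s^2/I_s(\rho_{\lambda^\dagger})\le 4\delta_s^2 N^\dagger(1+o(1))$ with $N^\dagger\asymp c^*/\delta_s$).

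I expect the main obstacle to be the geometry transfer in the denominator: the lower-bound constraint is a statement about $\theta^*$ measured in the $V(\alpha_\lambda^*)$-norm, whereas the least-squares estimator concentrates in the $V_s$-norm, so one must argue that $\hat\theta_s$ is close to $\theta^*$ in the former geometry too. This works because a large $m_s$ forces $V_s^{-1}$ to be small along the directions $x^*-z$ that $\alpha_\lambda^*$ charges, but making this quantitative — and, simultaneously, tuning $\lambda$ so that the truncation error $\oO(\lambda^{-1})$, the $x^*$-contribution $\lambda\|x^*\|_{V_s^{-1}}^2$ to the estimation error, and $\delta_s$ all stay of the advertised order — is the delicate part. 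A secondary nuisance is the bookkeeping around the clipping case $\bar\Delta_s(\rho_\lambda)\lesssim\delta_s$, and the fact that some of the auxiliary norm bounds (e.g.\ $\|z\|_{V_s^{-1}}=\oO(m_s^{-1/2})$) implicitly require the relevant actions to have been sampled, which links back to the analysis of the number of exploration rounds.
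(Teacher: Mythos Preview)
Your outline is essentially the paper's proof: pick the truncated optimal allocation $\alpha_\lambda^*=\tilde\alpha^*+\lambda e_{x^*}$, normalize, invoke Lemma~\ref{lem:two-action-ids} to reduce to $4\delta_s\bar\Delta_s(\alpha_\lambda^*)/I_s(\alpha_\lambda^*)$, bound the numerator via $|\bar\Delta_s(x)-\Delta(x)|\le\beta_s^{1/2}\|x^*-x\|_{V_s^{-1}}$ and Lemma~\ref{lem:ms}, bound the denominator via Lemma~\ref{lem:truncated-optimal}, and tune $\lambda$ so that $\bar\Delta_s(\tilde\mu_\lambda)=\delta_s$ (this eliminates your ``corner case'' entirely, since the constraint $\delta_s\le\bar\Delta_s(\tilde\mu_\lambda)$ is met with equality).

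The one place you diverge is exactly the obstacle you flagged. For the denominator, the paper does \emph{not} drop the optimistic term and then transfer from $\hat\theta_s$ to $\theta^*$ by a triangle inequality plus a trace bound on $\|\hat\theta_s-\theta^*\|_{V(\alpha_\lambda^*)}$. Instead it exploits the optimistic term directly: since $\beta_s\le\beta_{s,s^2}$, for every $z\in\xX$ one has $|\ip{\hat\theta_s-\theta^*,z}|\le\beta_{s,s^2}^{1/2}\|z\|_{V_s^{-1}}$, hence
\[
|\ip{\hat\nu_s(x)-\hat\theta_s,z}|+\beta_{s,s^2}^{1/2}\|z\|_{V_s^{-1}}\ \ge\ |\ip{\hat\nu_s(x)-\theta^*,z}|,
\]
which gives $I_s(\alpha_\lambda^*)\ge\tfrac12\sum_{x\ne x^*}q_s(x)\|\hat\nu_s(x)-\theta^*\|_{V(\alpha_\lambda^*)}^2\ge 1-2C(\xX,\theta)\|\tilde\alpha^*\|_1\lambda^{-1}$ in one line. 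So the ``geometry transfer'' you worried about is free, paid for by the very term you discarded. Your trace-bound route also works, but it is longer, and the auxiliary estimate $\|z\|_{V_s^{-1}}^2=\oO(m_s^{-1}+\delta_s^2)$ you need along the way is unnecessary here. Incidentally, that estimate does \emph{not} require any action to have been sampled; it follows straight from Lemma~\ref{lem:ms}, so your secondary concern is a non-issue.
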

\begin{proof}
	First note that the assumption $m_s \geq 4 \beta_s$ implies $\hat x_s = x^*$ by Lemma \ref{lem:ms}. Introduce the shorthand $\bar \Delta_s(x) = \ip{\hat \theta_s, \hat x_s - x}$ for the estimated mean gap and let $\tilde \mu \in \sP(\xX)$ be a distribution with $2 \delta_s \leq \hat \Delta(\tilde \mu) = \delta_s + \bar \Delta_s(\tilde \mu)$. Then, by Lemma \ref{lem:two-action-ids},
	\begin{align*}
		\min_{\mu \in \sP(\xX)}\Psi_{s}(\mu) \leq \min_{0 \leq p \leq 1} \frac{\big((1-p)\hat \Delta_s(x^*) + p \hat \Delta(\tilde \mu)\big)^2}{p I_s(\tilde \mu)} = \frac{4 \delta_{s} (\hDelta_{s}(\tilde \mu) - \delta_{s}) }{I_{s}(\tilde \mu)} = \frac{4 \delta_{s} \bar \Delta_{s}(\tilde \mu) }{I_{s}(\tilde \mu)}\,.
	\end{align*}
	Note that the last ratio is invariant in constant rescaling $\tilde \mu$, so we may plug in non-normalized allocations. Recall that $\tilde \alpha^*$ is the optimal allocation over suboptimal actions, as defined at the beginning of Appendix \ref{app:information-ratio}. We let $\alpha_\lambda^* = \tilde \alpha^* + \lambda \chf{x=x^*}$ be the truncated optimal allocation and $\tilde \mu_\lambda = \alpha_\lambda^*/(\|\tilde \alpha^*\|_1 + \lambda)$ be the corresponding normalized distribution. With Lemma \ref{lem:ms} , we get
	\begin{align*}
		\Delta(\tilde \mu_\lambda) - \bar \Delta_s(\tilde \mu_\lambda) \leq  \|\hat \theta_s - \theta^*\|_{V_s} \max_{x \neq x^*} \|x^* -x\|_{V_s^{-1}} \leq \frac{\beta_s^{1/2}}{(2m_s)^{1/2} - \beta_s^{1/2}} \leq \frac{2 \beta_{s}^{1/2}}{m_s^{1/2}}\,.
	\end{align*}
	The last inequality simplifies the expression with $4 \beta_s \leq m_s$. Note that $\Delta(\tilde \mu_\lambda) = \frac{c^*}{\|\tilde \alpha^*\|_1 + \lambda}$. Hence, to satisfy $\delta_s \leq \bar \Delta_s(\tilde \mu_\lambda)$, it is sufficient to satisfy the constraint,
	\begin{align*}
		\delta_s \leq \frac{c^*}{\|\tilde \alpha^*\|_1 + \lambda} - \frac{2\beta_s^{1/2}}{m_s^{1/2}}.
	\end{align*}
	At equality, we get
	\begin{align*}
		\lambda = \frac{c^*}{\delta_s + \frac{2\beta_s^{1/2}}{m_s^{1/2}}} - \|\tilde \alpha^*\|_1 \,.
	\end{align*}
	Note that as $\delta_s \rightarrow 0$ and $m_s \rightarrow \infty$, we get $\lambda \rightarrow \infty$ as expected. Next we compute the approximation errors. Using again Lemma \ref{lem:ms}, 
	\begin{align*}
		\bar \Delta(\alpha^*_\lambda) &= \Delta(\tilde \alpha^*) + \sum_{x \neq x^*} \tilde \alpha^*(x) \ip{\hat \theta_s - \thetaopt, x^* - x}\\
		&\leq c^*  +  \frac{\|\tilde \alpha^*\|_1 \beta_s^{1/2}}{(2m_s)^{1/2} + \beta_s^{1/2}} \leq c^* + 2 \|\tilde \alpha^*\|_1 \beta_s^{1/2} m_s^{-1/2} \,.
	\end{align*}
	To bound the approximation error of $I_s(\alpha^*_\lambda)$, note that $\beta_s = \|\hat \theta_s - \thetaopt\|_{V_s}^2 \leq \beta_{s,s^2}$ implies
	\begin{align*}
		I_s(\alpha^*_\lambda) &= \tfrac{1}{2}\sum_{z \in \xX} \alpha_\lambda^*(z) \sum_{x \neq x^*} q_s(x) \left(|\ip{\hat \nu_s(x) - \hat \theta_s, z}| + \beta_{s,s^2}^{1/2} \|z\|_{V_s^{-1}}\right)^2\\
		&\geq \tfrac{1}{2}\sum_{z \in \xX} \alpha_\lambda^*(z) \sum_{x \neq x^*} q_s(x) \ip{\hat \nu_s(x) - \thetaopt, z}^2\\
		&= \tfrac{1}{2}\sum_{x \neq x^*} q_s(x) \|\hat \nu_s(x) - \thetaopt\|_{V( \alpha_\lambda^*)}^2\\
		&\geq 1 - 2 C(\xX, \theta)\|\tilde \alpha^*\|_1\lambda^{-1}\,.
	\end{align*}
	The last step is by Lemma \ref{lem:truncated-optimal}. Finally, the proof is completed by using $\frac{c^* + A}{1 - B} \leq c^* + A + c^* B$, which yields
	\begin{align*}
		\Psi_s(\mu_s) \leq \frac{4 \delta_s \bar \Delta_s(\alpha_\lambda^*)}{I_s(\alpha_\lambda^*)} \leq 4 \delta_s \big(c^*  + 2 \|\tilde \alpha^*\|_1 \beta_s^{1/2}m_s^{-1/2} + 2 c^* C(\xX, \theta) \|\tilde \alpha^*\|_1 \lambda^{-1}\big) \,.
	\end{align*}
	Since $\lambda^{-1} = \oO\Big(c^{*-1} \big(\delta_s + 2\beta_s^{1/2} m_s^{-1/2}\big)\Big)$ for $\beta_s^{1/2}m_s^{-1/2} \rightarrow 0$ and $\delta_s \rightarrow 0$, we get
	\begin{align*}
		\Psi_s(\mu_s) \leq 4 \delta_s \big(c^*  + \oO(\beta_s^{1/2}m_s^{-1/2} + \delta_s )\big) \,.
	\end{align*}
\end{proof}

\subsection{Bounds on the Information Gain}

We start by proving a worst-case bound on the total information gain $\gamma_n = \sum_{s=1}^{s_n} I_s(x_s)$.
\begin{lemma}[Total information gain]\label{lem:info-bound-agnostic} 
	For any sequence $x_1, \dots, x_n$, the total information gain $\gamma_n = \sum_{s=1}^{s_n} I_{s}(x_s)$ is bounded as follows,
	\begin{align*}
		\gamma_n &\leq 2 \left(\beta_{s_n, n\log (n)} + \beta_{s_n,n\log(n)}^{1/2} + \beta_{s_n, s_n^2}\right) d \log(s_n) \leq \oO\big(d^2 \log(n)^2 \big)\,.
	\end{align*}
\end{lemma}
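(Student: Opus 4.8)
The plan is to expand each squared term in the definition $I_s(x) = \tfrac12 \sum_{z \neq \hat x_s} q_s(z)\,(|\ip{\hat\nu_s(z)-\hat\theta_s,x}| + \beta_{s,s^2}^{1/2}\|x\|_{V_s^{-1}})^2$ using $(a+b)^2 \le 2a^2 + 2b^2$, and to bound the two resulting pieces separately. The first piece, $\sum_z q_s(z)\ip{\hat\nu_s(z)-\hat\theta_s,x}^2$, should be controlled by the soft-min inequality \eqref{eq:softmin}; the second piece, $\beta_{s,s^2}\|x\|_{V_s^{-1}}^2$, is exactly the quantity that appears in the elliptic potential lemma (Lemma~\ref{lem:elliptic-potential}).

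Concretely, first I would fix an exploration round $s$ and the sampled action $x_s$, and write
\begin{align*}
I_s(x_s) \le \sum_{z \neq \hat x_s} q_s(z)\,\ip{\hat\nu_s(z)-\hat\theta_s,x_s}^2 + \beta_{s,s^2}\,\|x_s\|_{V_s^{-1}}^2\,.
\end{align*}
The cross-term in $\ip{\hat\nu_s(z)-\hat\theta_s,x_s}^2$ is at most $\|\hat\nu_s(z)-\hat\theta_s\|_{V_s}^2\,\|x_s\|_{V_s^{-1}}^2$ by Cauchy--Schwarz, so the first summand is bounded by $\big(\sum_z q_s(z)\|\hat\nu_s(z)-\hat\theta_s\|_{V_s}^2\big)\|x_s\|_{V_s^{-1}}^2$. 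By the soft-min inequality \eqref{eq:softmin}, $\sum_z q_s(z)\|\hat\nu_s(z)-\hat\theta_s\|_{V_s}^2 \le 2m_s + 2\log(k)/\eta_s$. Since $s$ is an exploration round, the exploitation condition \eqref{eq:exploitation} fails, so $2m_s \le \beta_{s,t\log t} \le \beta_{s_n, n\log n}$ (monotonicity of $\beta$ in its arguments and $s \le s_n$, $t \le n$); moreover $\log(k)/\eta_s = \sqrt{m_l}\log(k)/\log(k) = \sqrt{m_l} \le \beta_{s_n,n\log n}^{1/2}$ for the minimizing $l \le s$, after checking the definition $\eta_s = \min_{l\le s} m_l^{-1/2}\log(k)$ gives $\log(k)/\eta_s = \max_{l \le s} m_l^{1/2} \cdot (\text{something})$ — I need to be careful that the learning rate is set so that $2\log(k)/\eta_s \le \beta_{s_n,n\log n}^{1/2}$, which should follow since each $m_l$ for $l$ an exploration round is bounded by $\tfrac12\beta_{l,t_l\log t_l}$.

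Putting this together, for each exploration round $I_s(x_s) \le \big(\beta_{s_n,n\log n} + \beta_{s_n,n\log n}^{1/2} + \beta_{s_n,s_n^2}\big)\|x_s\|_{V_s^{-1}}^2$, where I used that $\|x_s\|_{V_s^{-1}}^2 \le 1$ is not needed — rather I keep the factor $\|x_s\|_{V_s^{-1}}^2$ and sum. Summing over $s = 1,\dots,s_n$ and applying the elliptic potential bound (Lemma~\ref{lem:elliptic-potential}), $\sum_{s=1}^{s_n}\|x_s\|_{V_s^{-1}}^2 \le 2\log\det(V_{s_n})/\det(\mathbf 1_d) \le 2d\log(s_n)$ (using $\diam(\xX)\le 1$ so $\det V_{s_n} \le (1 + s_n/d)^d$), gives $\gamma_n \le 2\big(\beta_{s_n,n\log n} + \beta_{s_n,n\log n}^{1/2} + \beta_{s_n,s_n^2}\big)d\log(s_n)$. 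Finally, $\beta_{s_n,n\log n} = 2\log(n\log n) + \log\det(V_{s_n}) + \oO(1) = \oO(d\log(s_n) + \log n)$ and $s_n \le n$, so the whole expression is $\oO(d^2\log(n)^2)$.

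The main obstacle I anticipate is the bookkeeping around the learning rate $\eta_s$: one must verify that the lower-order term $\log(k)/\eta_s$ from the soft-min inequality is genuinely dominated by $\beta_{s_n,n\log n}^{1/2}$ for \emph{every} exploration round $s \le s_n$, not just asymptotically. This requires using that $\eta_s = \min_{l \le s} m_l^{-1/2}\log(k)$ was chosen precisely so that $\log(k)/\eta_s = \max_{l\le s} m_l^{1/2}$, together with the fact that $m_l \le \tfrac12\beta_{l, t_l\log t_l} \le \tfrac12\beta_{s_n, n\log n}$ on exploration rounds — so the choice of $\eta_s$ is exactly calibrated to make this term harmless. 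Everything else (expanding the square, Cauchy--Schwarz, elliptic potential, the explicit form of $\beta$) is routine.
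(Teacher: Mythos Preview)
Your proposal is correct and follows essentially the same route as the paper: expand $(a+b)^2 \le 2a^2+2b^2$, apply Cauchy--Schwarz to get the factor $\|x_s\|_{V_s^{-1}}^2$, use the soft-min inequality to replace the $q_s$-average by $2m_s + 2\log(k)/\eta_s$, bound $2m_s \le \beta_{s_n,n\log n}$ via the failed exploitation condition and $\log(k)/\eta_s = \max_{l\le s} m_l^{1/2} \le \beta_{s_n,n\log n}^{1/2}$ via the learning-rate choice, and finish with the elliptic potential lemma. The bookkeeping concern you flag about $\eta_s$ is exactly the point the paper handles, and your resolution (that $\eta_s$ is calibrated so $\log(k)/\eta_s = \max_{l\le s} m_l^{1/2}$, each $m_l$ bounded on exploration rounds) is the right one.
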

\begin{proof}
	Note that
	\begin{align*}
		\gamma_n = \sum_{s=1}^{s_n} I_s(x_s) &= \tfrac{1}{2}\sum_{s=1}^{s_n} \bigg(\sum_{x \neq \hat x_{s}} q_{s}(x) |\ip{\hat \nu_{s}(x) - \hat \theta_{s}, x_{s}}| + \beta_{s,s^2}^{1/2} \|x_s\|_{V_s^{-1}}\bigg)^2\\
		&\leq \sum_{s=1}^{s_n} \sum_{x \neq \hat x_{s}} q_{s}(x) \ip{\hat \nu_{s}(x) - \hat \theta_{s}, x_{s}}^2 + \beta_{s,s^2} \|x_s\|_{V_s^{-1}}^2\\
		&\stackrel{(i)}{\leq} \sum_{s=1}^{s_n} \sum_{x \neq \hat x_{s}} q_{s}(x) \big(\|\hat \nu_{s}(x) - \hat \theta_s\|_{V_s}^2 + \beta_{s,s^2}\big) \|x_{s}\|_{V_s^{-1}}^2\\
		&\stackrel{(ii)}{\leq} \sum_{s=1}^{s_n} \left( \min_{x \neq \hat x_{s}}  \|\hat \nu_s(x) - \hat \theta_s\|_{V_s}^2 + \frac{2\log(k)}{\eta_{s}} + \beta_{s,s^2}\right) \|x_{s}\|_{V_s^{-1}}^2 \\
		&\stackrel{(iii)}{\leq} \left(\beta_{s_n, n\log (n)} +  \beta_{s_n,n\log(n)}^{1/2} + \beta_{s_n, s_n^2}\right) \sum_{s=1}^{s_n} \|x_{s}\|_{V_s^{-1}}^2\\
		&\stackrel{(iv)}{\leq}  2 \left(\beta_{s_n, n\log (n)} + \beta_{s_n,n\log(n)}^{1/2} + \beta_{s_n, s_n^2}\right) d \log(s_n)
	\end{align*}
	Step $(i)$ uses Cauchy-Schwarz, $(ii)$ the soft-min bound for the $q$-weights (see Lemma \ref{lem:softmin}). For $(iii)$ we used that $m_s = \frac{1}{2}\min_{x \neq \hat x_s} \|\hat \nu_{s}(x) - \hat \theta_s\|_{V_s}^2 \leq \frac{1}{2} \beta_{s_n, n \log(n)}$ holds in all exploration rounds and the choice  $\eta_s = \lrate$ and lastly, (iv) bounds the elliptic potential (Lemma \ref{lem:elliptic-potential}). Considering that $\beta_{s,1/\delta} = 2 \log \frac{1}{\delta} + \oO(d \log s)$ completes the proof.
\end{proof}

\begin{lemma}[Constant information gain]\label{lem:const-information}
	Assume that $\hat x_s = x^*$ and $2 \delta_s \leq \hat \Delta_s(x)$ for all $x \neq \hat x_s$. If $z_s \neq x^*$ is contained in the support of the IDS distribution, $ \supp(\mu_s)$, then the information gain of $z_s$ is at least a constant,
	\begin{align*}
		I_s(z_s) \geq \frac{\Delta_{\min}^2}{8 (8d + 9)}\,.
	\end{align*}
\end{lemma}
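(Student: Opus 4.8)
The plan is to read the value of $I_s(z_s)$ off the support characterization of the IDS distribution (Lemma~\ref{lem:ids-support}) and then lower bound the resulting expression using the instance-dependent information-ratio bound (Lemma~\ref{lem:ratio-bound-instance}). Concretely, Lemma~\ref{lem:ids-support} says that every $z_s \in \supp(\mu_s)$ satisfies $g_s(z_s) = \tfrac12 \hat\Delta_s(\mu_s)$, i.e.\ $\hat\Delta_s(z_s) - \frac{\Psi_s^*}{2\hat\Delta_s(\mu_s)} I_s(z_s) = \tfrac12\hat\Delta_s(\mu_s)$, where $\Psi_s^* = \Psi_s(\mu_s)$. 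Rearranging yields the identity
\[
I_s(z_s) = \frac{2\hat\Delta_s(\mu_s)}{\Psi_s^*}\Big(\hat\Delta_s(z_s) - \tfrac12\hat\Delta_s(\mu_s)\Big)\,,
\]
so it remains to lower bound the three quantities on the right.

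First I would control $\hat\Delta_s(\mu_s)$. Because $\hat x_s$ is the empirical maximizer, $\hat\Delta_s(x) = \delta_s + \ip{\hat x_s - x, \hat\theta_s} \ge \delta_s$ for every $x$, hence $\hat\Delta_s(\mu_s) \ge \delta_s$; on the other hand Lemma~\ref{lem:almost-greedy} gives $\hat\Delta_s(\mu_s) \le 2\delta_s$. Combining the upper bound with the hypothesis $2\delta_s \le \hat\Delta_s(z_s)$ (valid since $z_s \neq \hat x_s$) gives $\hat\Delta_s(z_s) - \tfrac12\hat\Delta_s(\mu_s) \ge \hat\Delta_s(z_s) - \delta_s \ge \tfrac12\hat\Delta_s(z_s)$. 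For the ratio, Lemma~\ref{lem:ratio-bound-instance} yields $\Psi_s^* \le \frac{4\delta_s(8d+9)}{\Delta_{\min}}$, so together with $\hat\Delta_s(\mu_s) \ge \delta_s$ we get $\frac{2\hat\Delta_s(\mu_s)}{\Psi_s^*} \ge \frac{\Delta_{\min}}{2(8d+9)}$. Substituting both estimates into the identity gives $I_s(z_s) \ge \frac{\Delta_{\min}}{4(8d+9)}\,\hat\Delta_s(z_s)$.

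The final step is to note that $\hat\Delta_s(z_s) \ge \tfrac12\Delta_{\min}$: since $z_s \neq x^* = \hat x_s$, the gap-estimate bound \eqref{eq:gap-bound} gives $\Delta_{\min} \le \Delta(z_s) \le 2\hat\Delta_s(z_s)$. Plugging this in produces $I_s(z_s) \ge \frac{\Delta_{\min}^2}{8(8d+9)}$, which is exactly the claimed bound.

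The only delicate point — and hence the main thing to get right — is that both Lemma~\ref{lem:ratio-bound-instance} and inequality \eqref{eq:gap-bound} hold under the concentration event $\|\hat\theta_s - \theta^*\|_{V_s}^2 \le \beta_{s,s^2}$. This is precisely the event under which the lemma is invoked in the asymptotic analysis, and it is what makes the two stated hypotheses ($\hat x_s = x^*$ and $2\delta_s \le \hat\Delta_s(x)$ for all $x \neq \hat x_s$) plausible in the first place; so I would either add it explicitly to the assumptions or observe that it is subsumed by them. Apart from that, the argument is just the rearrangement above together with elementary inequalities.
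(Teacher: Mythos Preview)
Your proof is correct and follows essentially the same route as the paper's: both extract the identity $I_s(z_s) = \frac{2\hat\Delta_s(\mu_s)}{\Psi_s(\mu_s)}\big(\hat\Delta_s(z_s) - \tfrac12\hat\Delta_s(\mu_s)\big)$ from Lemma~\ref{lem:ids-support}, bound $\delta_s \le \hat\Delta_s(\mu_s) \le 2\delta_s$ via Lemma~\ref{lem:almost-greedy}, apply Lemma~\ref{lem:ratio-bound-instance} for $\Psi_s(\mu_s)$, and finish with $2\hat\Delta_s(z_s) \ge \Delta_{\min}$. Your remark that the concentration event $\beta_s \le \beta_{s,s^2}$ is implicitly needed (for both Lemma~\ref{lem:ratio-bound-instance} and \eqref{eq:gap-bound}) is well taken; the paper uses it without restating it in the lemma's hypotheses.
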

\begin{proof}
	Note that by $z_s \in \supp(\mu_s)$ and  Lemma \ref{lem:ids-support},
	\begin{align*}
		I_s(z_s) &= \left(\hat \Delta_s(z_s) - \frac{\hat \Delta_s(\mu_s)}{2} \right) \frac{2 \hat \Delta_s(\mu_s)}{\Psi_s} \geq \left(\hat \Delta_s(z_s) - \delta_s \right) \frac{2 \delta_s}{\Psi_s} \geq \frac{\hat \Delta_s(z_s) \delta_s}{\Psi_s}\,.
	\end{align*}	
	We first used that $\delta_s \leq \hat \Delta_s(\mu) \leq 2 \delta_s$ (Lemma \ref{lem:almost-greedy}) and then the assumption that $2 \delta_s \leq \hat \Delta_s(z_s)$. Further, $2 \hat \Delta_s(z_s) \geq \Delta_{\min}$, and by Lemma \ref{lem:ratio-bound-instance},
	\begin{align*}
		\Psi_s(\mu_s) \leq \frac{4 \delta_s (8d + 9)}{\Delta_{\min}}\,.
	\end{align*}
	Combining the inequalities yields the result.
\end{proof}


\begin{lemma}\label{lem:asymptotic-information}
	Let $q_s^*(z) \propto \exp(-\eta_s \|\hat \nu_s(z) - \hat \theta_s\|_{V_s}^2)$ be mixing weights defined on $\xX \setminus x^*$ (also when $\hat x_s \neq x^*$), where  $\hat \nu_s(z) = \argmin_{\nu \in \hh{x^*}{z}} \|\nu - \hat \theta_s\|_{V_s}^2$ for all $z \neq x^*$. Define $l_s(q_s) = \sum_{z \neq x^*} q_s^*(z) \ip{\hat \nu_s(z) - \theta_s}^2$ and let $J_s = \chf{24^2 \eta_s \beta_s\|x_s\|_{V_s^{-1}}^2 \leq 1; \beta_s \|x_s\|_{V_s^{-1}}^2 \leq 1}$. Then
	\begin{align*}
		\EE\Big[\tsum_{s=1}^{s_n} J_s l_s(q_s^*) - \min_{x \neq x^*} \|\hat \nu_{s_n}(x) - \hat \theta_{s_n}\|_{V_{s_n}}^2\Big] \leq  \oO\big( \log(n)^{1/2}\EE[\log(s_n)^2]\big) \,.
	\end{align*}
\end{lemma}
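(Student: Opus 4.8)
The plan is to recognize the left-hand side as the regret of the exponential-weights (Hedge) algorithm run online over the suboptimal arms $z \neq x^*$, where at exploration round $s$ the ``loss'' of arm $z$ is the quadratic $\ell_s(z) = \ip{\hat\nu_s(z) - \hat\theta_s, x_s}^2$, and the cumulative loss of arm $z$ is (up to the $\mathbf 1_d$ regularizer) exactly $\|\hat\nu_s(z) - \hat\theta_s\|_{V_{s}}^2 = \sum_{i<s}\ip{\hat\nu_s(z)-\hat\theta_s,x_i}^2$. The subtlety is that the comparator $\hat\nu_s(z)$ changes with $s$ (both the feasible set $\hh{x^*}{z}$ and the center $\hat\theta_s$ move), so this is not literally a fixed-loss online problem; I would handle this the way the standard Hedge proof does, tracking the potential $\Phi_s = \frac{1}{\eta_s}\log\sum_{z\neq x^*}\exp(-\eta_s \|\hat\nu_s(z)-\hat\theta_s\|_{V_s}^2)$ and bounding the per-step change. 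First I would fix $\eta_s \equiv \eta$ within a phase (or absorb the slowly-varying $\eta_s = \min_{l\le s} m_l^{-1/2}\log(k)$ into lower-order terms, using that $\eta_s$ is monotone and bounded above by $\log(k)$ when $m_s$ is bounded below), then telescope.

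The key steps, in order: (i) Write $\|\hat\nu_s(z)-\hat\theta_s\|_{V_{s+1}}^2 = \|\hat\nu_s(z)-\hat\theta_s\|_{V_s}^2 + \ip{\hat\nu_s(z)-\hat\theta_s, x_s}^2$ when we keep the center and feasible set frozen at step $s$; define $\tilde\nu_{s+1}(z)$ to be the minimizer using the frozen center $\hat\theta_s$ but updated matrix $V_{s+1}$, and argue that moving the center from $\hat\theta_s$ to $\hat\theta_{s+1}$ and the feasible halfspace from $\hh{x^*}{z}$ using $\hat\theta_s$-data only decreases the objective (the true $\hat\nu_{s+1}(z)$ is a further minimizer), so $\|\hat\nu_{s+1}(z)-\hat\theta_{s+1}\|_{V_{s+1}}^2 \leq \|\tilde\nu_{s+1}(z)-\hat\theta_{s+1}\|_{V_{s+1}}^2 \leq \|\hat\nu_s(z)-\hat\theta_s\|_{V_{s+1}}^2 + (\text{center-shift error})$. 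The center-shift error involves $\|\hat\theta_{s+1}-\hat\theta_s\|_{V_{s+1}}$, which on the event $J_s$ (roughly $\beta_s\|x_s\|_{V_s^{-1}}^2 \le 1$, so a rank-one update moves the estimate by a bounded amount) is $\oO(\beta_s\|x_s\|_{V_s^{-1}}^2)$, summable to $\oO(d\log(s_n) \cdot \beta_{s_n})= \oO(d\log(s_n)\log(n))$ — this is where the $\log(n)^{1/2}$ (from $\beta_s^{1/2}$) and $\log(s_n)$ factors enter. (ii) Then the standard Hedge potential argument: $\Phi_{s+1} - \Phi_s \leq -\ell_s(q_s^*) + \frac{\eta_s}{2}\sum_z q_s^*(z)\ell_s(z)^2 + (\text{comparator-drift and center-shift corrections})$, where the quadratic-loss second-order term is controlled by the indicator $J_s$: on $J_s$ we have $\eta_s \ell_s(z) = \eta_s \ip{\hat\nu_s(z)-\hat\theta_s,x_s}^2 \leq \eta_s \|\hat\nu_s(z)-\hat\theta_s\|_{V_s}^2 \|x_s\|_{V_s^{-1}}^2$, and since $\|\hat\nu_s(z)-\hat\theta_s\|_{V_s}^2$ can be $\oO(\beta_{s_n,n\log n}) = \oO(\log n)$ while $\eta_s \approx m_s^{-1/2}\log k$ with $m_s \geq$ the exploration threshold $\approx \log n$, one gets $\eta_s\ell_s(z) \lesssim \log(k)$, keeping the Hedge stability term of the right order; the factor $24^2$ in the definition of $J_s$ is exactly the slack needed to make $e^{x}-1-x \le x^2$ type bounds go through. (iii) Telescoping $\sum_s (\Phi_{s+1}-\Phi_s)$ gives $\Phi_{s_n+1} - \Phi_1 \leq -\sum_s J_s \ell_s(q_s^*) + (\text{lower-order})$; then lower-bound $\Phi_{s_n+1} \geq -\min_{z\neq x^*}\|\hat\nu_{s_n}(z)-\hat\theta_{s_n}\|_{V_{s_n}}^2 - \frac{\log k}{\eta_{s_n}}$ (soft-min inequality, Eq.~\eqref{eq:softmin}) and upper-bound $\Phi_1 \leq \frac{\log k}{\eta_1} + \oO(1)$ (the matrix starts at $\mathbf 1_d$, so $\|\hat\nu_1(z)-\hat\theta_1\|^2_{V_1}$ is $\oO(1)$). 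Rearranging yields the claimed bound, with the $\log(s_n)^2$ coming from summing the $\oO(d\log(s))$ center-shift terms against a $\beta^{1/2}\sim\log(n)^{1/2}$ prefactor and from the $\log(k)/\eta$ boundary terms. (iv) Finally take expectations; all the pathwise error terms are already in the stated form, and $\EE[\log(s_n)^2]$ is left as is (it is controlled separately by Lemma~\ref{lem:effective-horizon}).

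The main obstacle I expect is step (i): carefully controlling the drift of the \emph{moving comparator} $\hat\nu_s(z)$. Unlike textbook Hedge, both the quadratic form $V_s$ and the reference point $\hat\theta_s$ — and hence the projection $\hat\nu_s(z)$ onto the halfspace $\hh{x^*}{z}$ — change every round, so the naive telescoping of $\|\hat\nu_s(z)-\hat\theta_s\|_{V_s}^2$ has cross terms. The clean way through is to exploit that the map $s \mapsto \min_{\nu\in\hh{x^*}{z}}\|\nu-\hat\theta_s\|_{V_s}^2$ is, up to the center shift, monotone in $s$ because $V_{s+1}\succeq V_s$ (adding a rank-one positive term can only increase a fixed quadratic form, and the minimizer over a fixed convex set can only increase), so one only pays for the $\hat\theta$-shift, which the event $J_s$ was designed to bound via the rank-one update formula $\hat\theta_{s+1}-\hat\theta_s = \frac{(y_s - \ip{x_s,\hat\theta_s})}{1+\|x_s\|_{V_s^{-1}}^2}V_{s+1}^{-1}x_s$ together with the sub-Gaussian tail on $y_s - \ip{x_s,\hat\theta_s}$ absorbed into the $\beta_s$ bound. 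Getting the bookkeeping of these error terms to collapse to exactly $\oO(\log(n)^{1/2}\log(s_n)^2)$ rather than something polynomially worse is the delicate part.
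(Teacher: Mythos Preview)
Your high-level plan---recognize the quantity as exponential-weights regret with cumulative ``loss'' $L_s(z)=\|\hat\nu_s(z)-\hat\theta_s\|_{V_s}^2$, telescope a log-sum-exp potential, and control the per-step stability plus comparator drift---is exactly the paper's approach (cast there in FTRL language). However, your treatment of the drift in step~(i) has the sign backwards, and this is the crux of the argument.

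Concretely, write $S_s(z)=L_s(z)+\ell_s(z)-L_{s+1}(z)$. In the telescope you need each summand $\ell_s(q_s^*)-(\Phi_{s+1}-\Phi_s)$ to be small, and unwinding the log-sum-exp gives (up to second-order terms) $S_s(q_s^*)$ on the right-hand side. So you need an \emph{upper} bound on $S_s$. Your monotonicity argument---freeze the center, observe $\min_\nu\|\nu-\hat\theta_s\|_{V_{s+1}}^2\in[L_s,\,L_s+\ell_s]$, then pay a center-shift error---yields $L_{s+1}\le L_s+\ell_s+\text{(error)}$, i.e.\ a \emph{lower} bound $S_s\ge -\text{(error)}$. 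The dangerous direction is the opposite: $L_{s+1}$ can drop relative to $L_s+\ell_s$ both because re-minimizing in $V_{s+1}$ gains up to $\ell_s$ and because the noisy update can push $\hat\theta_{s+1}$ toward the half-space. The paper controls $S_s$ from above by a different decomposition (its Lemma~\ref{lem:shift-term}): writing $\omega_s=\hat\nu_s-\hat\theta_s$, one expands $\|\omega_s\|_{V_{s+1}}^2-\|\omega_{s+1}\|_{V_{s+1}}^2$ and uses the first-order optimality of the projection, $\ip{\hat\nu_s-\hat\nu_{s+1},\,V_s(\hat\nu_s-\hat\theta_s)}\le 0$, to kill the term that your monotonicity argument cannot touch. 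What remains is $2\ip{\hat\theta_{s+1}-\hat\theta_s,V_s\omega_s}$ plus a quadratic residual, which after the rank-one update formula and taking $\EE_s[\cdot]$ gives $\oO\big(\|x_s\|_{V_s^{-1}}^2(m_s^{1/2}\beta_s^{1/2}+\beta_s+1)\big)$; summing with the elliptic potential lemma and $m_s\le\beta_{s_n,n\log n}$ yields the $\oO(\log(n)^{1/2}\log(s_n)^2)$ you are after. The projection inequality is the missing ingredient.

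A second, smaller gap is in step~(ii). You propose the bounded-loss Hedge inequality $e^{-x}\le 1-x+x^2/2$, arguing $\eta_s\ell_s(z)$ is bounded on $J_s$. But the relevant increment in the potential is $\eta_s(L_{s+1}-L_s)$, which contains the Gaussian noise $\epsilon_s$ through $\hat\theta_{s+1}-\hat\theta_s$ and is unbounded pathwise; the event $J_s$ does not truncate $\epsilon_s$. The paper instead bounds the full conditional exponential moment $\EE_s\big[\sum_{i\ge2}|\eta_s(L_{s+1}-L_s)|^i/i!\big]$ directly from subgaussian moment bounds (Lemmas~\ref{lem:difference-term} and~\ref{lem:exponential-sum}); the constant $24^2$ in $J_s$ is calibrated to make that series converge. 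Your ``absorb the tail into $\beta_s$'' suggestion does not work because $\beta_s=\|\hat\theta_s-\theta^*\|_{V_s}^2$ is $\fF_s$-measurable and says nothing about $\epsilon_s$.
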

\begin{proof}
	The statement is a regret bound for the exponential weights learner that defines the $q_s^*$-weights,  excluding steps where $J_s=0$. The difference to standard online learning bounds is that the cumulative loss $L_s(x) = \frac{1}{2}\|\hat \nu_s(x) - \hat \theta_s\|_{V_s}^2$, which defines the mixing weights and the baseline, does not exactly equal the sum of instantaneous loss $\sum_{s=1}^{s_n} l_s(x)$. For the analysis we make use of well-known connections between the exponential weights algorithm and the mirror descent framework, in particular the \emph{follow the regularized leader} (FTRL) algorithm \citep{shalev2007online}. To this end, let $\psi(q) = \sum_{x \neq x^*} q(x) \log(q(x))$ be the entropy function defined for $q \in \sP(\xX \setminus \hat x_s)$. For learning rate $\eta > 0$, we define
	\begin{align*}
		\psi_\eta(q) = \frac{1}{\eta}\left(\psi(q) - \min_{q' \in \sP(\xX \setminus x^*)} \psi(q')\right) \,.
	\end{align*}
	We denote $\psi_s = \psi_{\eta_s}$.	The choice of mixing weights $q_s^*$ can be equivalently written as
	\begin{align*}
		q_s^* = \argmin_{q \in \sP(\xX \setminus x^*)} L_s(q) + \psi_{s}(q)\,. 
	\end{align*}
	Denote $\Lambda_n = \sum_{s=1}^{s_n} J_s l_s(q_s^*) - \min_{x \neq x^*} \|\hat \nu_{s_n}(x) - \hat \theta_{s_n}\|_{V_{s_n}}^2$. The following inequality is easily verified by telescoping \citep[c.f. Lemma 7.1][]{orabona2019modern},
		\begin{align*}
		\Lambda_n \leq - \frac{1}{\eta_{s_n}} \min_{q'} \psi(q') + \sum_{s=1}^{s_n} \big([L_{s} + J_s l_s + \psi_s](q_s) - [L_{s+1} + \psi_{s+1}](q_{s+1}) \big)\,.
	\end{align*}
	For the first term, we immediately get $-\frac{1}{\eta_s} \min_{q'} \psi(q') \leq \frac{\log(k)}{\eta_{s_n}}$. 
	The second term is often referred to as stability term.
	We first address steps $s$ where $J_s=1$. Define $\tilde q_{s+1} = \argmin_{q \in \sP(\xX \setminus x^*)}[L_{s+1} + \psi_{s}](q) \propto \exp(-\eta_s L_{s+1})$. Using that the learning rate is decreasing, we get
	\begin{align}
	&\+[L_{s} + l_s + \psi_s](q_s) - [L_{s+1} + \psi_{s+1}](q_{s+1}) \nonumber\\
		&\leq [L_{s + 1} + \psi_s](q_s) - [L_{s+1} + \psi_{s}](\tilde q_{s+1}) + [L_{s} + l_s - L_{s+1}](q_s) \,. \label{eq:stability}
	\end{align}
	Note that $L_{s+1}$ exhibits an intricate dependence on the outcome $y_s$, whereas all other quantities appearing in the last display are $\fF_s$-predictable. Using that $\tilde q_{s+1}$ is a minimizer of $L_{s+1} + \psi_s$ and the definition of the Bregman divergence $D_{\psi}(p\|q) = \psi(p) - \psi(q) -\ip{\nabla \psi(q), p-q}$, we find
	\begin{align*}
		[L_{s + 1} + \psi_s](q_s^*) - [L_{s+1} + \psi_{s}](\tilde q_{s+1}) = \frac{1}{\eta_s} D_{\psi_s}(q_s^*, \tilde q_{s+1}) = \frac{1}{\eta_s}\sum_{x \neq x^*} q_s^*(x) \log \frac{q_s^*(x)}{\tilde q_{s+1}(x)}
	\end{align*}
	Using that $\log(x) \leq x - 1$ for all $x > 0$, we find
	\begin{align*}
		\sum_{x \neq x^*} q_s^* \log \frac{q_s^*}{\tilde q_{s+1}} &= \eta_s [L_{s+1} - L_{s}](q_s^*) + \log \bigg(\sum_{x \neq x^*} q_s^* \exp\big(-\eta_s (L_{s+1} - L_{s})\big)\bigg)\\
		&\leq -1 + \eta_s [L_{s+1} - L_{s}](q_s^*) + \sum_{x \neq x^*} q_s^* \exp\big(-\eta_s (L_{s+1} - L_{s}))\\
		&=  \sum_{x \neq x^*} q_s^*(x)\sum_{i=2}^\infty \frac{(-\eta_s (L_{s+1} - L_{s}))^i}{i!}
	\end{align*}
	A technical calculation which directly bounds the moments of the subgaussian noise under the conditional expectation $\EE[\cdot |\fF_s]$ with the condition $J_s=1$, is summarized in Lemma \ref{lem:exponential-sum}. This yields
	\begin{align*}
		&\+ \sum_{s=1}^{s_n} J_s \EE\B[ [L_{s + 1} + \psi_s](q_s^*) - [L_{s+1} + \psi_{s}](\tilde q_{s+1})\B|\fF_s\B]\\
		&\leq \sum_{s=1}^{s_n}\frac{J_s}{\eta_s} \sum_{x \neq x^*} q_s^*(x)\EE\B[\sum_{i=2}^\infty \frac{(-\eta_s (L_{s+1}(x) - L_{s}(x)))^i}{i!}\B|\fF_s\B]\\
		&\leq \sum_{s=1}^{s_n} \sum_{x \neq x^*} q_s^*(x) \oO\Big(\eta_s \big(\beta_s \|x_s\|_{V_s^{-1}}^2 + \|\hat \nu_s(x) - \hat \theta_s \|_{V_s}^2 \|x_s\|_{V_s^{-1}}^2 \big)\Big)\\
		&\leq \oO\Big(\log(n)^{1/2}\log(s_n)^2\Big)
	\end{align*}
	The last step makes use of Lemma \ref{lem:softmin}, $\eta_s m_s \leq \beta_{s_n,n \log n}^{1/2} \leq \oO(\log(n)^{1/2} + \log(s_n)^{1/2})$ and Lemma \ref{lem:softmin}.
	Going back to \eqref{eq:stability}, still for the case where $J_s=1$, it remains to bound the shift term $S_s = L_s + l_s - L_{s+1}$. We have
	\begin{align*}
		\EE[S_s(q_s^*)|\fF_s] &\stackrel{(i)}{\leq} 2 \|x_s\|_{V_s^{-1}}^2 \big( \tsum_{x \neq x^*} q_s\|\hat \nu_s(x)-\hat \theta_s\|_{V_s}\beta_s^{1/2} + \beta_s + 1\big)\\
		&\stackrel{(ii)}{\leq} 2 \|x_s\|_{V_s^{-1}}^2 \Big( \sqrt{\tsum_{x \neq x^*} q_s\|\hat \nu_s(x)-\hat \theta_s\|_{V_s}^2}\beta_s^{1/2} + \beta_s + 1\Big)\\
		&\stackrel{(iii)}{\leq} 2 \|x_s\|_{V_s^{-1}}^2 \Big( \big((m_s + \log(k)/\eta_{s}) \beta_s\big)^{1/2} + \beta_s + 1\Big)
	\end{align*}
	Here, (i) follows from the Lemma \ref{lem:shift-term}, Cauchy-Schwarz and taking the expectation; (ii) is Jensen's inequality and (iii) is the softmin inequality (Lemma \ref{lem:softmin}).
	Hence, using that $m_s \leq \beta_{s_n, n\log(n)} \leq \oO(\log(n) + \log(s_n))$ and the elliptic potential lemma (Lemma \ref{lem:elliptic-potential}), we find
	\begin{align*}
		\sum_{s=1}^{s_n} \EE[S_s(x)|\fF_s] &\leq \oO\big(\log(s_n)^2 \log(n)^{1/2}\big)\,.
	\end{align*}
	Lastly, we address \eqref{eq:stability} for the case $J_s = 0$, which then reads
	\begin{align}
			&\+[L_{s} + \psi_s](q_s) - [L_{s+1} + \psi_{s+1}](q_{s+1}) \leq [L_{s} - L_{s+1}](q_{s+1})\,.
	\end{align} 
	We can reuse Lemma \ref{lem:difference-term} to find
	\begin{align*}
		 \EE_s[[L_{s} - L_{s+1}](x)] &\leq \oO\big(\beta_s \|x_s\|_{V_s^{-1}}^2  +  |\ip{\hat \nu_{s} - \hat \theta_{s},  x_s}|  + \ip{\hat \nu_s - \hat \theta_s,x_s}^2\big)\\
		  &\leq \oO\big(\beta_s \|x_s\|_{V_s^{-1}}^2  + 1)\big)
	\end{align*}
	Using that when $J_s=0$ we have $1 \leq \beta_s \|x_s\|_{V_s}^2$, or $1 \leq 24^2 \eta_s \beta_s \|x_s\|_{V_s^{-1}}^2$, we can sum up these terms to 
	\begin{align*}
		\sum_{s=1}^{s_n} \EE_s[L_s - L_{s+1}](x) \leq \sum_{s=1}^{s_n} \oO(\beta_s \|x_s\|_{V_s^{-1}}^2) \leq \oO(\log(s_n)^2)
	\end{align*}
	The claim follows.
\end{proof}

\begin{lemma}\label{lem:shift-term}
	Let $L_s(x) = \|\hat \nu_s(x) - \hat \theta_s\|_{V_s}^2$ defined for $x \neq x^*$ and assume that $\ip{\nu - \thetaopt, x} \leq 1$ for all $\nu \in \mM$ and $x \in \xX$. Then
	\begin{align*}
		[L_s + l_s - L_{s+1}](x) \leq 2 \ip{\hat \nu_s(x) - \hat \theta_s, x_s}\frac{\epsilon_s + \ip{x_s, \thetaopt - \hat \theta_s}}{1 + \|x_s\|_{V_{s}^{-1}}^2} + 2 \|x_s\|_{V_s^{-1}}^2(1 + \beta_s)
	\end{align*}
\end{lemma}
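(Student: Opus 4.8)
The plan is to combine the exact recursive least-squares update with the observation that $\hat\nu_s(x)$ and $\hat\nu_{s+1}(x)$ minimize the $2$-strongly convex quadratic $\nu\mapsto\|\nu-\hat\theta_s\|_{V_s}^2$ (resp.\ $\|\nu-\hat\theta_{s+1}\|_{V_{s+1}}^2$) over the \emph{same} fixed convex set $\hh{x^*}{x}$ — fixed because $\hh{x^*}{x}=\{\nu\in\mM:\ip{x-x^*,\nu}\ge 0\}$ does not depend on $s$. Abbreviate $w=\|x_s\|_{V_s^{-1}}^2$, $r_s=y_s-\ip{x_s,\hat\theta_s}$, $a=\ip{\hat\nu_s(x)-\hat\theta_s,x_s}$, $b=\ip{\hat\nu_{s+1}(x)-\hat\theta_s,x_s}$, and recall $l_s(x)=\ip{\hat\nu_s(x)-\hat\theta_s,x_s}^2=a^2$. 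First I would establish that for every $\nu\in\RR^d$,
\[
\|\nu-\hat\theta_s\|_{V_s}^2+(y_s-\ip{x_s,\nu})^2=\|\nu-\hat\theta_{s+1}\|_{V_{s+1}}^2+\frac{r_s^2}{1+w}\,,
\]
which holds because $\hat\theta_{s+1}$ is the unconstrained minimizer of the left-hand side (a quadratic with Hessian $2V_{s+1}=2(V_s+x_sx_s^\T)$), so the left-hand side equals $\|\nu-\hat\theta_{s+1}\|_{V_{s+1}}^2$ plus its minimum value, and that minimum evaluates to $r_s^2/(1+w)$ by the Sherman--Morrison identity $\|x_s\|_{V_{s+1}^{-1}}^2=w/(1+w)$.

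Next I would instantiate this identity at $\nu=\hat\nu_{s+1}(x)$, giving $L_{s+1}(x)=\|\hat\nu_{s+1}(x)-\hat\theta_s\|_{V_s}^2+(r_s-b)^2-r_s^2/(1+w)$, and lower bound the first term. Since $\hat\nu_s(x)$ minimizes a $2$-strongly convex (in $\|\cdot\|_{V_s}$) quadratic over $\hh{x^*}{x}$, which also contains the feasible competitor $\hat\nu_{s+1}(x)$, the first-order optimality condition yields $\|\hat\nu_{s+1}(x)-\hat\theta_s\|_{V_s}^2\ge L_s(x)+\|\hat\nu_{s+1}(x)-\hat\nu_s(x)\|_{V_s}^2\ge L_s(x)+(b-a)^2/w$, the last step by Cauchy--Schwarz. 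Combining the two displays,
\[
L_s(x)+l_s(x)-L_{s+1}(x)\le a^2-\frac{(b-a)^2}{w}-(r_s-b)^2+\frac{r_s^2}{1+w}\,.
\]
The right-hand side is a concave quadratic in $b$; using $\min_b\big[\tfrac1w(b-a)^2+(b-r_s)^2\big]=\tfrac{1}{1+w}(a-r_s)^2$, its maximum over $b\in\RR$ is $\tfrac{2ar_s}{1+w}+\tfrac{a^2w}{1+w}$.

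It then remains to bound $\tfrac{a^2w}{1+w}\le 2(1+\beta_s)w$: by Cauchy--Schwarz $|\ip{x_s,\thetaopt-\hat\theta_s}|\le\beta_s^{1/2}\|x_s\|_{V_s^{-1}}=\beta_s^{1/2}w^{1/2}$, and by the hypothesis $\ip{\nu-\thetaopt,x}\le 1$ together with $\diam(\mM)\le 1$ and $\|x_s\|\le 1$ we get $|\ip{x_s,\hat\nu_s(x)-\thetaopt}|\le 1$, hence $|a|\le 1+\beta_s^{1/2}w^{1/2}$ and $a^2\le(1+\beta_s)(1+w)$, so $\tfrac{a^2w}{1+w}\le(1+\beta_s)w$. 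Finally substituting $r_s=\epsilon_s+\ip{x_s,\thetaopt-\hat\theta_s}$ in the cross term $\tfrac{2ar_s}{1+w}$ gives the claimed inequality.

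The main obstacle is the strong-convexity step in the second paragraph: $\hat\nu_s(x)$ and $\hat\nu_{s+1}(x)$ genuinely differ, and a crude argument (e.g.\ just dropping $(r_s-b)^2\ge 0$) would leave an uncancelled term of order $r_s^2$. Since $r_s$ carries the noise $\epsilon_s$, its conditional second moment $\EE_s[r_s^2]$ is order one rather than order $\|x_s\|_{V_s^{-1}}^2$, which would be fatal when the lemma is applied inside the conditional-expectation sum in the proof of Lemma~\ref{lem:asymptotic-information}. The point of exploiting strong convexity over the \emph{identical} constraint set $\hh{x^*}{x}$ is precisely that the gain $(b-a)^2/w$ it produces is exactly what, after maximizing over $b$, annihilates the $r_s^2$ term and leaves only the linear cross term $2ar_s/(1+w)$ — whose noise part $2a\epsilon_s/(1+w)$ has zero conditional mean — plus the benign $\oO(\|x_s\|_{V_s^{-1}}^2(1+\beta_s))$ remainder.
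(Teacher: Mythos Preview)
Your proof is correct. Both arguments hinge on the same key ingredient---the first-order optimality condition for the projection $\hat\nu_s(x)$ onto the fixed convex set $\hh{x^*}{x}$---but the organization is genuinely different. The paper writes $L_s+l_s-L_{s+1}=\|\omega_s\|_{V_{s+1}}^2-\|\omega_{s+1}\|_{V_{s+1}}^2$ with $\omega_s=\hat\nu_s-\hat\theta_s$, expands this as $2\ip{\omega_s-\omega_{s+1},V_{s+1}\omega_s}-\|\omega_s-\omega_{s+1}\|_{V_{s+1}}^2$, splits $V_{s+1}=V_s+x_sx_s^\T$, applies the variational inequality $\ip{\hat\nu_s-\hat\nu_{s+1},V_s\omega_s}\le 0$ to isolate the cross term, and controls the remainder via Cauchy--Schwarz and $2ab-b^2\le a^2$. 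You instead pass through the recursive least-squares identity $\|\nu-\hat\theta_s\|_{V_s}^2+(y_s-\ip{x_s,\nu})^2=\|\nu-\hat\theta_{s+1}\|_{V_{s+1}}^2+r_s^2/(1+w)$, invoke the projection inequality in its strong-convexity form $\|\hat\nu_{s+1}-\hat\theta_s\|_{V_s}^2\ge L_s+\|\hat\nu_{s+1}-\hat\nu_s\|_{V_s}^2$, and then optimize the resulting quadratic over the single scalar $b=\ip{\hat\nu_{s+1}-\hat\theta_s,x_s}$. Your route is slightly tighter---it yields the remainder $a^2w/(1+w)$ rather than the paper's $a^2w$---and makes the crucial cancellation of the $r_s^2$ term (which, as you correctly emphasize, carries the noise) fall out mechanically from the quadratic optimization rather than from a manual splitting.
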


\begin{proof}
	For the proof  we adopt the notation $\omega_s(x) = \hnu_s(x) - \htheta_s$.	
	\begin{align*}
		L_s + l_s - L_{s+1} &=\|\omega_s\|_{V_{s+1}}^2  - \|\omega_{s+1}\|_{V_{s+1}}^2 \\
		&=\|\omega_s\|_{V_{s+1}}^2  - \|\omega_s  + \omega_{s+1} -\omega_s\|_{V_{s+1}}^2 \\
		&= 2 \ip{\omega_s - \omega_{s+1},V_{s+1}\omega_s} - \| \omega_{s+1} -\omega_s\|_{V_{s+1}}^2\\
		&= \underbrace{2 \ip{\omega_{s} -\omega_{s+1},V_{s}\omega_s}}_{(A)} + \underbrace{2 \ip{\omega_{s} -\omega_{s+1},x_s}\ip{x_s,\omega_s} - \| \omega_{s+1} -\omega_{s}\|_{V_{s+1}}^2}_{(B)} 
	\end{align*}
	To avoid clutter, the dependence on $x$ is implicit below. Note that because $\hat \nu_s$ is a projection of $\hat \theta_{s}$ $V_s$-norm onto the convex set $\hh{x^*}{x}$, we have $\ip{\hat \nu_s - \hat \nu_{s+1} ,V_{s}(\hat \nu_s - \hat \theta_s)} \leq 0$. Therefore
	\begin{align*}
		(A) \leq 2 \ip{\hat \theta_{s+1} - \hat \theta_s ,V_{s}(\hat \nu_s - \hat\theta_s)} = 2 \ip{\hat \nu_s - \hat \theta_s, x_s}\frac{\epsilon_s + \ip{x_s, \thetaopt - \hat \theta_s}}{1 + \|x_s\|_{V_{s}^{-1}}^2}
	\end{align*}
	The equality follows from Lemma \ref{lem:ls-sherman}. Next, we derive an upper bound to the term (B).
	\begin{align*}
		(B) &\leq 2 \ip{\omega_{s} -\omega_{s+1},x_s}\ip{x_s,\omega_s} - \| \omega_{s+1} -\omega_{s}\|_{V_{s+1}}^2\\
		&\leq 2 \|\omega_{s} - \omega_{s+1}\|_{V_s}\|x_s\|_{V_s^{-1}} \ip{x_s,\omega_s} - \| \omega_{s+1} -\omega_{s}\|_{V_{s+1}}^2\\
		&\leq 2 \|\omega_{s} - \omega_{s+1}\|_{V_{s+1}}\|x_s\|_{V_s^{-1}} \ip{x_s,\omega_s} - \| \omega_{s+1} -\omega_{s}\|_{V_{s+1}}^2\\
		&\leq \|x_s\|_{V_s^{-1}}^2 \ip{x_s,\omega_s}^2 \leq 2 \|x_s\|_{V_s^{-1}}^2 (1+\beta_s)
	\end{align*}
	We used Cauchy-Schwarz and $\|\cdot\|_{V_s}^2 \leq \|\cdot\|_{V_{s+1}}^2$ in the first and second inequality. Then we use $2ab - b^2 \leq a^2$, and in the last step boundedness, $|\ip{\omega_s(x),x_s}|  \leq \ip{\hat \nu_s(x) - \thetaopt,x_s}| + \beta_s^{1/2}\|x_s\|_{V_s^{-1}} \leq 1 + \beta_s^{1/2}$. The claim follows from combining the bounds.
\end{proof}

\begin{lemma}\label{lem:difference-term}
	Let $L_s(x) = \|\hat \nu_s(x) - \hat \theta_s\|_{V_s}^2$ defined for $x \neq x^*$ and assume that $\ip{\nu - \thetaopt, x} \leq 1$ for all $\nu \in \mM$ and $x \in \xX$. Then
	\begin{align*}
		|[L_{s} - L_{s+1}](x)|& \leq 4 |\epsilon|^2\|x_s\|_{V_s^{-1}}^2  
		+  2|\ip{\hat \nu_{s} - \hat \theta_{s},  x_s}||\epsilon_s|  + 8 \beta_s \|x_s\|_{V_s^{-1}}^2 +  \ip{\hat \nu_s - \hat \theta_s,x_s}^2
	\end{align*}
\end{lemma}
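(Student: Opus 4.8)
The plan is to bound $[L_s - L_{s+1}](x)$ and $[L_{s+1} - L_s](x)$ separately, reducing everything in both cases to the scalars $\ip{\hat\nu_s(x) - \hat\theta_s, x_s}$ and the least-squares residual $\xi_s = \epsilon_s + \ip{x_s, \thetaopt - \hat\theta_s}$. Writing $\omega_s = \hat\nu_s(x) - \hat\theta_s$ for brevity (so $L_s(x) = \|\omega_s\|_{V_s}^2$), the standing facts I would use throughout are the recursive update $\hat\theta_{s+1} - \hat\theta_s = \xi_s V_{s+1}^{-1} x_s$ (Lemma~\ref{lem:ls-sherman}), together with $V_{s+1} = V_s + x_s x_s^\T$, $V_s V_{s+1}^{-1} x_s = x_s/(1 + \|x_s\|_{V_s^{-1}}^2)$, and $\|x_s\|_{V_{s+1}^{-1}}^2 \leq \|x_s\|_{V_s^{-1}}^2 \leq 1$ (the last since $V_s \succeq \mathbf{1}_d$ and $\|x_s\| \leq 1$).

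For the easy direction, $[L_{s+1} - L_s](x)$: since $\hat\nu_s(x) \in \hh{x^*}{x}$ is feasible for the program defining $\hat\nu_{s+1}(x)$, I get $L_{s+1} \leq \|\hat\nu_s(x) - \hat\theta_{s+1}\|_{V_{s+1}}^2 = \|\omega_s - (\hat\theta_{s+1} - \hat\theta_s)\|_{V_{s+1}}^2$. Expanding with $\|\omega_s\|_{V_{s+1}}^2 = L_s + \ip{x_s, \omega_s}^2$ and substituting the update gives $L_{s+1} - L_s \leq \ip{x_s, \omega_s}^2 - 2\xi_s\ip{x_s,\omega_s} + \xi_s^2\|x_s\|_{V_{s+1}^{-1}}^2 \leq \ip{x_s,\omega_s}^2 + 2|\xi_s|\,|\ip{x_s,\omega_s}| + \xi_s^2\|x_s\|_{V_s^{-1}}^2$, which is already of the claimed shape.

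The other direction, $[L_s - L_{s+1}](x)$, is the one that needs care: the naive bound $L_s \leq \|\hat\nu_{s+1}(x) - \hat\theta_s\|_{V_s}^2$ involves the shifted minimizer $\hat\nu_{s+1}(x)$, which lives under both a different estimate and a different metric. The resolution is the variational inequality already exploited in Lemma~\ref{lem:shift-term}: since $\hat\nu_s(x)$ is the $V_s$-projection of $\hat\theta_s$ onto the convex set $\hh{x^*}{x}$ and $\hat\nu_{s+1}(x) \in \hh{x^*}{x}$, we have $\ip{\hat\nu_{s+1}(x) - \hat\nu_s(x), V_s\omega_s} \geq 0$, equivalently $\|\hat\nu_{s+1}(x) - \hat\theta_s\|_{V_s}^2 \geq L_s + \|\hat\nu_{s+1}(x) - \hat\nu_s(x)\|_{V_s}^2$. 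Combining this with $L_{s+1} \geq \|\hat\nu_{s+1}(x) - \hat\theta_{s+1}\|_{V_s}^2$, expanding, substituting the update, and completing the square $2\ip{d, V_s v} - \|v\|_{V_s}^2 - \|d\|_{V_s}^2 \leq 0$ with $d = \hat\theta_{s+1} - \hat\theta_s$, $v = \hat\nu_{s+1}(x) - \hat\nu_s(x)$, all the $\hat\nu_{s+1}(x)$-terms cancel and we are left with $L_s - L_{s+1} \leq 2\ip{\hat\theta_{s+1} - \hat\theta_s, V_s\omega_s} = 2\xi_s\ip{x_s,\omega_s}/(1 + \|x_s\|_{V_s^{-1}}^2) \leq 2|\xi_s|\,|\ip{x_s,\omega_s}|$. (Equivalently, this is the intermediate inequality $[L_s + l_s - L_{s+1}](x) \leq 2\ip{\hat\theta_{s+1}-\hat\theta_s, V_s\omega_s} + \|x_s\|_{V_s^{-1}}^2\ip{x_s,\omega_s}^2$ from the proof of Lemma~\ref{lem:shift-term}, where the quadratic term is absorbed by $l_s(x) = \ip{x_s,\omega_s}^2$ because $\|x_s\|_{V_s^{-1}}^2 \leq 1$.)

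Combining the two directions, $|[L_s - L_{s+1}](x)| \leq \ip{x_s,\omega_s}^2 + 2|\xi_s|\,|\ip{x_s,\omega_s}| + \xi_s^2\|x_s\|_{V_s^{-1}}^2$. To finish I would substitute $\xi_s = \epsilon_s + \ip{x_s, \thetaopt - \hat\theta_s}$, use $|\ip{x_s, \thetaopt - \hat\theta_s}| \leq \|\thetaopt - \hat\theta_s\|_{V_s}\|x_s\|_{V_s^{-1}} = \beta_s^{1/2}\|x_s\|_{V_s^{-1}}$ so that $\xi_s^2 \leq 2\epsilon_s^2 + 2\beta_s\|x_s\|_{V_s^{-1}}^2$ and $|\xi_s|\,|\ip{x_s,\omega_s}| \leq |\epsilon_s|\,|\ip{x_s,\omega_s}| + \beta_s^{1/2}\|x_s\|_{V_s^{-1}}|\ip{x_s,\omega_s}|$, and then a last round of AM-GM together with $\|x_s\|_{V_s^{-1}}^2 \leq 1$ to reach the stated bound. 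The only genuinely delicate step is the cancellation of $\hat\nu_{s+1}(x)$ in the $L_s - L_{s+1}$ direction; once that is in hand everything else is bookkeeping, and, as in Lemma~\ref{lem:shift-term}, the noise enters only through the deterministic residual $\xi_s$ and need not be manipulated directly.
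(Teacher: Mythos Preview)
Your approach is essentially the one the paper takes: bound each direction separately, using feasibility of $\hat\nu_s(x)$ in the $(s{+}1)$-problem for $L_{s+1}-L_s$, and the projection variational inequality $\ip{\hat\nu_{s+1}-\hat\nu_s,V_s\omega_s}\geq 0$ (the same device that powers Lemma~\ref{lem:shift-term}) for $L_s-L_{s+1}$. Your write-up is in fact a bit cleaner than the paper's in two respects. First, casting the update as $\hat\theta_{s+1}-\hat\theta_s=\xi_s V_{s+1}^{-1}x_s$ makes the $L_{s+1}-L_s$ expansion a one-liner. Second, in the $L_s-L_{s+1}$ direction you drop to the $V_s$-norm, complete the square $-\|d-v\|_{V_s}^2\leq 0$, and obtain directly $L_s-L_{s+1}\leq 2\ip{d,V_s\omega_s}\leq 2|\xi_s|\,|\ip{x_s,\omega_s}|$; the paper instead adds $l_s\geq 0$ and invokes Lemma~\ref{lem:shift-term}, picking up the extra $2\|x_s\|_{V_s^{-1}}^2(1+\beta_s)$ term from part~(B) there that you never need.

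One small caveat worth flagging: after splitting $\xi_s$ you end up with a cross term $2\beta_s^{1/2}\|x_s\|_{V_s^{-1}}\,|\ip{x_s,\omega_s}|$, and ``a last round of AM--GM together with $\|x_s\|_{V_s^{-1}}^2\leq 1$'' does not by itself squeeze this into the stated form without also using the boundedness hypothesis $|\ip{\hat\nu_s-\thetaopt,x_s}|\leq 1$ (giving $|\ip{x_s,\omega_s}|\leq 1+\beta_s^{1/2}\|x_s\|_{V_s^{-1}}$). The paper's own final displays carry the same residual $2\beta_s^{1/2}\|x_s\|_{V_s^{-1}}$ term and absorb it with the same looseness, so this is not a defect of your argument relative to theirs; just be explicit that the boundedness assumption is what closes that last gap.
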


\begin{proof}
	For one direction, we can reuse Lemma \ref{lem:shift-term},
	\begin{align*}
		[L_{s} - L_{s+1}](x) &\leq [L_{s} + l_s - L_{s+1}](x)\\
		&\leq 2 |\epsilon_s| |\ip{\hat \nu_s(x) - \hat \theta_s, x_s}| + 2 \|x_s\|_{V_s^{-1}}\beta_s^{1/2} + 2 \|x_s\|_{V_s^{-1}}^2(1 + \beta_s)\,.
	\end{align*}
	For the other direction, we have
	\begin{align*}
		[L_{s+1} - L_s](x) &= \|\hat \nu_{s+1} - \hat \theta_{s+1}\|_{V_{s+1}}^2 - \|\hat \nu_{s} - \hat \theta_{s}\|_{V_s}^2\\
		&\leq\|\hat \nu_{s} - \hat \theta_{s+1}\|_{V_{s+1}}^2 - \|\hat \nu_{s} - \hat \theta_{s}\|_{V_s}^2\\
		&=\|\hat \nu_{s} - \hat \theta_{s} + V_s^{-1} x_s u_s\|_{V_{s+1}}^2 - \|\hat \nu_{s} - \hat \theta_{s}\|_{V_s}^2\,,
	\end{align*}
	where for the last step we denote $u_s = \frac{\epsilon_{s} + \ip{x_s, \thetaopt - \hat \theta_s}}{1 + \|x_s\|_{V_s^{-1}}^2}$ and use Lemma \ref{lem:ls-sherman}. Further unwrapping the square gives
	\begin{align*}
		&\+\|\hat \nu_{s} - \hat \theta_{s} - V_s^{-1} x_s u_s\|_{V_{s+1}}^2 - \|\hat \nu_{s} - \hat \theta_{s}\|_{V_s}^2\\
		&=\|\hat \nu_{s} - \hat \theta_{s} - V_s^{-1} x_s u_s\|_{V_s}^2 + \ip{ \hat \nu_s - \hat \theta_s - V_s^{-1} x_s u_s, x_s}^2 - \|\hat \nu_s - \hat \theta_s\|_{V_s}^2\\
		&= - 2\ip{\hat \nu_{s} - \hat \theta_{s},  x_s} u_s + u_s^2 \|x_s\|_{V_s^{-1}}^2 + \ip{\hat \nu_s - \hat \theta_s,x_s}^2 - 2\ip{\hat \nu_s - \hat \theta_s,x_s} \|x_s\|_{V_s^{-1}}^2 +  \|x_s\|_{V_s^{-1}}^4 u_s^2\\
		&\leq - 2\ip{\hat \nu_{s} - \hat \theta_{s},  x_s} u_s(1+ \|x_s\|_{V_s^{-1}}^2) + 2 u_s^2 \|x_s\|_{V_s^{-1}}^2 + \ip{\hat \nu_s - \hat \theta_s,x_s}^2 \\
		&\leq 2|\ip{\hat \nu_{s} - \hat \theta_{s},  x_s}|(|\epsilon_s + \beta_s^{1/2} \|x_s\|_{V_s^{-1}}) + 4 (|\epsilon|^2 + \beta_s \|x_s\|_{V_s^{-1}}^2) \|x_s\|_{V_s^{-1}}^2 + \ip{\hat \nu_s - \hat \theta_s,x_s}^2\\
		&\leq 2|\ip{\hat \nu_{s} - \hat \theta_{s},  x_s}||\epsilon_s| + 2 \beta_s^{1/2} \|x_s\|_{V_s^{-1}} + 4 |\epsilon|^2\|x_s\|_{V_s^{-1}}^2  
		+ 6 \beta_s \|x_s\|_{V_s^{-1}}^2 + \ip{\hat \nu_s - \hat \theta_s,x_s}^2
	\end{align*}
	Combining both directions yields the claim.
\end{proof}

\begin{lemma}\label{lem:exponential-sum}
	Let $s$ such that $24^2 \eta_s \beta_s \|x_s\|_{V_s^{-1}}^2 \leq 1$ and $\beta_s \|x_s\|_{V_s^{-1}}^2 \leq 1$. Then
	\begin{align*}
	\EE\B[\sum_{i=2}^\infty \frac{|\eta_s (L_{s+1}(x) - L_{s}(x))|^i}{i!}\B|\fF_s\B] \leq  \oO\Big(\eta_s^2 \big(\beta_s \|x_s\|_{V_s^{-1}}^2 + \|\hat \nu_s(x) - \hat \theta_s \|_{V_s}^2 \|x_s\|_{V_s^{-1}}^2 \big)\Big) \,.
	\end{align*}
\end{lemma}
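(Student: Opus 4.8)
The plan is to read the left-hand side as a truncated exponential series and reduce it to a single conditional moment bound. For every real $y$ one has $\sum_{i\ge 2}|y|^i/i!=e^{|y|}-1-|y|\le \tfrac12 y^2 e^{|y|}$, so with $y=\eta_s\big(L_{s+1}(x)-L_s(x)\big)$ it suffices to show
\[
  \EE\!\left[\big(L_{s+1}(x)-L_s(x)\big)^2\, e^{\eta_s|L_{s+1}(x)-L_s(x)|}\ \middle|\ \fF_s\right]=\oO(T),\qquad T\deq \beta_s\|x_s\|_{V_s^{-1}}^2+\|\hat\nu_s(x)-\hat\theta_s\|_{V_s}^2\|x_s\|_{V_s^{-1}}^2,
\]
since multiplying by $\eta_s^2/2$ then gives the statement.

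First I would make the one-step update explicit: $V_{s+1}=V_s+x_sx_s^\T$ and, by Sherman--Morrison, $\hat\theta_{s+1}=\hat\theta_s+V_s^{-1}x_su_s$ with $u_s=(\epsilon_s+\ip{x_s,\thetaopt-\hat\theta_s})/(1+\|x_s\|_{V_s^{-1}}^2)$, exactly as in the proofs of Lemmas~\ref{lem:shift-term} and~\ref{lem:difference-term}. Expanding $L_{s+1}(x)-L_s(x)=\|\hat\nu_{s+1}(x)-\hat\theta_{s+1}\|_{V_{s+1}}^2-\|\hat\nu_s(x)-\hat\theta_s\|_{V_s}^2$ and bounding the re-projection onto $\hh{x^*}{x}$ as in Lemma~\ref{lem:difference-term} yields a pointwise estimate
\[
  |L_{s+1}(x)-L_s(x)|\ \le\ c_1\,\epsilon_s^2+c_2\,|\epsilon_s|+c_3
\]
with $\fF_s$-measurable coefficients, where a direct bookkeeping gives $c_1\lesssim\|x_s\|_{V_s^{-1}}^2$ (the $\epsilon_s^2$-coefficient is the curvature term $\phi^2/\|x^*-x\|_{V_s^{-1}}^2\le\|x_s\|_{V_s^{-1}}^2$ of the rank-one update), and splitting every cross term with AM--GM and using $|\ip{\hat\nu_s(x)-\hat\theta_s,x_s}|\le\|\hat\nu_s(x)-\hat\theta_s\|_{V_s}\|x_s\|_{V_s^{-1}}$ together with $\beta_s\|x_s\|_{V_s^{-1}}^2\le1$ gives $c_2^2\lesssim T$ and $c_3\lesssim T$. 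To convert the leftover $c_1^2\lesssim\|x_s\|_{V_s^{-1}}^4$ into something $\lesssim T$, I would invoke the geometric fact $\beta_s+\|\hat\nu_s(x)-\hat\theta_s\|_{V_s}^2\gtrsim \Dmin^2/\|x^*-x\|_{V_s^{-1}}^2\ge\Dmin^2$: since $\hat\nu_s(x)\in\hh{x^*}{x}$ we have $\ip{x^*-x,\thetaopt-\hat\nu_s(x)}\ge\Delta(x)\ge\Dmin$, and Cauchy--Schwarz in $V_s$-norm splits this across $\beta_s^{1/2}$ and $\|\hat\nu_s(x)-\hat\theta_s\|_{V_s}$; combined with $\|x_s\|_{V_s^{-1}}^2\le 1$ this yields $c_1^2\lesssim\Dmin^{-2}T$, the $\Dmin$-dependence being absorbed by $\oO$ (the asymptotic lower-order terms are allowed to depend on $\Dmin^{-1}$).

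With the pointwise bound in hand I would estimate the conditional moment via Cauchy--Schwarz,
\[
  \EE\!\left[(L_{s+1}-L_s)^2 e^{\eta_s|L_{s+1}-L_s|}\,\middle|\,\fF_s\right]\le e^{\eta_s c_3}\sqrt{\EE\!\left[(c_1\epsilon_s^2+c_2|\epsilon_s|+c_3)^4\,\middle|\,\fF_s\right]}\ \sqrt{\EE\!\left[e^{2\eta_s(c_1\epsilon_s^2+c_2|\epsilon_s|)}\,\middle|\,\fF_s\right]},
\]
and bound the two factors separately. The polynomial factor is controlled by the moment bounds for the conditionally $1$-sub-Gaussian noise, giving $\EE[(\cdots)^4\mid\fF_s]\lesssim c_1^4+c_2^4+c_3^4\lesssim T^2$ (using $c_i^2\lesssim T$ and $T\lesssim 1$, which follows from $\beta_s\|x_s\|_{V_s^{-1}}^2\le1$ and the boundedness assumptions). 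For the exponential factor, the two hypotheses $24^2\eta_s\beta_s\|x_s\|_{V_s^{-1}}^2\le 1$ and $\beta_s\|x_s\|_{V_s^{-1}}^2\le1$ are exactly calibrated so that $2\eta_s c_1$ stays strictly below the radius of convergence of the square-exponential moment $\EE[e^{\lambda\epsilon_s^2}\mid\fF_s]$, making $\EE[e^{2\eta_s c_1\epsilon_s^2}\mid\fF_s]=\oO(1)$, while simultaneously $\eta_sc_3=\oO(1)$ and $\eta_s^2c_2^2=\oO(1)$, so that $e^{\eta_s c_3}=\oO(1)$ and $\EE[e^{2\eta_s c_2|\epsilon_s|}\mid\fF_s]\le 2e^{2\eta_s^2c_2^2}=\oO(1)$. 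Multiplying the three bounds gives $\oO(T)$, as required.

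The main obstacle is this last step, the conditional exponential-moment control. On the one hand, $L_{s+1}(x)-L_s(x)$ genuinely carries a $\|x_s\|_{V_s^{-1}}^2\epsilon_s^2$ term from the curvature of the rank-one update, so one must use the hypotheses to push $\eta_s\|x_s\|_{V_s^{-1}}^2$ \emph{safely} below $1/2$ rather than merely bound it --- this is precisely the role of the $24^2$ factor. On the other hand, the target $\oO(\beta_s\|x_s\|_{V_s^{-1}}^2+\|\hat\nu_s(x)-\hat\theta_s\|_{V_s}^2\|x_s\|_{V_s^{-1}}^2)$ is much smaller than the naive $\oO(\|x_s\|_{V_s^{-1}}^2)$, so every $\epsilon_s^2$- and $\epsilon_s^4$-contribution has to be reshuffled through AM--GM and the gap lower bound $\beta_s+\|\hat\nu_s(x)-\hat\theta_s\|_{V_s}^2\gtrsim\Dmin^2/\|x^*-x\|_{V_s^{-1}}^2$ to land on the right order. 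Balancing these two requirements is the delicate part of the calculation.
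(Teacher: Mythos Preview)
Your route is genuinely different from the paper's. Rather than bounding the series by $\tfrac12 y^2 e^{|y|}$ and then splitting with Cauchy--Schwarz into a polynomial and an exponential factor, the paper works term by term: starting from the same pointwise bound (Lemma~\ref{lem:difference-term}), it applies the elementary inequality $(a+b+c)^i\le(3a)^i+(3b)^i+(3c)^i$ to separate the $i$-th power into three pieces, uses the sub-Gaussian moment bounds $\EE[|\epsilon_s|^{2i}]\le 2(2\sigma^2)^i i!$ and $\EE[|\epsilon_s|^i]\le(2\sigma^2)^i i!$ on each piece so that the $i!$ in the denominator cancels and a geometric progression in $i$ remains, and then sums. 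This avoids any need to control the full square-exponential moment $\EE[e^{\lambda\epsilon_s^2}]$, and no $\Dmin$-dependent constant ever appears.

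Two points of concern with your plan. First, the detour through the gap lower bound $\beta_s+\|\hat\nu_s(x)-\hat\theta_s\|_{V_s}^2\gtrsim\Dmin^2$ to convert $c_1^2\asymp\|x_s\|_{V_s^{-1}}^4$ into $T$ is foreign to the paper's argument and injects a problem-dependent constant that the lemma as stated does not carry; in the paper the contribution $(\eta_s\|x_s\|_{V_s^{-1}}^2)^2$ is simply carried to the end and absorbed in the last simplification. Second, and more substantively, your claim that the two hypotheses are ``exactly calibrated'' so that $2\eta_s c_1$ stays strictly below the radius of convergence of $\EE[e^{\lambda\epsilon_s^2}]$ is not justified as written: with $c_1\asymp\|x_s\|_{V_s^{-1}}^2$ you need a \emph{universal} upper bound on $\eta_s\|x_s\|_{V_s^{-1}}^2$, whereas the hypotheses bound only $\eta_s\beta_s\|x_s\|_{V_s^{-1}}^2$ and $\beta_s\|x_s\|_{V_s^{-1}}^2$, and $\beta_s=\|\hat\theta_s-\thetaopt\|_{V_s}^2$ can be arbitrarily small. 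The paper's geometric summation formally needs similar control for the ratio to be at most $1/2$ and is equally terse about it, but your Cauchy--Schwarz doubles the exponent, so the margin you would need is strictly tighter than what the paper's direct summation requires.
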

\begin{proof}
\begin{align*}
  &\+| (L_{s+1}(x) - L_{s}(x))|^i|\\
   &\leq \big(4 |\epsilon|^2\|x_s\|_{V_s^{-1}}^2  +  2|\ip{\hat \nu_{s} - \hat \theta_{s},  x_s}||\epsilon_s|  + 8 \beta_s \|x_s\|_{V_s^{-1}}^2 +  \ip{\hat \nu_s - \hat \theta_s,x_s}^2\big)^i\\
   &\leq \big(12 |\epsilon|^2\|x_s\|_{V_s^{-1}}^2\big)^i  +  \big(6|\ip{\hat \nu_{s} - \hat \theta_{s},  x_s}||\epsilon_s|\big)^i  + \big(24 \beta_s \|x_s\|_{V_s^{-1}}^2 +  3\ip{\hat \nu_s - \hat \theta_s,x_s}^2\big)^i
\end{align*}
For the last step we used that for $a,b,c, \geq 0$,  $(a + b + c)^i \leq (3a)^i + (3b)^i + (3c)^i$. Further note that for the $\sigma$-subgaussian noise $\epsilon_s$, it holds that for all $i \in \NN$, $\EE[|\epsilon|^i] \leq (2\sigma^2)^{i/2} i \Gamma(i/2) \leq (2\sigma^2)^{i} i! $ and $\EE[|\epsilon|^2i] \leq (2\sigma^2)^i 2i!$ \citep[c.f. Lemma 1.4,][]{rigollet2015lecture}.
Hence we get
\begin{align*}
\EE_s\left[\frac{|\eta_s (L_{s+1}(x) - L_{s}(x))|^i}{i!}\right] &\leq \EE_s\left[\frac{(12 \eta_s |\epsilon|^2\|x_s\|_{V_s^{-1}}^2)^i}{i!}\right]  +  \EE_s\left[\frac{(6 \eta_s |\ip{\hat \nu_{s} - \hat \theta_{s},  x_s}||\epsilon_s|)^i}{i!}\right]\\
	&\qquad   + \frac{(24 \eta_s \beta_s \|x_s\|_{V_s^{-1}}^2 +  3 \eta_s \ip{\hat \nu_s - \hat \theta_s,x_s}^2)^i}{i!}
\end{align*}
We address each term individually, also using that $24^2 \eta_s \beta_s \|x_s\|_{V_s^{-1}}^2 \leq 1$.
\begin{align*}
	\EE_s\left[\frac{(12 \eta_s |\epsilon|^2\|x_s\|_{V_s^{-1}}^2)^i}{i!}\right] &\leq (24 \eta_s \sigma^2 \|x_s\|_{V_s^{-1}}^2)^i\\
	&\leq (24 \eta_s \sigma^2 \|x_s\|_{V_s^{-1}}^2)^2 \cdot 2^{-i+2}\\
 \EE_s\left[\frac{(6 \eta_s |\ip{\hat \nu_{s} - \hat \theta_{s},  x_s}||\epsilon_s|)^i}{i!}\right] & \leq  (12 \eta_s |\ip{\hat \nu_{s} - \hat \theta_{s},  x_s}| \sigma^2)^i\\
 & \leq (12 \eta_s |\ip{\hat \nu_{s} - \hat \theta_{s},  x_s}| \sigma^2)^2 \cdot 2^{-i+2}\\
 \frac{(24 \eta_s \beta_s \|x_s\|_{V_s^{-1}}^2 +  3 \eta_s \ip{\hat \nu_s - \hat \theta_s,x_s}^2)^i}{i!} & \leq (24 \eta_s \beta_s \|x_s\|_{V_s^{-1}}^2 +  3 \eta_s \ip{\hat \nu_s - \hat \theta_s,x_s}^2)^{i-2} \frac{2^{i-2}}{i!}
\end{align*}
Summing over $i=2,\dots, \infty$ gives
\begin{align*}
	&\+\sum_{i=2}^\infty\EE_s\left[\frac{|\eta_s (L_{s+1}(x) - L_{s}(x))|^i}{i!}\right] \\
	&\leq \oO\Big( \big(\eta_s \|x_s\|_{V_s^{-1}}^2\big)^2 + \big(\eta_s |\ip{\hat \nu_{s} - \hat \theta_{s},  x_s}| \big)^2 + \big(\eta_s \beta_s \|x_s\|_{V_s^{-1}}^2 +  \eta_s \ip{\hat \nu_s - \hat \theta_s,x_s}^2\big)^2 \Big)\\
	&\leq \oO\Big(\eta_s^2 \big(\beta_s \|x_s\|_{V_s^{-1}}^2 + \|\hat \nu_s(x) - \hat \theta_s \|_{V_s}^2 \|x_s\|_{V_s^{-1}}^2 \big)\Big)
\end{align*}
For the last step we summarize the terms using also that for $J_s=1$, we have $\beta_s \|x_s\|_{V_s^{-1}}^2 \leq 1$.
\end{proof}

\subsection{Asymptotic Regret: Proof of Theorem \ref{thm:regret-asymptotic}}\label{app:asymptotic}

\newcommand{\Si}{S_1}
\newcommand{\Sii}{S_2}
\newcommand{\Siii}{S_3}

\begin{proofof}{Theorem~\ref{thm:regret-asymptotic}}
	As before, we let $\beta_s = \|\hat \theta_s - \theta^*\|_{V_s}^2$ and $B_s = \chf{\beta_s \leq \beta_{s, s^2}}$. With Lemma~\ref{lem:exploration-regret} we get
	\begin{align*}
		\EE[R_n] \leq \EE\Big[\tsum_{s=1}^{s_n} \Delta(x_s)B_s\Big] + \oO\big(\log \log(n)\big)
	\end{align*}
	Recall that $m_s = \frac{1}{2}\min_{x\neq \hat x_s} \|\hat \nu_s(x) -\hat \theta_s\|_{V_s}^2$.
	Let $\lambda$ be a trade-off parameter, which in hindsight is chosen as $\lambda = \log(n)^{-2/3} \leq \frac{1}{4}$ for $n$ large enough.
	We decompose the exploration rounds into three disjoint sets, which capture different regimes as $\beta_{s,s^2}/m_s \rightarrow 0$ and $\delta_s \rightarrow 0$:
	\begin{align*}
		\Si &= \left\{s \in [s_n] : \tfrac{\beta_{s,s^2}}{m_s} > \lambda,  \beta_s \leq \beta_{s,s^2}\right\}\\
		\Sii &= \left\{s \in [s_n] : \tfrac{\beta_{s,s^2}}{m_s} \leq \lambda, \tfrac{\delta_s^2}{16} > \tfrac{\beta_{s,s^2}}{m_s}, \beta_s \leq \beta_{s,s^2}\right\}\\
		\Siii &= \left\{s \in [s_n] :\tfrac{\delta_s^2}{16} \leq \tfrac{\beta_{s,s^2}}{m_s} \leq \lambda,  \beta_s \leq \beta_{s,s^2}\right\}
	\end{align*}
	In particular, we can write
	\begin{align*}
		\EE\Big[\tsum_{s=1}^{s_n} \Delta(x_s)B_s\Big] = \EE\Big[\tsum_{s \in \Si} \Delta(x_s) \Big] + \EE\Big[\tsum_{s \in \Sii} \Delta(x_s) \Big] + \EE\Big[\tsum_{s \in \Siii} \Delta(x_s) \Big] \,.
	\end{align*}
	We address the three terms in order.
	\paragraph{Sum over $\bm \Si$:}
	Cauchy-Schwarz and a few applications of the tower rule as before show that 
	\begin{align*}
		\EE\big[\tsum_{s \in \Si} \hat \Delta(x_s)\big]^2 \leq \EE\left[\tsum_{s \in \Si} \Psi_s\right] \EE\left[\tsum_{s \in \Si} I_s(x_s)\right] \,.
	\end{align*}
	To bound the information-ratio, the definition of $\Si$ implies the conditions of Lemma~\ref{lem:ratio-bound-instance}, which combined with 
	$\delta_s \leq \hat \Delta_s(x_s)$ yields 
	\begin{align*}
		\sum_{s \in \Si} \EE\big[\Psi_s\big] \leq  \oO\left(\tfrac{d}{\Delta_{\min}}\right) \sum_{s \in \Si} \EE\big[\hat \Delta_s(x_s)\big]\,.
	\end{align*}
	The total information gain on $\Si$ is bounded using the same steps as in the proof of Lemma \ref{lem:info-bound-agnostic},
	\begin{align*}
		\sum_{s \in \Si} I_s(x_s) &\leq  \sum_{s \in \Si} \big(m_s + \tfrac{\log(k)}{\eta_s} + \beta_{s,s^2} \big)\|x_s\|_{V_s^{-1}}^2\\
		&\stackrel{(i)}{\leq}  \sum_{s \in \Si} \big(\beta_{s,s^2}(\lambda^{-1} + 1) + \tfrac{\log(k)}{\eta_s} \big)\|x_s\|_{V_s^{-1}}^2\\
		&\stackrel{(ii)}{\leq} \oO\big(\lambda^{-1}d^2\log(s_n)^2 +  d^{3/2}\log(n)^{1/2}\log(s_n)\big) \,,
	\end{align*}
	where $(i)$ follows because $m_s < \beta_{s,s^2} \lambda^{-1}$ for $s \in \Si$ and $(ii)$ from the elliptic potential (Lemma~\ref{lem:elliptic-potential}) and using that $\log(k)\eta_s^{-1} \leq \beta_{s_n, n\log(n)}^{1/2}$. Combining and rearranging 
	the last three displays and using $\Delta(x_s)B_s \leq 2 \hat \Delta_s(x_s)B_s$ with $B_s = 1$ for $s \in \Si$ yields
	\begin{align*}
		\EE\B[\sum_{s \in \Si} \Delta(x_s)\B] \leq \oO\Big(\lambda^{-1} \Delta_{\min}^{-1}d^3 \EE[\log(s_n)^2] + \Delta_{\min}^{-1} d^{5/2} \log(n)^{1/2} \EE[\log(s_n)]\Big)\,.
	\end{align*}
	\paragraph{Sum over $\bm \Sii$:} First note that $\beta_s \leq \beta_{s,s^2} < m_s$ implies $\hat x_s = \hat x^{\UCB} =  x^*$. For any $x \in \xX$,
	\begin{align}
		\beta_{s,s^2}^{-1/2}\delta_s - \|x\|_{V_s^{-1}} 
		&\stackrel{(i)}{=} \|x^*\|_{V_s^{-1}} - \|x\|_{V_s^{-1}} 
		\stackrel{(ii)}{\leq} \|x^* - x \|_{V_s^{-1}} \nonumber\\
		&\stackrel{(iii)}{\leq} \frac{1}{(2m_s)^{1/2}  - \beta_{s}^{1/2}} 
		\stackrel{(iv)}{\leq} \frac{2} {m_s^{1/2}} 
		\stackrel{(v)}{<} \frac{\delta_s}{2\beta_{s,s^2}^{1/2}}\,, \label{eq:ms-lemma-applied}
	\end{align}
	where $(i)$ follows because $\hat x_s = x_s^{\UCB} = x^*$, implying that $\delta_s = \beta_{s,s^2}^{1/2} \|x^*\|_{V_s^{-1}}$.
	$(ii)$ follows from the triangle inequality, $(iii)$ from Lemma~\ref{lem:ms} and $(iv)$ because $\beta_s \leq m_s / 4$. Finally, $(v)$ holds since 
	$\delta_s^2 / 16 > \beta_{s,s^2} / m_s$. 
	With $x = x_s$ and rearranging yields $\delta_s \leq 2 \beta_{s,s^2}^{1/2}\|x_s\|_{V_s^{-1}}$ and hence
	\begin{align*}
		\sum_{s \in \Sii} \EE[\hat \Delta_s(x_s)] 
		&= \sum_{s \in \Sii} \EE[\hat \Delta_s(\mu_s)] 
		\stackrel{(i)}{\leq} 2 \sum_{s \in \Sii} \EE[\delta_s] 
		\leq 4\sum_{s \in \Sii} \EE[\beta_{s,s^2}^{1/2}\|x_s\|_{V_s^{-1}}]\,,
	\end{align*}
	where (i) uses $\hat \Delta_s(\mu_s) \leq 2 \delta_s$ (Lemma \ref{lem:almost-greedy}). From here, we can apply Cauchy-Schwarz in a similiar manner as before, to get
	\begin{align*}
		\EE\B[\sum_{s \in \Sii} \beta_{s,s^2}^{1/2} \|x_s\|_{V_s^{-1}}\B]^2 &\leq \EE\B[\sum_{s \in \Sii} \beta_{s,s^2}^{1/2}m_s^{-1/2}\B] \EE\B[\sum_{s \in \Sii} m_s^{1/2} \beta_{s,s^2}^{1/2} \|x_s\|_{V_s^{-1}}^2\B]\\
		& \leq \EE\B[\sum_{s \in \Sii} \hat \Delta_s(x_s)\B] \oO\big(d^{2}\log(n)^{1/2}\EE[\log(s_n)^2]\big)\,.
	\end{align*}
	For the last inequality, we used that $4\beta_{s,s^2}^{1/2} m_s^{-1/2} \leq \delta_s \leq \hat \Delta_s(x_s)$, the elliptic potential (Lemma \ref{lem:elliptic-potential}) and $m_s \leq \beta_{s_n, n \log (n)} \leq \oO(\log(n) + d \log(s_n))$.
	Hence, combining the last two displays and  $\Delta_s(x_s) \leq 2 \hat \Delta_s(x_s)$, we get
	\begin{align*}
		\EE\B[\sum_{s \in \Sii} \Delta(x_s)\B] \leq\oO\big(d^{2}\log(n)^{1/2}\EE[\log(s_n)^2]\big) \,.
	\end{align*}
	\paragraph{Sum over $\bm \Siii$:} Denote $\bar \Delta_s(x) = \ip{\hat \theta_s, \hat x_s - x}$. Note that $\hat x_s = x^*$ continues to hold, and hence
	\begin{align}
		\EE\Big[\tsum_{s \in \Siii} \Delta(x_s)\Big] \leq \EE\Big[\tsum_{s \in \Siii} \bar \Delta_s(x_s)\Big]  + \EE \Big[\tsum_{s \in \Siii}\beta_s^{1/2} \|x^* - x_s\|_{V_s^{-1}}\Big]\,. \label{eq:main-split}
	\end{align}
	For the second sum, note that by Lemma \ref{lem:const-information} the information gain of $x_s \neq x^*$ is lower bounded by a constant, $I_s(x_s) \geq \Omega\left(\frac{\Delta_{\min}^2}{d}\right)$. As in \eqref{eq:ms-lemma-applied}, Lemma \ref{lem:ms} implies 
	\begin{align*}
		\beta_s^{1/2} \|x^* - x_s\|_{V_s^{-1}} \leq 2\beta_{s}^{1/2} m_s^{-1/2}\chf{x_s \neq x^*}  \leq  \oO\big(\lambda^{1/2} d \Delta_{\min}^{-2} I_s(x_s)\big)\,.
	\end{align*}
	Summing the last display inside the expectation and using Lemma \ref{lem:info-bound-agnostic} yields
	\begin{align*}
		\EE\Big[\tsum_{s \in \Siii} \beta_s^{1/2} \|x^* - x_s\|_{V_s^{-1}}\Big] \leq \oO\big(\lambda^{1/2} d\log(n) \EE[\log(s_n)]\big)\,.
	\end{align*}
	For the first sum in \eqref{eq:main-split}, we use $4ab \leq (a + b)^2$ and Cauchy-Schwarz combined with a few applications of the towering rule, to get
	\begin{align}
		\EE\bigg[\sum_{s \in \Siii} \bar \Delta_s(\mu_s)\bigg] 
		&\leq \frac{1}{4} \EE\bigg[\sum_{s \in \Siii}\delta_s\bigg]^{-1} \EE\bigg[\sum_{s \in \Siii} \hat \Delta_{s }(\mu_s)\bigg]^2 \nonumber \\
		&\leq \frac{1}{4} \EE\bigg[\sum_{s \in \Siii}\delta_s\bigg]^{-1} \EE\bigg[\sum_{s \in \Siii} \Psi_{s}(\mu_s)\bigg]\EE\bigg[\sum_{s \in \Siii} I_s(x_s)\bigg] \label{eq:S3-CS}
	\end{align}
	Lemma \ref{lem:ratio-bound-asymptotic} bounds the information ratio, $\Psi_s(\mu_s) \leq 4 \delta_s (c^* + \oO(\delta_s + \beta_s^{1/2}m_s^{-1/2}) \leq 4 \delta_s (c^* + \oO(\lambda))$, making use of $\delta_s/4 \leq \beta_{s,s^2}^{1/2}m_s^{-{1/2}} \leq \lambda^{1/2}$. In particular,
	\begin{align*}
		\frac{1}{4} \EE\bigg[\sum_{s \in S_3}\delta_s\bigg]^{-1} \EE\bigg[\sum_{s \in S_3} \Psi_{s}(\mu_s)\bigg] \leq c^* + \oO(\lambda^{1/2})
	\end{align*}
	To bound the information gain on $S_3$, denote $l_s(q_s) = \sum_{x \neq x^*} q_s(x) \ip{\hat \nu_s(x) - \hat \theta_s, x_s}^2$. Note that since $\hat x_s = x^*$ on $\Siii$, $l_s(q_s) = I_s(x_s)$. Further, let $J_s = \chf{24^2 \eta_s \beta_{s} \|x_s\|_{V_s^{-1}}^2 \leq 1; \beta_s \|x_s\|_{V_s^{-1}}^2 \leq 1}$. It is easy to verify that for small enough $\lambda$, $J_s=1$ for all $s \in \Siii$. Hence, by Lemma \ref{lem:asymptotic-information} and $m_s \leq \log(n) + \log\log(n) + \oO(d\log(s_n))$,
	\begin{align*}
		\EE\bigg[\sum_{s \in \Siii} I_s(x_s)\bigg] = \EE\bigg[\sum_{s \in \Siii} l_s(q_s)\bigg] \leq \EE\bigg[\sum_{s=1}^{s_n} J_s l_s(q_s)\bigg] \leq \log(n) + \oO\big(\log(n)^{1/2}\EE[\log(s_n)^2]\big)
	\end{align*}
	Combing the bounds on the information ratio and the information gain, we get
	\begin{align*}
		\EE\bigg[\sum_{s \in S_3} \bar \Delta_s(\mu_s)\bigg]  \leq \big(c^* + \oO(\lambda^{1/2})\big)\big(\log(n) + \oO(\log(n)^{1/2} \EE[\log(s_n)^2])\big)
	\end{align*}
	Hence we conclude
	\begin{align*}
		\EE\B[\sum_{s \in \Siii} \Delta(s)\B] \leq c^* \log(n) + \oO\big(\lambda^{1/2} \log(n)\big)\,.
	\end{align*}
	Finally, with Lemma \ref{lem:effective-horizon}, we get that $\EE[\log(s_n)^b] \leq \oO(\log \log(n))$. Therefore, with $\lambda = \log(n)^{-2/3}$ all terms except for $c^* \log(n)$ are of lower order and the claim follows.
\end{proofof}

\subsection{Technical Lemmas}

\begin{lemma}[Elliptic potential lemma] \label{lem:elliptic-potential} Assume that $\|x_{s}\|_{V_s^{-1}}^2 \leq 1$ and $\|x_s\|_2 \leq 1$. Then
	\begin{align*}
		\sum_{s=1}^{s_n} \|x_s\|_{V_s^{-1}}^2 \leq 2 \log \det(V_{s_n}) \leq 2 d \log\left(\frac{s_n + d}{d}\right)
	\end{align*}
\end{lemma}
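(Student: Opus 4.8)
The plan is to run the classical ``elliptic potential'' argument, which tracks the multiplicative growth of $\det(V_s)$ under the rank-one updates $V_{s+1} = V_s + x_s x_s^\T$. First I would apply the matrix determinant lemma: for each exploration round,
\[
\det(V_{s+1}) = \det(V_s)\det\!\bigl(\mathbf{1}_d + V_s^{-1/2} x_s x_s^\T V_s^{-1/2}\bigr) = \det(V_s)\bigl(1 + \|x_s\|_{V_s^{-1}}^2\bigr).
\]
Telescoping from $s=1$ to $s_n$ and using $V_1 = \mathbf{1}_d$ (so $\det(V_1)=1$) gives
\[
\det(V_{s_n+1}) = \prod_{s=1}^{s_n}\bigl(1 + \|x_s\|_{V_s^{-1}}^2\bigr),
\]
with the understanding that in the appendix's indexing convention $V_{s_n}$ already denotes the covariance matrix built from all $s_n$ rounds (I would take care to match whichever convention is in force, since the only genuinely delicate point here is an off-by-one in the index of $V$).

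The second step is to linearize. The hypothesis $\|x_s\|_{V_s^{-1}}^2 \leq 1$ — which holds automatically since $V_s \succeq \mathbf{1}_d$ and $\diam(\xX)\leq 1$ — lets me use the elementary inequality $u \leq 2\log(1+u)$ valid on $[0,1]$. Hence
\[
\sum_{s=1}^{s_n}\|x_s\|_{V_s^{-1}}^2 \;\leq\; 2\sum_{s=1}^{s_n}\log\bigl(1+\|x_s\|_{V_s^{-1}}^2\bigr) \;=\; 2\log\det(V_{s_n+1}),
\]
which is the first claimed inequality.

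The third step bounds the log-determinant by a trace argument. Writing $\lambda_1,\dots,\lambda_d$ for the eigenvalues of the final covariance matrix, AM--GM gives $\det(V_{s_n+1}) = \prod_i \lambda_i \leq \bigl(\tfrac1d\sum_i\lambda_i\bigr)^d = \bigl(\operatorname{tr}(V_{s_n+1})/d\bigr)^d$. Since $\operatorname{tr}(V_{s_n+1}) = d + \sum_{i=1}^{s_n}\|x_i\|_2^2 \leq d + s_n$, using $\|x_i\|_2 \leq 1$, I obtain $\log\det(V_{s_n+1}) \leq d\log\!\bigl(\tfrac{s_n+d}{d}\bigr)$, completing the chain of inequalities.

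There is no substantial obstacle in this lemma: the content is entirely standard, and the only things that require attention are (i) keeping the index of $V$ consistent with the paper's convention so that no spurious off-by-one appears, and (ii) checking that the two boundedness hypotheses are exactly what feeds the two nontrivial estimates — $\|x_s\|_{V_s^{-1}}^2\leq 1$ for the inequality $u\leq 2\log(1+u)$, and $\|x_s\|_2\leq 1$ for the trace bound — both of which are guaranteed by the algorithm's regularization $V_s\succeq\mathbf{1}_d$ and the assumption $\diam(\xX)\leq 1$.
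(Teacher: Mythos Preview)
Your argument is correct and is precisely the standard proof of the elliptic potential lemma from \citet[Lemma~11]{Abbasi2011improved}, which is all the paper invokes here rather than giving an independent proof. Your caveat about the off-by-one in the index of $V$ is well placed: the paper's own notation is inconsistent (the main text sets $V_s = \sum_{i=1}^{s-1} x_i x_i^\T + \mathbf{1}_d$ while the notation table in Appendix~\ref{app:notation} has $V_s = \sum_{i=1}^{s} x_i x_i^\T + \mathbf{1}_d$), and the lemma statement only matches your telescoping product under the latter convention; this is cosmetic and does not affect any downstream use.
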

\noindent A proof can be found in \citep[Lemma 11]{Abbasi2011improved}. Note that by $\diam(\xX) \leq 1$ and the choice $V_0 = \eye_d$, the assumptions of the lemma are always satisfied for our setting.

\begin{lemma}\label{lem:ms}
	Let $\beta_s = \|\hat \theta_s - \thetaopt\|_{V_s}^2$ and $m_s = \frac{1}{2}\min_{x \neq \hat x_s} \|\hat \nu(x) - \hat \theta_s\|_{V_s}^2$. Assume that $\beta_s < 2m_s$ and $\max_{x \in \xX}\Delta(x) \leq 1$. Then $\hat x_s = x^*$ and  further, for all $x \in \xX$,
	\begin{align*}
		\big((2m_s)^{1/2} - \beta_s^{1/2}\big)\|x^* -x\|_{V_s^{-1}} \leq 1\,.
	\end{align*}
\end{lemma}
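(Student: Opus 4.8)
The plan is to establish the two assertions separately, in the order stated.

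\textbf{Step 1: $\hat x_s = x^*$.} I would argue by contradiction. Suppose $\hat x_s \neq x^*$; then $x^*$ is one of the competitor actions $z \neq \hat x_s$ over which the minimum defining $m_s$ is taken. The corresponding feasible set $\hh{\hat x_s}{x^*} = \{\nu \in \mM : \ip{x^* - \hat x_s, \nu} \ge 0\}$ contains $\thetaopt$: indeed $\thetaopt \in \mM$ by assumption, and $\ip{x^* - \hat x_s, \thetaopt} = \Delta(\hat x_s) \ge 0$ since $x^*$ is optimal for $\thetaopt$. Because $\hat\nu_s(x^*)$ minimizes $\|\nu - \htheta_s\|_{V_s}^2$ over $\hh{\hat x_s}{x^*}$, a set containing $\thetaopt$, we get $\|\hat\nu_s(x^*) - \htheta_s\|_{V_s}^2 \le \|\thetaopt - \htheta_s\|_{V_s}^2 = \beta_s$, whence $2 m_s \le \|\hat\nu_s(x^*) - \htheta_s\|_{V_s}^2 \le \beta_s$. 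This contradicts the hypothesis $\beta_s < 2 m_s$, so $\hat x_s = x^*$. (Note this step does not use the closed form of $\hat\nu_s$, only that it is a projection.)

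\textbf{Step 2: the norm bound.} For $x = x^*$ the inequality is trivial, so fix $x \neq x^*$ and set $w = x^* - x = \hat x_s - x$. The idea is to use the closed-form expression \eqref{eq:nu-closed-form} for the projection $\hat\nu_s(x)$ of $\htheta_s$ onto $\hh{\hat x_s}{x}$, which gives the identity $\|\hat\nu_s(x) - \htheta_s\|_{V_s}^2 = \ip{w, \htheta_s}^2 / \|w\|_{V_s^{-1}}^2$. Since $2 m_s \le \|\hat\nu_s(x) - \htheta_s\|_{V_s}^2$ and $m_s > 0$ (as $2m_s > \beta_s \ge 0$), this forces $\ip{w, \htheta_s} > 0$ and, taking square roots, $(2 m_s)^{1/2}\, \|w\|_{V_s^{-1}} \le \ip{w, \htheta_s}$. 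It then remains to bound $\ip{w, \htheta_s}$ from above: writing $\ip{w, \htheta_s} = \ip{w, \thetaopt} + \ip{w, \htheta_s - \thetaopt}$, using $\ip{w, \thetaopt} = \Delta(x) \le \max_{x' \in \xX}\Delta(x') \le 1$ and Cauchy--Schwarz $\ip{w, \htheta_s - \thetaopt} \le \|w\|_{V_s^{-1}}\,\|\htheta_s - \thetaopt\|_{V_s} = \beta_s^{1/2}\|w\|_{V_s^{-1}}$, we obtain $(2m_s)^{1/2}\|w\|_{V_s^{-1}} \le 1 + \beta_s^{1/2}\|w\|_{V_s^{-1}}$. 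Rearranging yields $\big((2m_s)^{1/2} - \beta_s^{1/2}\big)\|x^* - x\|_{V_s^{-1}} \le 1$, as claimed.

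\textbf{Main obstacle.} Step 1 and the final rearrangement are routine; the delicate point is the geometric identity in Step 2. Formula \eqref{eq:nu-closed-form} computes the projection onto the \emph{unconstrained} halfspace $\{\nu : \ip{x - x^*, \nu} \ge 0\}$, which equals $\hat\nu_s(x)$, the projection onto $\hh{\hat x_s}{x} = \mM \cap \{\ip{x - x^*, \nu} \ge 0\}$, precisely when the hyperplane-projection point of $\htheta_s$ lies in $\mM$ — in particular whenever $\mM = \RR^d$, the regime in which the closed form is used. For a general polytope $\mM$ one can only conclude $2m_s \le \|\hat\nu_s(x) - \htheta_s\|_{V_s}^2$, whose right-hand side may exceed $\ip{w,\htheta_s}^2/\|w\|_{V_s^{-1}}^2$, so an extra argument exhibiting a feasible competitor in $\hh{\hat x_s}{x}$ at the right $V_s$-distance from $\htheta_s$ would be needed to recover the identity as an inequality in the useful direction. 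Everything else is elementary.
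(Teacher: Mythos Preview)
Your proof is correct and follows essentially the same route as the paper. Step~1 is identical in spirit (the paper phrases it via $\thetaopt\in\cC_{\hat x_s}$, you via $\thetaopt\in\hh{\hat x_s}{x^*}$). For Step~2 the paper does not invoke the closed-form projection \eqref{eq:nu-closed-form} but instead writes the dual statement
\[
0 \;\le\; \min_{\nu:\|\nu-\hat\theta_s\|_{V_s}^2\le 2m_s}\ip{\nu,x^*-x}
\;=\;\ip{\hat\theta_s,x^*-x}-(2m_s)^{1/2}\|x^*-x\|_{V_s^{-1}},
\]
and then applies Cauchy--Schwarz and $\Delta(x)\le 1$ exactly as you do. The ellipsoid-minimum identity and your projection identity $\|\hat\nu_s(x)-\hat\theta_s\|_{V_s}^2=\ip{w,\hat\theta_s}^2/\|w\|_{V_s^{-1}}^2$ are two faces of the same coin, so the arguments coincide. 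Your ``main obstacle'' remark is well taken: the paper's inequality $0\le\min_{\nu\in\text{ellipsoid}}\ip{\nu,x^*-x}$ also implicitly uses that the unconstrained halfspace projection lies in $\mM$, so the dependence on $\mM=\RR^d$ (or on the projection falling inside $\mM$) is present in both versions.
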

\begin{proof}
	Since $m_s = \frac{1}{2}\min_{x \neq \hat x_s}\min_{\nu \in \cC_x} \|\nu - \hat \theta_s\|_{V_s}^2$, the assumption that $\beta_s = \|\hat \theta_s - \thetaopt\|_{V_s}^2 < 2 m_s$ implies that $\thetaopt \in \cC_{\hat x_s}$, and therefore $\hat x_s = x^*$. Further, for any $x \in \xX$,
	\begin{align*}
		0 = \min_{\nu : \|\nu - \hat \theta_s \|_{V_s}^2 \leq 2 m_s} \ip{\nu, x^* - x} &= \ip{\hat \theta, x^* - x} - (2m_s)^{1/2}\|x^* - x\|_{V_s^{-1}}\\
		& \leq \ip{\thetaopt, x^* - x} + (\|\hat \theta_s - \thetaopt \|_{V_s} - (2m_s)^{1/2})\|x^* - x\|_{V_s^{-1}}\,.
	\end{align*}
	Using $\Delta(x) = \ip{\thetaopt, x^* - x} \leq 1$ and rearranging completes the proof.
\end{proof}

\begin{lemma} \label{lem:exploration-regret}
	Let $\beta_s = \|\hat \theta_s - \thetaopt\|_{V_s}^2$ and define the indicator $B_s = \chf{\beta_{s,s^2} \geq \beta_s}$  for rounds $s$ where the confidence bounds at level $\beta_{s,s^2}$ are valid. Assume that $\max_{x \in \xX} \Delta(x) \leq 1$. Then 
	\begin{align*}
		R_n \leq \EE\left[\sum_{s=1}^{s_n} \Delta_s(x_s) B_s \right] + \oO\big(\log \log(n)\big) \,.
	\end{align*}
\end{lemma}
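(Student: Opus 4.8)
The plan is to split the cumulative regret into the part accrued in exploration rounds and the part accrued in exploitation rounds, and then to split the former according to whether the confidence event $B_s$ holds. Reindexing exploration rounds by local time $s$ (so the $s$-th exploration round contributes the true gap $\Delta(x_s)$, which is what $\Delta_s(x_s)$ denotes in the statement), this gives
\[
R_n = \EE\Big[\tsum_{s=1}^{s_n}\Delta(x_s)B_s\Big] + \EE\Big[\tsum_{s=1}^{s_n}\Delta(x_s)(1-B_s)\Big] + \EE\Big[\tsum_{t:\ t\text{ exploit}}\Delta(x_t)\Big]\,,
\]
and it remains to show that the last two terms are $\oO(\log\log(n))$.

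For the middle term, $\{B_s=0\}$ is exactly the event that the confidence bound at level $\beta_{s,s^2}$ is violated at local time $s$. Since \eqref{eq:confidence-coefficient} applied with any single fixed $\delta$ already yields $\PP[\|\htheta_s-\thetaopt\|_{V_s}^2\ge\beta_{s,1/\delta}]\le\delta$ for every $s$ (the anytime bound implies the fixed-time one), taking $\delta=s^{-2}$ gives $\PP[B_s=0]\le s^{-2}$. Bounding $\Delta(x_s)(1-B_s)\le 1-B_s$ pointwise (using $\max_x\Delta(x)\le1$) and $\sum_{s=1}^{s_n}(1-B_s)\le\sum_{s\ge1}(1-B_s)$, Tonelli gives $\EE[\sum_{s=1}^{s_n}\Delta(x_s)(1-B_s)]\le\sum_{s\ge1}\PP[B_s=0]\le\sum_{s\ge1}s^{-2}=\oO(1)$.

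For the exploitation term, I would fix a global time $t$ that triggers exploitation with exploration counter $s$, so the test $m_s\ge\tfrac12\beta_{s,t\log t}$ holds and the algorithm plays $\hat x_s$. Let $G_t$ be the event that $\|\htheta_{s'}-\thetaopt\|_{V_{s'}}^2<\beta_{s',t\log t}$ for every local time $s'$; by \eqref{eq:confidence-coefficient} with $\delta=1/(t\log t)$ we get $\PP[G_t^c]\le1/(t\log t)$. On $G_t$ we have $\beta_s<\beta_{s,t\log t}\le2m_s$, so Lemma~\ref{lem:ms} (using $\max_x\Delta(x)\le1$) forces $\hat x_s=x^*$ and the round-$t$ regret vanishes; off $G_t$ it is at most $1$. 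Hence $\EE[\Delta(x_t)\chf{t\text{ exploit}}]\le1/(t\log t)$ for all $t$ large enough that $\beta_{s,t\log t}$ is well-defined, and absorbing the finitely many initial rounds into an additive constant, $\EE[\sum_{t:\ t\text{ exploit}}\Delta(x_t)]\le\oO(1)+\sum_{t\ge3}(t\log t)^{-1}=\oO(\log\log(n))$ by the integral test. Combining the three displays proves the lemma.

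The step I expect to be the crux is the exploitation bound, and in particular the observation that the confidence level in the exploitation test must be $\beta_{s,t\log t}$: its failure probability $1/(t\log t)$ is essentially the slowest-decaying per-round failure probability whose partial sums are still $\oO(\log\log n)$, whereas $1/t$ would already cost $\oO(\log n)$ and ruin the asymptotics. The rest is bookkeeping — the exploration/exploitation regret decomposition, Tonelli to move expectations past the random-length sums $\sum_{s=1}^{s_n}$ and $\sum_t$, and invoking Lemma~\ref{lem:ms} to convert "the confidence set at the test level is valid and the test passes" into "the empirically best action equals $x^*$".
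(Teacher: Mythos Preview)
Your proof is correct and follows essentially the same route as the paper: decompose into exploration rounds (split by $B_s$) and exploitation rounds, bound the bad exploration rounds via $\PP[B_s=0]\le s^{-2}$ summing to $\oO(1)$, and bound the exploitation rounds by invoking the anytime confidence bound at level $1/(t\log t)$ together with Lemma~\ref{lem:ms} to conclude $\hat x_s=x^*$ whenever the confidence holds, leaving a $\sum_t 1/(t\log t)=\oO(\log\log n)$ remainder. Your treatment is in fact slightly more careful than the paper's in two places --- you make the anytime nature of the event $G_t$ explicit (needed since $s_t$ is random), and you note the finitely many initial $t$ for which $t\log t$ is not yet $\ge 1$ --- but these are cosmetic refinements of the same argument.
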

\begin{proof}
	Naturally, the regret decomposes into exploration and exploitation rounds. When $\beta_s > \beta_{s,s^2}$ (in exploration rounds, indexed by local time $s$) or $\beta_{s_t} > \beta_{s_t, t \log t}$ (in exploitation rounds, indexed by global time $t$), the parameter estimate is too inaccurate to bound the regret, and we simply bound $\Delta(x) \leq 1$. 
  On the other hand, in exploitation rounds where $\beta_{s_t} \leq \beta_{s_t, t \log t}$, by the definition of an exploitation round, it
  holds that $m_{s_t} \geq \beta_{s_t, t \log t} \geq \beta_{s_t}$ and by Lemma~\ref{lem:ms} this implies that $\hat x_s = x^*$ and the regret vanishes. Hence,
	\begin{align*}
		R_n = \sum_{t=1}^n \Delta(x_t) \leq \sum_{s=1}^{s_n} \Delta(x_s)B_s + \sum_{s=1}^{s_n} \chf{\beta_{s,s^2} < \beta_s} + \sum_{t=1}^n \chf{\beta_{s_t, t \log t} < \beta_{s_t}}
	\end{align*}
	Note that by \eqref{eq:confidence-coefficient}, we have  $\PP[\beta_{s,s^2} < \beta_s] \leq 1/s^2$ and $\PP[\beta_{s_t, t \log t} < \beta_{s_t}] < \frac{1}{t \log t}$. Hence, in expectation we get
	\begin{align*}
		\EE[R_n] &\leq \EE\left[\sum_{s=1}^{s_n} \Delta(x_s)B_s + \sum_{s=1}^{s_n} \frac{1}{s^2} + \sum_{t=1}^n \frac{1}{t \log t} \right]\\
		&\leq \EE\left[\sum_{s=1}^{s_n} \Delta(x_s)B_s\right] + \oO(\log \log(n))\,.
	\end{align*}
\end{proof}

\begin{lemma}\label{lem:effective-horizon}
	Assume that $\|x^*\|_2 > 0$. Then the number of exploration steps $s_n$ in Algorithm \ref{alg:ids-anytime} is bounded in expectation,
	\begin{align*}
		\EE[s_n^{1/2}] \leq \oO\left(d^2\Delta_{\min}^{-1}\log(n)^2 \|x^*\|_2^{-1}\right)\,.
	\end{align*}
	 In particular, for any $b \geq 1$, we have $\EE[\log(s_n)^b] \leq \oO(\log \log(n))$.
\end{lemma}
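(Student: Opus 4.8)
I would bound $s_n$ by splitting the exploration rounds according to whether IDS samples the greedy action $\hat x_s$ or some other action $z_s\ne\hat x_s$, the point being that any round of the second kind is \emph{informative} (Lemma~\ref{lem:const-information}), while rounds of the first kind cannot accumulate because in every exploration round IDS plays an exploratory action with probability at least of order $\|x^*\|_2\,s^{-1/2}$ — this is where the hypothesis $\|x^*\|_2>0$ and the $\sqrt{\cdot}$ enter. Throughout, rounds with $\beta_s>\beta_{s,s^2}$ (bad confidence) contribute only $\oO(1)$ in expectation by the union bound of Lemma~\ref{lem:exploration-regret}, and a further $\oO(\poly(d,\Dmin^{-1},\log n))$ ``early'' rounds — those before $\delta_s$ is small enough that the arguments below apply, a number itself controlled by the information budget — are set aside; on the remaining (``late, good'') rounds one has $\hat x_s=x^*$ and, by Lemma~\ref{lem:almost-greedy}, $\supp(\mu_s)\subseteq\{x^*,z_s\}$ for a single other action $z_s$.

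\textbf{Counting informative samples.} On a late, good round with $z_s\ne x^*$, Lemma~\ref{lem:const-information} gives $I_s(x_s)=I_s(z_s)\ge c_0\deq\Dmin^2/(8(8d+9))$ with conditional probability $\mu_s(z_s)$, so $\EE[\#\{s\le s_n: x_s\ne x^*\}]=\EE[\sum_s\mu_s(z_s)]\le \EE[\gamma_n]/c_0$. Using the worst-case bound $\gamma_n\le\oO(d^2\log(n)^2)$ of Lemma~\ref{lem:info-bound-agnostic} this is at most $\oO(d^3\Dmin^{-2}\log(n)^2)$.

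\textbf{Exploration probability (the crux).} I would show $\mu_s(z_s)\gtrsim \|x^*\|_2\,s^{-1/2}$ on every late, good round via the closed form in Lemma~\ref{lem:two-action-ids} applied to the pair $(x^*,z_s)$. Three ingredients: (i) $\delta_s=\hat\Delta_s(x^*)\ge\beta_{s,s^2}^{1/2}\|x^*\|_{V_s^{-1}}\ge\|x^*\|_2/\sqrt s$, since $\hat\Delta_s(x)\ge\beta_{s,s^2}^{1/2}\|x\|_{V_s^{-1}}$ for all $x$, $\beta_{s,s^2}^{1/2}\ge1$, and $\lambda_{\max}(V_s)\le s$; (ii) the greedy action is uninformative — by the soft-min bound \eqref{eq:softmin}, Cauchy–Schwarz, $\beta_s\le\beta_{s,s^2}$, and $m_s<\tfrac12\beta_{s,t\log t}$ in exploration rounds, $I_s(x^*)\le\bigl(2m_s+2\log(k)/\eta_s+\beta_{s,s^2}\bigr)\|x^*\|_{V_s^{-1}}^2\le\oO(\log n)\|x^*\|_{V_s^{-1}}^2$, which once $x^*$ has been sampled enough (hence $\|x^*\|_{V_s^{-1}}^2$ small — the defining condition of ``late'') is $\le c_0/2\le I_s(z_s)/2$; (iii) an exploration round always has a ``bottleneck'' direction: $m_s<\tfrac12\beta_{s,t\log t}$ yields $z^\dagger\ne x^*$ with $\|x^*-z^\dagger\|_{V_s^{-1}}^2\gtrsim\Dmin^2/\log n$, so in the late regime $\|z^\dagger\|_{V_s^{-1}}^2$ and hence $I_s(z^\dagger)\ge\tfrac12\beta_{s,s^2}\|z^\dagger\|_{V_s^{-1}}^2$ are large; feeding this into Lemma~\ref{lem:two-action-ids} shows $p^*(x^*,z^\dagger)>0$, so the pure-greedy distribution $e_{x^*}$ is never IDS-optimal in an exploration round and $z_s\ne x^*$. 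The formula of Lemma~\ref{lem:two-action-ids} then gives
\[
\mu_s(z_s)\;\ge\;\frac{\delta_s}{\hat\Delta_s(z_s)-\delta_s}-\frac{2I_s(x^*)}{I_s(z_s)-I_s(x^*)}\;\ge\;\tfrac12\delta_s\;\ge\;\tfrac12\|x^*\|_2\,s^{-1/2}.
\]

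\textbf{Assembling and the ``in particular''.} Combining the last two displays on the good event: with $s_{\mathrm{early}}=\oO(\poly(d,\Dmin^{-1},\log n))$ the number of set-aside rounds,
\[
\oO\!\left(\tfrac{d^3\log(n)^2}{\Dmin^2}\right)\;\ge\;\EE\Bigl[\sum_{s\le s_n}\mu_s(z_s)\Bigr]\;\ge\;\tfrac12\|x^*\|_2\,\EE\Bigl[\sum_{s_{\mathrm{early}}<s\le s_n}s^{-1/2}\Bigr]\;\ge\;\tfrac12\|x^*\|_2\,\EE\bigl[(\sqrt{s_n}-2\sqrt{s_{\mathrm{early}}})_+\bigr],
\]
using $\sum_{s=1}^{S}s^{-1/2}\ge\sqrt S$; absorbing $\sqrt{s_{\mathrm{early}}}$ and the $\oO(1)$ from bad rounds (via subadditivity of $\sqrt{\cdot}$) yields $\EE[s_n^{1/2}]\le\oO(d^2\Dmin^{-1}\|x^*\|_2^{-1}\log(n)^2)$. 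For the consequence, Markov's inequality on $\sqrt{s_n}$ gives $\PP[s_n>M]\le\oO(\poly(\log n)/\sqrt M)$; choosing $M=\poly(\log n)$ with $\PP[s_n>M]\le\log(n)^{-b}$ and splitting $\EE[\log(s_n)^b]\le\log(M)^b+\PP[s_n>M]\log(n)^b$ gives $\EE[\log(s_n)^b]=\oO(\log\log(n))$ (up to the $b$-dependent constant).

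\textbf{Main obstacle.} The delicate step is the exploration-probability bound: controlling $I_s(x^*)$ from above and $I_s(z_s)$ from below \emph{uniformly} over exploration rounds, using the bottleneck direction to exclude long stretches of pure-greedy play, and cleanly isolating the $\oO(\poly)$ early rounds in which these estimates need not yet hold — everything else is bookkeeping with the information budget.
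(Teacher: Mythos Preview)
Your approach is considerably more intricate than the paper's, and the ``crux'' you flag is precisely the step the paper avoids entirely. The paper's argument is almost a two-liner once Theorem~\ref{thm:regret-log} is in hand: since $\delta_s=\hat\Delta_s(\hat x_s)\le\hat\Delta_s(x_s)$ for every $x_s$, the gap-dependent regret bound already gives
\[
\EE\Big[\sum_{s=1}^{s_n}\delta_s\Big]\;\le\;\EE\Big[\sum_{s=1}^{s_n}\hat\Delta_s(x_s)\Big]\;\le\;\oO\!\big(d^3\Delta_{\min}^{-1}\log(n)^2\big),
\]
while your own observation (i) --- $\delta_s\ge\beta_{s,s^2}^{1/2}\|x^*\|_{V_s^{-1}}\ge\|x^*\|_2\,s^{-1/2}$ on rounds with $\hat x_s=x^*$ --- provides the matching lower bound $\EE[\sum_s\delta_s]\ge\|x^*\|_2\,\EE[\sqrt{s_n}]$ (modulo $\oO(\poly)$ rounds with $2\delta_s>\Delta_{\min}$). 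Dividing yields the claim. The ``in particular'' then follows from Jensen, using that $\log(s)^b$ is concave for $s\ge\exp(b-1)$.

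So the paper never opens the IDS distribution: no Lemma~\ref{lem:two-action-ids} calculation, no bounding of $I_s(x^*)$, no bottleneck-direction argument, no circularity in defining ``late''. Your route is not wrong in spirit, but the displayed inequality $\mu_s(z_s)\ge\tfrac12\delta_s$ is not justified as written. The first term $\delta_s/(\hat\Delta_s(z_s)-\delta_s)$ is only $\ge\delta_s$ if $\hat\Delta_s(z_s)-\delta_s=\ip{\hat x_s-z_s,\hat\theta_s}\le1$, which need not hold on the good event (the estimated gap can exceed the true gap by $\beta_s^{1/2}\|x^*-z_s\|_{V_s^{-1}}$, and $\beta_{s,s^2}^{1/2}$ grows); and the second term being $\le\tfrac12\delta_s$ requires $I_s(x^*)\lesssim c_0\,\delta_s$, which in turn needs $\|x^*\|_{V_s^{-1}}\lesssim c_0/\log n$ --- a ``late'' condition whose onset depends on how many times $x^*$ has been sampled, i.e.\ on the very quantity you are bounding. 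These can likely be repaired (e.g.\ bootstrap on $N_s^*$, accept an extra $\log$ factor from $\hat\Delta_s(z_s)-\delta_s$), but the paper's sandwich on $\sum_s\delta_s$ sidesteps all of it.
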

\begin{proof}
	By Theorem \ref{thm:regret-log},
	\begin{align*}
		\EE\left[\sum_{s=1}^{s_n} \delta_s\right] \leq \EE\left[\sum_{s=1}^{s_n} \hat \Delta_s(x_s)\right] \leq  \oO\left(d^2 \Delta_{\min}^{-1}  \|x^*\|_2^{-1} \log(n)^2\right)\,.
	\end{align*}
	We can assume that $2 \delta_s < \Delta_{\min}$, since there can be at most $\oO\left(d^2 \Delta_{\min}^{-2} \log(n)^2\right)$ steps where this condition is not satisfied. In particular, the assumption implies that $x^* = \hat x_s$, since for all $x \neq x^*$, $2 \hat \Delta_s(x) \geq \Delta_{\min}$. Therefore, 
	\begin{align*}
		\delta_s = \max_{z \in \xX}\ip{z - x^*, \hat \theta_s} + \beta_{s,s^2}^{1/2}\|z\|_{V_s^{-1}} \geq  \beta_{s,s^2}^{1/2} \|x^*\|_{V_s^{-1}} \geq \|x^*\|_2 s^{-1/2}\,.
	\end{align*}
	The last inequality follows from since $\lambda_{\max}(V_s) \leq s$. Hence further
	\begin{align*}
		\EE\left[\sum_{s=1}^{s_n} \delta_s\right] \geq \|x^*\| (s_n^{1/2} - \oO\left(d \Delta_{\min}^{-1} \log(n)^1\right))\,.
	\end{align*}
	This proves the first claim. For the second part, note that $\log(s)^b$ is concave for $s \geq \exp(b-1)$. Hence
\begin{align*}
	\EE[\log(s_n)^b] = 2^b\EE[\log(s_n^{1/2})^b] &\leq 2^b \EE[\log\big(\max(s_n^{1/2}, \exp(b-1))\big)^b]\\
	& \leq 2^b\log(\EE[s_n^{1/2}] + \exp(b-1))\\
	& \leq \oO(\log \log(n))
\end{align*}
\end{proof}


\begin{lemma}[Softmin approximation]\label{lem:softmin} $A_1, \dots A_k \geq 0$ be a sequence of positive numbers and $a = \min_{i \in [k]} A_i$. Let $q_i(x) \propto \exp(-\eta A_i)$ be exponential mixing weights with $\eta > 0$. Then
	\begin{align*}
		\sum_{i \in [k]} q_i A_i \leq a + \frac{\log(k)}{\eta}\,.
	\end{align*}
	Further, the mixing weights $q_i$ are bounded as follows,
	\begin{align*}
		\frac{1}{k} \exp\left(-\eta (A_i - a)\right) \leq q_i \leq \exp\left(-\eta (A_i - a)\right)\,.
	\end{align*}
\end{lemma}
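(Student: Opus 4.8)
The plan is to reduce both parts to elementary properties of the log-partition function (log-sum-exp). I would write $Z \deq \sum_{j\in[k]} \exp(-\eta A_j)$, so that by definition $q_i = \exp(-\eta A_i)/Z$. Factoring $\exp(-\eta a)$ out of the numerator and the denominator yields the renormalised form $q_i = \exp(-\eta(A_i-a))/S$ with $S \deq \sum_{j\in[k]}\exp(-\eta(A_j-a))$, which is the convenient representation for both claims.

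For the two-sided bound on the weights, I would observe that since $A_j \geq a$ for every $j$, each summand $\exp(-\eta(A_j-a))$ lies in $(0,1]$, hence $S \leq k$; on the other hand, the index attaining the minimum contributes a term equal to $1$, so $S \geq 1$. Substituting $1 \leq S \leq k$ into $q_i = \exp(-\eta(A_i-a))/S$ immediately gives $\tfrac1k \exp(-\eta(A_i-a)) \leq q_i \leq \exp(-\eta(A_i-a))$.

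For the soft-min inequality, the key identity is $A_i = -\tfrac1\eta \log q_i - \tfrac1\eta \log Z$, obtained by taking logarithms of $q_i = \exp(-\eta A_i)/Z$. Multiplying by $q_i$ and summing over $i$ gives $\sum_{i\in[k]} q_i A_i = \tfrac1\eta H(q) - \tfrac1\eta \log Z$, where $H(q) = -\sum_{i\in[k]} q_i \log q_i$ is the Shannon entropy of $q$. Since $q$ is supported on $k$ atoms, $H(q) \leq \log k$; and since $Z \geq \exp(-\eta a)$ (keeping only the minimising term in the sum), $-\tfrac1\eta \log Z \leq a$. Combining these two estimates yields $\sum_{i\in[k]} q_i A_i \leq a + \tfrac{\log k}{\eta}$, as claimed.

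There is no genuine obstacle here: this is a standard Gibbs-variational / log-sum-exp estimate, and the only thing to be careful about is the sign bookkeeping when passing between $A_i$, $\log q_i$, and $\log Z$. Equivalently, one could invoke that the Gibbs distribution $q$ minimises $p \mapsto \sum_{i} p_i A_i - \tfrac1\eta H(p)$ over the probability simplex and compare its value with that of the Dirac at the minimising index, which reproduces the same inequality in a single line.
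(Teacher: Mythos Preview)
Your proof is correct. For the weight bounds, your argument is identical to the paper's: both reduce to the normalization bound $\exp(-\eta a)\le Z\le k\exp(-\eta a)$. For the soft-min inequality, the paper takes a convex-analytic route via the log-sum-exp function $\psi^*_\eta(A)=\tfrac1\eta\log\sum_i\exp(\eta A_i)$, observing that $q=\nabla\psi^*_\eta(-A)$ and applying the gradient inequality of the convex function $\psi^*_\eta$ between the points $-A$ and $0$; this yields $\sum_i q_iA_i\le\psi^*_\eta(0)-\psi^*_\eta(-A)\le\tfrac{\log k}{\eta}+a$ in one step. Your version instead unwinds the identity $\sum_i q_iA_i=\tfrac1\eta H(q)-\tfrac1\eta\log Z$ and bounds each term separately. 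The two arguments are Fenchel-dual to one another (your alternative remark about the Gibbs variational principle makes this explicit), but your direct entropy decomposition is more elementary and avoids any appeal to convexity; the paper's approach, on the other hand, packages the computation into a single gradient inequality and generalizes more readily to other regularizers.
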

\begin{proof}
	Let	$\psi^*_\eta(A) = \frac{1}{\eta} \log \left(\sum_{i \in [k]} \exp(\eta A_i))\right)$ be the Fenchel conjugate of the normalized entropy function. A direct calculation confirms that
	\begin{align*}
		q = \nabla_A \psi^*_{\eta}(-A)\,.
	\end{align*}
	By convexity of $\psi_\eta^*$,
	\begin{align*}
		\sum_i q_i A_i = \ip{\nabla \psi^*_\eta(- A), A} \leq \psi^*_\eta(0) - \psi^*_\eta(-A) \leq \tfrac{1}{\eta}\log(k) + \min_{i} A_i\,.
	\end{align*}
	The last inequality follows from
	\begin{align*}
		\psi^*_\eta(-A) = \eta^{-1} \log\Big(\sum_i \exp(-\eta A_i)\Big) \geq \eta^{-1} \log\Big(\exp(-\eta \min_i A_i)\Big) = - \min_i A_i\,.
	\end{align*}
	For the bound on the mixing weights, note that the claim is equivalent to the following bound on the normalization constant,
	\begin{align*}
		\exp(-\eta a) \leq \sum_{i} \exp(-\eta A_i) \leq k \exp(-\eta a)\,.
	\end{align*}
\end{proof}

\begin{lemma}[Convex Polytopes]\label{lem:convex-subspace}
	Let $K$ be a convex polytope. For unit vector $\eta \in \RR^d$, let
	$K_0 = \{x \in K : \ip{x, \eta} = 0\}$ be the intersection of $k$ with a $(d-1)$-dimensional hyperplane, which is assumed to be non-empty.
	Then there exists a constant $c > 0$ such that for all $z \in K$,
	\begin{align*}
		\min_{x \in K_0} \norm{x - z}_2 \leq c \ip{z, \eta}\,.
	\end{align*}
\end{lemma}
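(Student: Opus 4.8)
The statement is the Hoffman-type \emph{linear regularity} bound for polyhedra: the distance from a point of $K$ to the intersection $K_0 = K\cap H$, with $H=\{x:\ip{x,\eta}=0\}$, is controlled linearly by its distance $|\ip{x,\eta}|$ to $H$. For a general (unbounded) polyhedron this is exactly Hoffman's lemma, but since $K$ here is a \emph{polytope} (bounded) I would give a short self-contained proof using only the finite vertex description together with compactness. First I would dispose of the degenerate case $K\subseteq H$, where $K_0=K$ and the bound holds with $c=0$. Otherwise I would split $K$ along $H$ into the two polytopes $K^+=K\cap\{\ip{x,\eta}\ge 0\}$ and $K^-=K\cap\{\ip{x,\eta}\le 0\}$; since every $z\in K$ lies in at least one of them, it suffices to treat $z\in K^+$, the case $z\in K^-$ being identical after replacing $\eta$ by $-\eta$ (and both invoke the hypothesis $K_0\neq\emptyset$).

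The key step is a per-vertex estimate. Let $h_{\min}$ be the smallest value of $\ip{w,\eta}$ over vertices $w$ of $K^+$ with $\ip{w,\eta}>0$; if there is no such vertex then $K^+\subseteq H$, so $K^+=K_0$ and we are done, hence assume $h_{\min}>0$. For a vertex $w$ of $K^+$: if $\ip{w,\eta}=0$ then $w\in K_0$ and $\min_{x\in K_0}\norm{w-x}=0$; if $\ip{w,\eta}>0$ then, using that $K_0\neq\emptyset$ is contained in the bounded set $K$,
\[
\min_{x\in K_0}\norm{w-x}\;\le\;\diam(K)\;=\;\frac{\diam(K)}{h_{\min}}\,h_{\min}\;\le\;\frac{\diam(K)}{h_{\min}}\,\ip{w,\eta}\,.
\]
So with $c_0:=\diam(K)/h_{\min}$ every vertex $w$ of $K^+$ admits a point $\bar w\in K_0$ with $\norm{w-\bar w}\le c_0\,\ip{w,\eta}$; boundedness of $K$ is used precisely here. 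Writing an arbitrary $z\in K^+$ as a convex combination $z=\sum_j\mu_j w_j$ of the finitely many vertices of $K^+$, the point $\bar z:=\sum_j\mu_j\bar w_j$ lies in $K_0$ by convexity, and by the triangle inequality and linearity of $w\mapsto\ip{w,\eta}$,
\[
\min_{x\in K_0}\norm{z-x}\;\le\;\norm{\sum_j\mu_j(w_j-\bar w_j)}\;\le\;\sum_j\mu_j\,\norm{w_j-\bar w_j}\;\le\;c_0\sum_j\mu_j\,\ip{w_j,\eta}\;=\;c_0\,\ip{z,\eta}\,.
\]
Taking $c$ to be the larger of the constants obtained from $K^+$ and $K^-$ then finishes the proof.

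I do not anticipate a real obstacle. The only points needing care are the degenerate cases ($K\subseteq H$, or one of the halfspace pieces contained in $H$) and the observation that the vertex displacements must be aggregated against $\mu_j\ip{w_j,\eta}$ \emph{with a consistent sign} --- this is exactly why $K$ is split into $K^+$ and $K^-$, since over all of $K$ the vertex ``heights'' could cancel and $\sum_j\mu_j|\ip{w_j,\eta}|$ could be much larger than $|\ip{z,\eta}|$. Alternatively one may simply invoke Hoffman's lemma, whose sole content here is that the polyhedral error bound is linear in the residual --- a property that genuinely uses polyhedrality and fails for general convex $K$.
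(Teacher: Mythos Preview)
Your proof is correct and follows essentially the same route as the paper: split $K$ by the hyperplane into two polytopes, establish a per-vertex inequality $\norm{w-\bar w}\le c_0\ip{w,\eta}$ for the vertices of each piece (with $\bar w\in K_0$), and then extend to arbitrary $z$ by writing it as a convex combination of vertices and averaging the corresponding $\bar w$'s. The only cosmetic difference is the per-vertex constant: the paper takes $c$ as the reciprocal of $\min_v\max_{x\in K_0}\ip{\eta,v-x}/\norm{v-x}$, whereas you use the cruder but equally valid $\diam(K)/h_{\min}$.
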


\begin{proof}
	Let $A = \{x \in K : \ip{x, \eta} \geq 0\}$, which is also a convex polytope. We first show there exists a $c > 0$ such that for all $z \in A$,
	\begin{align}
		\min_{x \in K_0} \norm{x - z}_2 \leq c \ip{z, \eta} \,.
		\label{eq:conv-upper}
	\end{align}
	The result follows by making a symmetric argument for $\{x \in K : \ip{x, \eta} \leq 0\}$.
	To establish (\ref{eq:conv-upper}),
	let $V \subset \RR^d$ be the vertices of $A$, which is a finite set.
	Define $h : A \setminus K_0 \to \RR$ by
	\begin{align*}
		h(z) = \max_{x \in K_0} \frac{\ip{\eta, z - x}}{\norm{z - x}}\,.
	\end{align*}
	Clearly, $1/c \triangleq \min_{v \in V : \ip{v, \eta} > 0} h(v) > 0$.
	Hence, the mapping $\varphi : V \to K_0$ such that $\varphi(v) = v$ for $v \in K_0$ and $\varphi(v) = \argmax_{x \in \kK_0} \frac{\ip{\eta, v - x}}{\norm{v - x}}$ satisfies $\norm{v - \varphi(v)}_2 \leq c\ip{\eta, v - \varphi(v)}$.
	Given any $z \in A$, let $\alpha$ be a probability distribution on $V$ such that $z = \sum_{v \in V} \alpha(v) v$ and let $x = \sum_{v \in V} \alpha(v) \varphi(v) \in K_0$.
	Then,
	\begin{align*}
		\norm{z - x}_2
		&= \norm{\sum_{v \in V} \alpha(v) v - \sum_{v \in V} \alpha(v) \varphi(v)}_2 \\
		&\leq \sum_{v \in V} \alpha(v) \norm{v - \varphi(v)}_2 \\
		&\leq c \sum_{v \in V} \alpha(v) \ip{\eta, v - \varphi(v)} \\
		&= c \ip{\eta, z}\,.
	\end{align*}
\end{proof}

\begin{lemma}\label{lem:ls-sherman}
	The one-step update to the least-squares estimator with data $y_{s} = \ip{x_{s}, \thetaopt} + \epsilon_{s}$ is
	\begin{align*}
		\hat{\theta}_{s+1} - \hat{\theta}_s = V_s^{-1} x_s\left(\frac{\epsilon_s + x_s^\T (\thetaopt - \hat{\theta}_s)}{1+ \|x_s\|_{V_s^{-1}}^2}\right)\,.
	\end{align*}
\end{lemma}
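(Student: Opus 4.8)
The plan is a direct computation resting on the Sherman--Morrison formula. First I would record the two bookkeeping facts that drive everything, being careful about the indexing convention $V_s = \sum_{i=1}^{s-1} x_i x_i^\T + \mathbf{1}_d$ used in Algorithm~\ref{alg:ids-anytime}: incorporating the $s$-th observation gives $V_{s+1} = V_s + x_s x_s^\T$ (the regularizer $\mathbf{1}_d$ is unchanged), and the unnormalized target satisfies $\sum_{i=1}^{s} x_i y_i = V_s \hat\theta_s + x_s y_s$, so that $\hat\theta_{s+1} = V_{s+1}^{-1}\big(V_s \hat\theta_s + x_s y_s\big)$.

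Second, I would put the update into rank-one-correction form. Using $V_s = V_{s+1} - x_s x_s^\T$ we have $V_{s+1}^{-1} V_s = \mathbf{1}_d - V_{s+1}^{-1} x_s x_s^\T$, hence
\[
\hat\theta_{s+1} - \hat\theta_s = V_{s+1}^{-1}\big(V_s\hat\theta_s + x_s y_s\big) - \hat\theta_s = V_{s+1}^{-1} x_s\,\big(y_s - \ip{x_s, \hat\theta_s}\big)\,.
\]
The scalar residual simplifies immediately via $y_s = \ip{x_s,\thetaopt} + \epsilon_s$ to $y_s - \ip{x_s,\hat\theta_s} = \epsilon_s + \ip{x_s, \thetaopt - \hat\theta_s}$.

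Third, I would compute $V_{s+1}^{-1} x_s$ explicitly. By Sherman--Morrison, $V_{s+1}^{-1} = V_s^{-1} - \dfrac{V_s^{-1} x_s x_s^\T V_s^{-1}}{1 + \|x_s\|_{V_s^{-1}}^2}$, so
\[
V_{s+1}^{-1} x_s = V_s^{-1} x_s\left(1 - \frac{\|x_s\|_{V_s^{-1}}^2}{1 + \|x_s\|_{V_s^{-1}}^2}\right) = \frac{V_s^{-1} x_s}{1 + \|x_s\|_{V_s^{-1}}^2}\,.
\]
Substituting this and the residual identity into the rank-one form yields exactly the claimed expression. There is essentially no real obstacle here: the only subtlety is matching the off-by-one indexing between $V_s$ and the partial sums $\sum_{i=1}^{s-1} x_i y_i$, which is resolved in the first step, after which the argument is a short symbolic manipulation.
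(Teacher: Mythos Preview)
Your proposal is correct and follows essentially the same approach as the paper: both are direct computations driven by the Sherman--Morrison identity for $V_{s+1}^{-1}$. Your version is slightly more streamlined in that you first isolate the residual form $\hat\theta_{s+1}-\hat\theta_s = V_{s+1}^{-1}x_s\,(y_s - \ip{x_s,\hat\theta_s})$ before invoking Sherman--Morrison, whereas the paper applies Sherman--Morrison to the full sum $\sum_{i=1}^{s} x_i y_i$ and simplifies afterward, but the substance is identical.
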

\begin{proof}
	The difference can be computed with the Sherman-Morrison-Woodbury formula,
	\begin{align*}
		\hat{\theta}_{s+1} - \hat{\theta}_s &= V_{s+1}^{-1}\sum_{i=1}^{s} x_iy_i -  \hat{\theta}_s \\
		&= V_s^{-1} \sum_{i=1}^{s-1} x_i y_i + V_s^{-1}x_sy_{s} -  \frac{V_s^{-1}x_sx_s^\T V_s^{-1}}{1 + \|x_s\|_{V_s^{-1}}^2}\sum_{i=1}^{s} x_i y_i - \hat{\theta}_s\\
		&= V_s^{-1}x_sy_{s}  
    -  \frac{V_s^{-1}x_s \|x_s\|^2_{V_s^{-1}} y_s}{1 + \|x_s\|_{V_s^{-1}}^2}
    -  \frac{V_s^{-1}x_sx_s^\T \hat \theta_s}{1 + \|x_s\|_{V_s^{-1}}^2} \\
		&= V_s^{-1} x_s\left(y_{s} - \frac{\|x_s\|_{V_s^{-1}}^2y_{s}}{1+ \|x_s\|_{V_s^{-1}}^2} - \frac{x_s^\T \hat{\theta}_s}{1+ \|x_s\|_{V_s^{-1}}^2}\right)\\
		&= V_s^{-1} x_s\left(\frac{y_{s} - x_s^\T \hat{\theta}_s}{1+ \|x_s\|_{V_s^{-1}}^2}\right)\\
		&= V_s^{-1} x_s\left(\frac{\epsilon_s + x_s^\T (\thetaopt - \hat{\theta}_s)}{1+ \|x_s\|_{V_s^{-1}}^2}\right)\,.
	\end{align*}
\end{proof}


\section{Information-Directed Sampling as a Primal-Dual Method}\label{app:oracle} 

This section serves as self-contained exposition to establish the link between information-directed sampling and primal-dual approaches used to solve the lower bound \eqref{eq:lower-linear}. \emph{Note that in this section, quantities such as $\hat \Delta_t$, $\delta_t$ and $I_t$ are re-defined independently of the main text.}\medskip

For simplicity, for the remainder of this section we fix finitely many alternative parameters $\nu_1, \dots, \nu_l \in \mM$ for which $x^*(\nu) \neq x^*(\theta^*)$. Define constraint vectors $h_j \in \RR^\xX$ as $h_j(x) = \frac{1}{2}\ip{\nu_j - \thetaopt, x}^2$ for each $x \in \xX$ and $j = 1, \dots, l$. Our boundedness assumptions imply $\|h_j\|_2 \leq 1$.
With this notation, the lower bound \eqref{eq:lower-linear} can be written as a \emph{linear covering program},
\begin{align}
	c^* = \inf_{\alpha \in \RR_{\geq 0}^k} \sum_{x \in \xX} \alpha(x) \ip{x^* - x, \thetaopt} \qquad \text{s.t.} \qquad \forall j = 1, \dots, l, \quad  h_j(\alpha) \geq 1\,. \label{eq:covering-program}
\end{align} 
It is immediate from the assumption that $\xX$ spans $\RR^d$ that the program is feasible. Further, there is no cost for playing the optimal action $x^*$ since the corresponding gap is zero, $\Delta(x^*) = 0$. Following the terminology of \citet{jung2020crushoptimisim}, we refer to a constraint $h_j$ with $h_j(x^*) > 0$ as \emph{docile}. Such constraints are trivially satisfied by letting $\alpha(x^*) \rightarrow \infty$, while the regret from allocating $x^*$ remains zero in the limit. To simplify our exposition further, here we assume that there are \emph{no docile constraints}, i.e.\ $h_j(x^*) = 0$ holds for all $j=1,\dots, l$.

The objective of this section is to derive sequential strategies to solve \eqref{eq:covering-program} in the \emph{oracle setting}, where the exact cost and constraint vectors are known. Thereby, we set aside all complications that arise in the statistical setting. Specifically, we seek to incrementally determine a sequence of distributions $\mu_1, \dots \mu_n \in \sP(\xX)$ over actions, which define a cumulative allocation $\alpha_n = \sum_{t=1}^n \mu_t$. 
We say an allocation is \emph{asymptotically optimal} at rate $\beta_n$ if 
\begin{align}
\lim_{n\rightarrow \infty} \frac{\Delta(\alpha_n)}{\beta_n} = c^*, \quad  \text{and} \quad \forall j=1,\dots,l, \,\lim_{n\rightarrow \infty} \frac{h_j(\alpha_n) }{\beta_n} \geq 1 \,.
\end{align}
The lower bound suggests a choice which satisfies $\lim_{n \rightarrow \infty} \beta_n = \log(n)$.

\paragraph{Online Convex Optimization}
We review an approach due to \citet{garg2007faster,arora2012multiplicative}, which solves covering LPs -- such as the oracle lower bound -- using \emph{online convex optimization} (OCO). The same idea has recently inspired bandit algorithms for best arm identification \citep{degenne2019games} and regret minimization \citep{degenne2020structure}. The approach sets up a fictitious two-player game that converges to the saddle point of the Lagrangian,
\begin{align*}
 \max_{\lambda \geq \RR_{\geq 0}^l} \min_{\alpha \in \RR_{\geq 0}^k} \left\{\lL(\alpha, \lambda) = \Delta(\alpha) - \sum_{j=1}^l \lambda_j (h_j(\alpha) - 1)\right\}\,.
\end{align*}
It is easy to verify that strong duality holds, and we can interchange the maximum and minimum. Note that the dual variables are on an unbounded space, but it turns out that we can normalize them. The KKT conditions are
\begin{align*}
\Delta(x) - \sum_{j=1}^l \lambda_j h_j(x) &= 0 && \text{(stationarity)}\\
\lambda_j (h_j(\alpha) - 1) &= 0 && \text{(complementary slackness)}
\end{align*}
Combining both, we find that $c^* = \sum_{j=1}^l \lambda_j$. This implies that the optimal cost $c^*$ normalizes the dual variables $q_j = \lambda_j / c^*$. The normalized Lagrangian is
\begin{align}
\lL(\alpha, q) = \Delta(\alpha) - c^* \sum_{j=1}^l q_j (h_j(\alpha) - 1)\,,
\end{align}
where $q \in \sP([l])$ is a distribution over the constraints. Recall that the allocation $\alpha_n = \sum_{t=1}^n \mu_t$ is chosen sequentially. In each iteration of the game, the \emph{first player}, or $q$-learner, chooses a distribution $q_t \in \sP([l])$ over the constraints. Then, the response of the \emph{second player} is a distribution $\mu_t \in \sP(\xX)$ over actions, which defines the allocation $\alpha_n = \sum_{t=1}^n \mu_t$. The loss of the $q$-learner is defined by the second player's response $\mu_t$,
\begin{align}
l_t(q) = \sum_{j=1}^l q_{t}(j) h_j(\mu_t)\,,
\end{align}
which is linear in the dual variable $q_t$. The loss sequence defines the $q$-learner regret $\Lambda_n$ (not to be confused with $R_n$), which is
\begin{align}
\Lambda_n = \sum_{t=1}^n l_t(q_t) - \min_{q \in \sP([l])} \sum_{t=1}^n l_t(q)\,.
\end{align}
For concreteness, we choose the exponential weights learner \citep{vovk1990aggregating,littlestone1994weighted},
\begin{align*}
q_t(j) \propto \exp\left(-\eta_t \sum_{s=1}^{t-1} l_t(j) \right)
\end{align*}
with learning rate $\eta_t$. Standard regret bounds for online convex optimization guarantee $\Lambda_n \leq \oO(\sqrt{n})$ for suitably chosen learning rate schedules. More refined techniques lead to first-order regret bounds, which scale with the best loss in hindsight $\Lambda_n \leq \oO(\sqrt{\min_{i} \sum_{t=1}^n l_t(i)})$, see for example \citep{cesa-bianchi_improved_2006}. Given the choice $q_t$ of the $q$-learner, we define the combined constraint $I_t = \sum_{j=1}^l q_t(j) h_j$. The policy response is defined as 
\begin{align}
\mu_t = e_{x_t},\quad \text{where} \quad x_t = \begin{cases} 
\argmin_{x \in \xX\setminus x^*} \frac{\Delta(x)}{I_t(x)} & \text{if } \min_{j} \alpha_{t-1}^\T h_j < \beta_n \\
x_t = e_{x^*} & \text{else.}
\end{cases} \label{eq:OCO-choice}
\end{align}
The second case corresponds to \emph{exploitation}, which happens as soon as the constraints are satisfied:
\begin{align*}
\min_{j} h_j^\T\alpha_{t-1} = \min_j \sum_{x} \alpha_{t-1}(x) \ip{\nu_j - \thetaopt, x}^2 \geq \beta_n\,.
\end{align*} 
Note that $x^*$ is the only action which does not incur cost. On the contrary, when $\min_{j} h_j(\alpha_t) < \beta_n$, the policy allocates on the suboptimal action $x_t \neq x^*$ with the best cost/constraint ratio, $\min_{x \neq x^*} \Delta(x)/I_t(x)$. Since there are no docile constraints, we have $I_t(x^*) = 0$ and $\mu_t = e_{x_t}$ corresponds to the \emph{optimal} allocation for the rescaled linear program with the single combined constraint $I_t = \sum_{j=1}^l q_t(j) h_j$,
\begin{align*}
\min_{\alpha \in \RR_{\geq 0}^\xX} \Delta(\alpha) \quad \text{s.t.} \quad I_t(\alpha) \geq I_t(x_t)\,.
\end{align*}
Rescaling the optimal solution $\alpha^*$ to the original covering program \eqref{eq:covering-program}, we obtain an upper bound to the cost of choosing $\mu_t = e_{x_t}$,
\begin{align*}
\Delta(\mu_t) \leq \Delta(I_t(\mu_t) \alpha^*) = c^* I_t(\mu_t)\,.
\end{align*}
Since $I_t(\mu_t) = l_t(q_t)$, we can make use of the regret bound for the $q$-learner,
\begin{align}
\sum_{t=1}^n I_t(\mu_t) = \sum_{t=1}^n l_t(q_t) \leq \min_{j} \alpha_n^\T h_j + \Lambda_n \leq \beta_n + \oO\big(\beta_n^{1/2}\big)\,. \label{eq:info-gain-regret}
\end{align}
For the last inequality, we used that $\min_{j} \alpha_n^\T h_j \leq \beta_n + 1$ is guaranteed by the definition \eqref{eq:OCO-choice} and boundedness, $I_t(x) \leq 1$. Further, we assume a first-order regret bound $\Lambda_n \leq \oO(\beta_n^{1/2})$ for the $q$-learner. From here, we easily bound the regret $R_n$ of the allocation $\alpha_n$,
\begin{align*}
R_n = \ip{\alpha_n, \Delta} = \sum_{t=1}^{n} \Delta(\mu_t) \leq c^* \sum_{t=1}^{n} I_t(x_t) \leq c^* \beta_n + \oO\big(c^* \beta_n^{1/2}\big)\,.
\end{align*}
With some care, this approach can be translated to a bandit algorithm, by replacing all unknown quantities with statistical estimates, see \citet{degenne2020structure}. The formulation presented here differs from previous work in that it avoids a re-parametrization of the allocation, and the argument to bound the regret is more direct. We are now in the position to establish a link between information-directed sampling and the two-payer minimax game setup. 

\paragraph{Oracle Information-Directed Sampling}
For reasons that become apparent soon, we refer to the combined constraints $I_t = \sum_{j=1}^l q_t(j) h_j \in \RR_{\geq 0}^\xX$ as the \emph{information gain}, where $q_t$ is the output of the same $q$-learner as before. We also introduce a positive \emph{error} term $\delta_t > 0$ that is added to the gaps, to obtain \emph{approximate gaps} $\hat \Delta_t(x) = \Delta(x) + \delta_t$. This choice anticipates the definition for the gap estimate, which we use later in the bandit setting. Moreover, $\delta_t > 0$ avoids a degenerate regret-information trade-off and allows us to treat all actions in a unified manner.

Information-directed sampling approaches the regret minimization problem by sampling actions from a distribution that minimizes the \emph{information ratio},
\begin{align*}
\mu_t = \argmin_{\mu \in \sP(\xX)} \left \{\Psi_t(\mu) = \frac{\hat \Delta_t(\mu)^2}{I_t(\mu)}\right\}\,.
\end{align*}
We follow this strategy as long as $\min_j h_j(\alpha_t) < \beta_n$. Once the constraints are satisfied, we resort to playing the optimal action $x^*$ as before. This allows to bound the $q$-learner regret $\Lambda_n$, and therefore the total information gain as in \eqref{eq:info-gain-regret}.
We make the assumption that the estimation gap $\delta_t$ is small compared to the minimum gap $\Delta_{\min} = \min_{x \neq x^*} \Delta(x)$,
\begin{align}
2 \delta_t \leq \min_{x \neq x^*} \hat \Delta_t(x) \label{eq:cost-condition}
\end{align}
or equivalently, $\delta_t \leq \Delta_{\min}$. At first sight, the IDS distribution does not relate to the previous analysis, since the ratio appears with the cost squared. However, a closer inspection reveals a strong connection, which is summarized in the following lemma. 
\begin{lemma}
Let $\mu_t = \argmin_{\mu \in \sP(\xX)} \frac{\hat \Delta_t(\mu)^2}{I_t(\mu)}$ be the IDS distribution.
	If $2 \delta_t \leq \min_{x \neq x^*} \hat \Delta_t(x)$ and $I_t(x^*) = 0$, then $\mu_t = (1-p_t) e_{x^*} + p_t e_{z_t}$ with  alternative action $z_t = \argmin_{z \in \xX} \frac{\Delta(z)}{I_t(z)}$ and trade-off probability $p_t = \frac{\delta_t}{\hat \Delta_t(z_t) - \delta_t} = \frac{\delta_t}{\Delta(z_t)}$.
\end{lemma}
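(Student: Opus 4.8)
The plan is to reduce the minimization over all of $\sP(\xX)$ to a one‑dimensional problem using the two structural facts that make the lemma work. First I would record: (i) since there are no docile constraints and $\Delta(x^*)=0$, we have $\hat\Delta_t(x^*)=\delta_t$ and $I_t(x^*)=0$; and (ii) writing $\bar\Delta(\mu)=\sum_x\mu(x)\Delta(x)$, so that $\hat\Delta_t(\mu)=\delta_t+\bar\Delta(\mu)$, the AM--GM inequality gives $\hat\Delta_t(\mu)^2=(\delta_t+\bar\Delta(\mu))^2\ge 4\delta_t\,\bar\Delta(\mu)$, hence $\Psi_t(\mu)\ge 4\delta_t\,\bar\Delta(\mu)/I_t(\mu)$, with equality precisely when $\bar\Delta(\mu)=\delta_t$.

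Next I would lower bound $\bar\Delta(\mu)/I_t(\mu)$. Because $\Delta(x^*)=I_t(x^*)=0$, the action $x^*$ contributes nothing to either sum, so $\bar\Delta(\mu)/I_t(\mu)=\big(\sum_{x\ne x^*}\mu(x)\Delta(x)\big)\big/\big(\sum_{x\ne x^*}\mu(x)I_t(x)\big)$; any minimizer has $I_t(\mu)>0$, and for $x\ne x^*$ we have $\Delta(x)>0$ by uniqueness of $x^*$, so actions with $I_t(x)=0$ have infinite ratio and may be ignored. By the mediant inequality the displayed ratio is therefore at least $\min_{x\ne x^*}\Delta(x)/I_t(x)=\Delta(z_t)/I_t(z_t)$. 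Combining with step (ii) yields $\Psi_t^\ast\ge 4\delta_t\Delta(z_t)/I_t(z_t)$.

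Then I would exhibit a matching distribution. For $\mu=(1-p)e_{x^*}+p\,e_{z_t}$ one computes $\hat\Delta_t(\mu)=\delta_t+p\Delta(z_t)$ and $I_t(\mu)=p\,I_t(z_t)$, so $\Psi_t(\mu)=(\delta_t+p\Delta(z_t))^2/(p\,I_t(z_t))$. Minimizing this scalar function of $p$ --- equivalently, applying Lemma~\ref{lem:two-action-ids} to the pair $x^*$ (gap $\delta_t$, information $0$) and $z_t$ (gap $\Delta(z_t)+\delta_t$, information $I_t(z_t)>0$), which falls in the ``$I_1<I_2$'' branch --- gives the interior minimizer $p_t=\delta_t/\Delta(z_t)$. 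This lies in $[0,1]$ because the hypothesis $2\delta_t\le\min_{x\ne x^*}\hat\Delta_t(x)$ is exactly $\delta_t\le\Delta_{\min}\le\Delta(z_t)$, and plugging it in gives value $(2\delta_t)^2/(p_t I_t(z_t))=4\delta_t\Delta(z_t)/I_t(z_t)$, meeting the lower bound. Hence this $\mu$ is an IDS distribution, and $p_t=\delta_t/\Delta(z_t)=\delta_t/(\hat\Delta_t(z_t)-\delta_t)$.

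Finally, to see that $\mu_t$ must equal this distribution I would trace back the equality cases. A minimizer must make both inequalities above tight: the AM--GM step forces $\bar\Delta(\mu_t)=\delta_t$, and tightness of the mediant step forces every $x\in\supp(\mu_t)\setminus\{x^*\}$ to attain $\min_{x\ne x^*}\Delta(x)/I_t(x)$, i.e.\ $\supp(\mu_t)\subseteq\{x^*,z_t\}$ when $z_t$ is the unique minimizer (otherwise the stated $\mu$ is simply one admissible choice of $\mu_t$); moreover $x^*\in\supp(\mu_t)$, since $\mu_t=e_{z_t}$ would give $\bar\Delta(\mu_t)=\Delta(z_t)\ne\delta_t$ in the generic case $\delta_t<\Delta_{\min}$. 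Then $\bar\Delta(\mu_t)=\delta_t$ pins the weight on $z_t$ to $\delta_t/\Delta(z_t)$. The only mildly delicate points are the degenerate $0/0$ contribution of $x^*$ inside the mediant inequality and the boundary case $\delta_t=\Delta_{\min}$ (where $p_t=1$ is allowed); these are dispatched by the observations above, and the remainder is a routine one‑variable optimization.
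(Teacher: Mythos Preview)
Your proof is correct and takes a genuinely different route from the paper's. The paper first shows $\hat\Delta_t(\mu_t)\le 2\delta_t$ via a first-order condition (shifting mass toward $x^*$ cannot decrease $\Psi_t$), then invokes the structural result of \citet{Russo2014learning} that an IDS distribution can always be taken with two-point support (Lemma~\ref{lem:ids-distr}); combined with the hypothesis $2\delta_t\le\min_{x\neq x^*}\hat\Delta_t(x)$ this forces one support point to be $x^*$, and the rest is the same one-variable calculation you do. Your argument instead establishes a tight global lower bound directly: AM--GM on $(\delta_t+\bar\Delta(\mu))^2$ followed by the mediant inequality on $\bar\Delta(\mu)/I_t(\mu)$ gives $\Psi_t(\mu)\ge 4\delta_t\,\Delta(z_t)/I_t(z_t)$, and the candidate two-point mixture meets it. This is more self-contained---you never need the external two-point-support lemma, since the mediant equality case delivers the support structure for free---and the equality analysis also makes the uniqueness discussion (and its caveat when the ratio-minimizer $z_t$ is not unique) more explicit than the paper's version. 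The paper's approach, on the other hand, ties the lemma more visibly to the general IDS machinery (Lemmas~\ref{lem:almost-greedy} and~\ref{lem:ids-distr}), which is what one would reuse in the non-oracle setting.
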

\begin{proof}
	Let $\psi(p) = \frac{((1-p) \hat \Delta_t(\mu_t) + p \delta_t)^2}{(1-p)I_t(\mu_t)}$ be the ratio obtained from shifting probability mass to $x^*$. By definition of the IDS distribution, we must have 
	\begin{align*}
	0 \leq \frac{d}{d p}\psi(p)|_{p=0} = \frac{2 \hat \Delta_t(\mu_t)\delta_t - \hat \Delta_t(\mu_t)^2}{I_t(\mu_t)}\,.
	\end{align*}
	Re-arranging shows that $\hat \Delta_t(\mu_t) \leq 2 \delta_t$. The IDS distribution can always be chosen with a support of at most two actions, which is a result by \citet[Proposition 6]{Russo2014learning}.  With the condition $2 \delta_t \leq \min_{x \neq x^*} \hat \Delta_t(x)$, it therefore suffices to optimize over distributions $\mu(p,z) = (1-p)e_{x^*} + p e_z$. A simple calculation reveals that $\argmin_{p \in [0,1]} \Psi_t(\mu(p,z)) = \frac{\delta_t}{\hat \Delta_t(z) - \delta_t}$, and 
	\begin{align*}
	\min_{\mu} \Psi_t(\mu) = \min_{z \neq x^*} \min_{p \in [0,1]} \Psi(\mu(p,z)) = \min_{z \neq x^*} \frac{4 \delta_t (\hat \Delta_t(z) - \delta_t)}{I_t(z)}\,.
	\end{align*}
	Therefore the alternative action is $z_t = \argmin_{z \neq x^*}\left\{\frac{\hat \Delta_t(z) - \delta_t}{I_t(z)} = \frac{\Delta(z)}{I_t(z)} \right\}$.
\end{proof}
The lemma shows that $\supp(\mu_t) = \{x^*, z_t \}$ and the alternative action $z_t \neq x^*$ minimizes the same cost-to-constraint ratio as before. Hence, almost the same argument implies a regret bound $R_n \leq c^* \beta_n + \oO(c^*\beta_n^{1/2})$.

Unlike the approach presented before, the distribution $\mu_t$ allocates mass to the zero-cost action, even before the constraint threshold $\beta_n$ is reached. Importantly, the randomization also allows to bound the regret in a worst-case manner. In the statistical setting, we expect that the estimation error roughly decreases at a rate $\delta_t \approx t^{-1/2}$. 
With the trade-off probability $p_t = \frac{\delta_t}{\Delta(z_t)}$ the expected cost per round is $\Delta(\mu_t) = \delta_t$. In other words, we get a finite-time, problem-independent bound on the regret $R_n$, 
\begin{align*}
R_n =  \Delta(\alpha_n) \leq \sum_{t=1}^n \delta_t \leq \oO(\sqrt{n})\,.
\end{align*}

Lastly, we link our analysis to the standard Cauchy-Schwarz argument that appears in all previous regret bounds for IDS (c.f.~\cite{Russo2014learning,Kirschner2018}). A direct calculation using the trade-off probability $p_t$ reveals that the expected approximate cost of the IDS distribution is exactly two times the actual cost and equals the estimation gap,
\begin{align*}
\frac{1}{2} \hat \Delta_t(\mu_t) = \delta_t  = \Delta(\mu_t)\,.
\end{align*}
Note that exact equality only holds because we have $I_t(x^*) = 0$ (no docile constraints). We continue to bound the regret with the Cauchy-Schwarz inequality and using the definition of the information ratio $\Psi_t = \frac{\hat \Delta_t(\mu_t)^2}{I_t(\mu_t)}$,
\begin{align*}
R_n = \sum_{t=1}^n \Delta(\mu_t) = \frac{1}{2} \sum_{t=1}^n \hat \Delta_t(\mu_t)
 \leq \frac{1}{2} \sqrt{\sum_{t=1}^n \Psi_t \sum_{t=1}^n I_t(\mu_t)} \,.
\end{align*}
Let $\tilde \alpha^* = \alpha^* \chf{x \neq x^*}$ be the optimal allocation \eqref{eq:covering-program} restricted to suboptimal actions. Note that by definition and the fact that we excluded docile constraints, $I_t(\tilde\alpha^*) \geq 1$ and $\Delta(\tilde \alpha^*) = c^*$. Define a distribution $\mu(p) = (1-p)e_{x^*} + p \tilde \alpha^*/\|\tilde \alpha^*\|_1$, which randomizes between the best action and optimal allocation with trade-off probability $p$. A simple calculation reveals that,
\begin{align*}
 \Psi_t = \min_\mu \frac{\hat \Delta_t(\mu)^2}{I_t(\mu)} \leq \min_{p}  \frac{\hat \Delta_t (\mu(p))^2}{I_t(\mu(p))} \leq \frac{4 \delta_t \Delta(\alpha^*)}{I_t(\tilde \alpha^*)} \leq 4 c^* \delta_t\,.
\end{align*}
We combine the inequality and the regret bound for the $q$-learner to get 
\begin{align*}
R_n = \sum_{t=1}^n \Delta(\mu_t) \leq  \frac{1}{2} \sqrt{\sum_{t=1}^n 4 c^* \delta_t \left(\beta_n + \oO(\beta_n^{1/2})\right)} \,.
\end{align*}
Squaring both sides, using again that $\delta_t = \Delta(\mu_t)$ and solving for the regret yields the desired bound,
\begin{align*}
R_n = \sum_{t=1}^n \Delta(\mu_t) \leq c^*  \beta_n + \oO(c^*\beta_n^{1/2}) \,.
\end{align*}

\section{Approximating Mutual Information}\label{app:bayesian}

The information gain function that was primarily analyzed in the Bayesian framework by \citet{Russo2014learning} is  the mutual information
\begin{align*}
	I_t^\IMI(x) = \II_t(y_t; x^*|x_t=x) = \HH_t(x^*) - \HH_t(x^* |y_t,x_t=x)\,.
\end{align*}
The second equality rewrites the mutual information as the entropy reduction on $x^*$, which is a random variable in the Bayesian setting. Computation of the posterior distribution is tractable with a Gaussian prior $\nN(0, \lambda^{-1})$ on the parameter and Gaussian observation likelihood $y_t \sim \nN(\ip{x_t, \theta}, 1)$. In this case the posterior distribution is $\nN(\hat \theta_t, V_t^{-1})$. However, computing the mutual information requires further evaluations of $d$-dimensional integrals which is challenging even with Gaussian distributions.

As a remedy, \citet{Russo2014learning} proposed a \emph{variance-based} information gain
\begin{align}
	I_t^\IVAR(x) \eqdef \EE_t[\big(\EE_t[\ip{x,\theta}|x^*] - \EE_t[\ip{x,\theta}]\big)^2] = \EE_t[\ip{\bar \nu_t(x^*) - \htheta_t, x}^2]\,.\label{eq:info-var-def}
\end{align}
The last step uses that $\EE_t[\theta] = \htheta_t$ and we defined $\bar \nu_t(x) = \EE_t[\theta|x^*=x]$. They further showed that the variance-based information gain lower-bounds the mutual information, $I_t^\IMI(x) \geq 2 I_t^\IVAR(x)$, while, at the same time, the information ratio is still bounded in the Bayesian setting with linear reward \citep[Proposotion 7]{Russo2014learning}. Importantly, \eqref{eq:info-var-def} can be approximated for a moderate number of actions using samples from the posterior distribution.

We compute the posterior probability $\bar q_t(c) \eqdef \PP_t[x^* = z]$ with a Laplace approximation of the integral over the cell $\cC_z = \{\theta \in \mM: x^*(\theta) = z \}$,\looseness=-1
\begin{align*}
	\bar q_t(z) &= \frac{1}{\sqrt{(2\pi)^{d} \det(V_t)}} \int_{\cC_z} \exp\left(-\tfrac{1}{2}\|\nu - \hat \theta_t\|_{V_t}^2\right) d\nu \approx Q_z^{-1} \exp\left(-\tfrac{1}{2}\|\tilde \nu_t(z) -  \hat \theta_t\|_{V_t}^2\right)\,,
\end{align*}
where $\tilde \nu_t(x) = \argmin_{\nu \in \cC_x} \|\nu - \hat \theta_s\|_{V_s}^2$.
Similarly, in the Laplace limit, the conditional distribution $\PP_t[\theta|x^*=x]$ concentrates on $\tilde \nu_t(x)$, which allows us to approximate $\bar \nu_t(x) \approx \tilde \nu_t(x)$. This leads to
\begin{align*}
	I_t^\IVAR(x) \approx \sum_{z \neq x^*} \bar q_t(z) \ip{\tilde \nu_t(x) - \hat \theta_t, x}^2\,,
\end{align*}
which resembles the definition of the cell-based information gain in \eqref{eq:info-gain-cell}.

Using the Laplace argument, we can also compute the mutual information more directly. Assuming that the posterior is well-concentrated, there exists an action $\bar x_t^*$ with $\bar q_t(\bar x_t^*) \approx 1$. For all $z \neq \bar x_t^*$ and interpolation variable $\tau \in [0,1]$, we define the conditional weights
\begin{align*}
	\bar q_t^\tau(z|x) \eqdef \bar q_t(z) \exp\left(-\tfrac{\tau}{2}\ip{\tilde \nu_t(z) - \htheta_t, x}^2\right)\,,
\end{align*}
and $q_t^\tau(\bar x_t^*|x) \eqdef 1 - \sum_{z \neq \bar x_t^*} q_t^\tau(z|x)$. Using the approximate posterior probabilities, the entropy reduction up to first order is
\begin{align*}
	\II_t(y_t;x^*|x_t=x) &\approx -\sum_{z \in \aA} \bar q_t(z) \log \bar q_t(z) + \sum_{z \in \aA} \big(\bar q_t^\tau(z|x) \log (\bar q_t^\tau(z|x)\big)\big|_{\tau=1}\\
	&\approx \sum_{z \in \aA} \frac{d}{d\tau} \big(\bar q_t^\tau(z|x) \log (\bar q_t^\tau(z|x)\big)\big|_{\tau=1}\\
	&= - \frac{1}{2}\sum_{z \neq \bar x_t^*}  \bar q_t(z) \ip{\nu_z - \theta, x}^2 \log\left(\frac{\bar q_t(z)}{1 - \sum_{z' \neq \bar x_t^*} \bar q_t(z')}\right)\,.
\end{align*}
Using that $-x \log x \geq x$ for $x \ll 1$, the last expression can be lower bounded to arrive at a form similar to the cell-based information gain \eqref{eq:info-gain-cell}. 

While our reasoning here is rather informal, we think that it warrants a more formal investigation in the future. Such results could be fruitful in two directions. First, interpreting the mutual information as an approximation of a dual loss could lead to an instance-dependent analysis for the Bayesian IDS algorithm, either on the frequentist or Bayesian regret. Second, the Bayesian information gain might serve as a starting point to design more effective information gain functions in the frequentist framework, for example adapted to other likelihood functions and regularizers.

\begin{figure}[t]
	\centering	\includegraphics{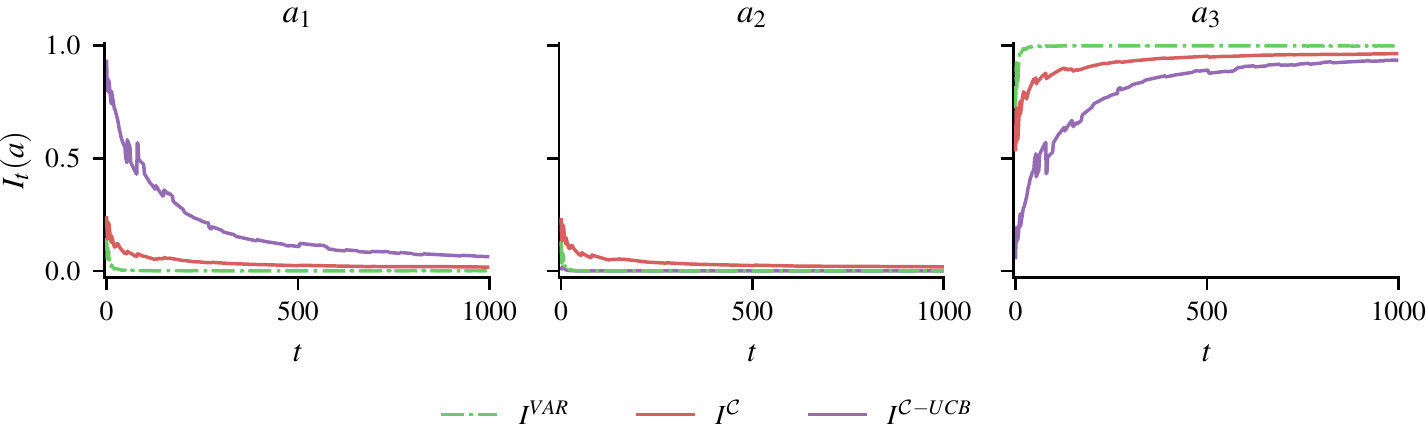}
	\caption{Comparison of information gain functions on the `end of optimism' example with $\epsilon=0.01$. The information gain functions are evaluated on the same trajectory generated by IDS-$I_t^\IAUCB$ and normalized such that ${\sum_{x \in A} I_t(x) = 1}$. On this instance, $x_1$ is optimal, $x_2$ is $\epsilon$-suboptimal, and $x_3$ is 1-suboptimal, but asymptotically more informative than action $x_2$. Clearly visible is that the lower-order terms of the $I_t^\IA$ and $I_t^\IAUCB$ are increasingly dominated by the asymptotic term where $x_3$ is the most informative action. $I_t^\IVAR$ is approximated using $10^4$ samples from the posterior distribution, and converges much faster than the information gain functions based on the $q$-learner, which uses a more conservative learning rate. Note that the approximation with posterior samples is unstable on a larger horizon without increasing the number of samples accordingly. } \label{fig:infogain}
\end{figure}

\begin{figure}
	\centering
	\includegraphics{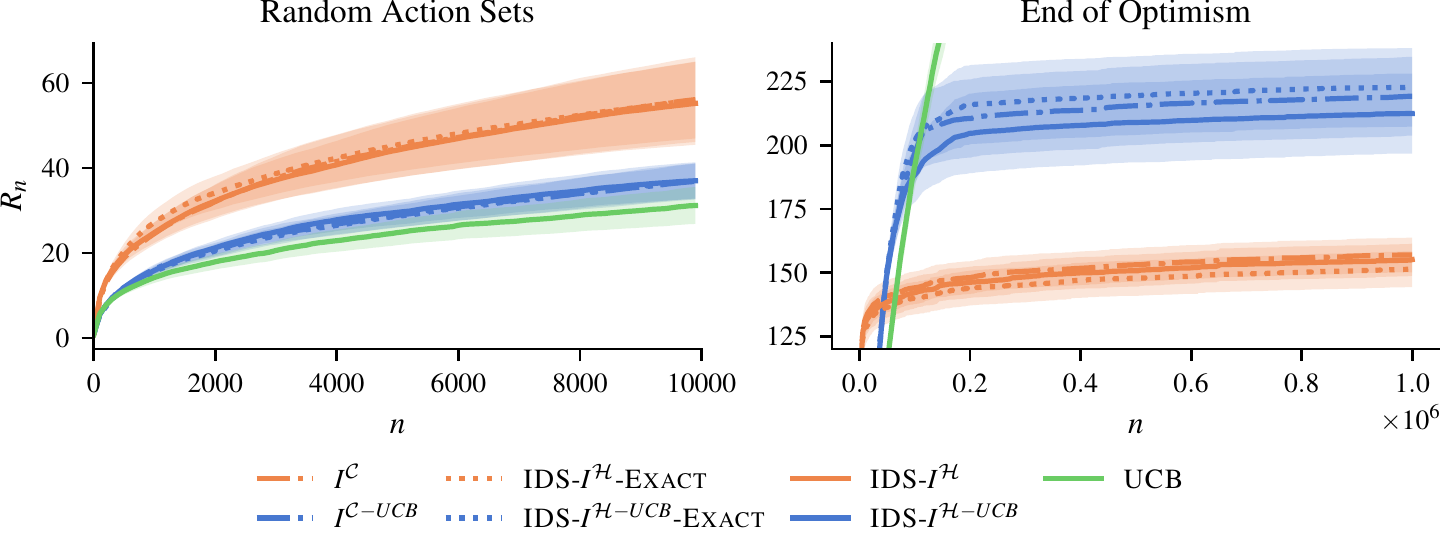}
	\caption{Comparison of information gain functions defined on cells and halfspaces respectively, as well as exact and approximate sampling from the IDS distribution. All variants achieve similar performance within the standard error, however the correction term has a larger impact on the regret. In the right plot, the y-axis is scaled to make the difference visible.  }
	\label{fig:ids-cell}
\end{figure}

\section{Additional Experiments}
\label{ap:experiments-supp}

In this section we summarize further numerical results. In Section \ref{app:comparison-info} we compare different information gain functions and show evidence that the cell based information gain \IDS-$I_s^\IACELL$ variant from Eq.\eqref{eq:info-gain-cell} behaves similarly to \IDS{} despite much longer runtimes. In Section \ref{app:comparison-hyper} we show the effect of the choice of confidence coefficient $\beta_t$ and the learning rate $\eta_s$ on the performance, and also evaluate the confidence coefficient derived by \cite{tirinzoni2020asymptotically}. In Section \ref{app:comparison-bayesian},  we provide a benchmark with Bayesian methods including Bayesian IDS and a runtime evaluation.

\subsection{Comparison of Information Gain Functions}\label{app:comparison-info}

The information gain functions used in the experiments are summarized below.
\begin{itemize}
	\item The information gain defined in the main text with halfspaces-based alternatives \eqref{eq:I-def}:
	\[I_s^\IA(x) = \sdfrac{1}{2}\sum_{z \neq \hat{x}_{s}} q_s(z) \left(|\ip{\hat \nu_{s}(z) - \htheta_{s},x}| + \beta_{s,s^2}^{1/2}\|x\|_{V_s^{-1}}\right)^2\]
	\item As before, but with correction only for the UCB action \eqref{eq:info-gain-ucb}:
	\[I_{s}^\IAUCB(x) = \sdfrac{1}{2}\sum_{z \neq \hat{x}_{s}} q_s(z) \left(|\ip{\hat \nu_{s}(z) - \htheta_{s},x}| + \chf{x = x_s^{\UCB}} \beta_{s,s^2}^{1/2}\|x\|_{V_s^{-1}}\right)^2\]
	\item The information gain defined with cell-based alternatives \eqref{eq:info-gain-cell}:
	\[I_s^\IACELL(x) \deq \sdfrac{1}{2}\sum_{z \neq \hat{x}_{s}} \tilde q_s(z) \left(|\ip{\tilde \nu_{s}(z) - \htheta_{s},x}| + \beta_{s,s^2}^{1/2}\|x\|_{V_s^{-1}}\right)^2\]
	\item The information gain defined on cells and UCB correction:
	\[I_{s}^\IAUCBCELL(x) = \sdfrac{1}{2}\sum_{z \neq \hat{x}_{s}} \tilde q_s(z) \left(|\ip{\tilde \nu_{s}(z) - \htheta_{s},x}| + \chf{x = x_s^{\UCB}} \beta_{s,s^2}^{1/2}\|x\|_{V_s^{-1}}\right)^2\]
	\item The variance-based information gain defined in \eqref{eq:info-var-def} and used for Bayesian IDS:
	\[	I_t^\IVAR(x) = \EE_t[\big(\EE_t[\ip{x,\theta}|x^*] - \EE_t[\ip{x,\theta}]\big)^2]\]
\end{itemize}
Alternative definitions of the information gain function based on the log-determinant potential are proposed by \citet{kirschner20partialmonitoring}. The resulting IDS algorithm satisfies similar worst-case guarantees but does not achieve asymptotic optimality, e.g.~on the end of optimism example.

Figure \ref{fig:infogain} shows a quantitative comparison of the information gain functions evaluated on the same trajectory on the end of optimism example. The asymptotic information gain based on half-spaces is not shown since it was empirically indistinguishable from the cell-based variant (which might be also due to the fact that there are only three cells in this example). This finding is confirmed by the regret plot in Figure \ref{fig:ids-cell}, where compare information gain functions, as well as the approximate IDS distribution (optimized directly on $\hat x_s$ and one other action) and the exact IDS distribution. The results show that, at least on our examples, there is almost no difference between the information gain defined with $\hat \nu_s$ and $\tilde \nu_s$, and the approximate and exact IDS sampling.

\subsection{Choice of Confidence Coefficient and Learning Rate} \label{app:comparison-hyper}

We run all our experiments with the simplified rate $\beta_t=\sigma^2(2\log(t)+d\log\log(t))$ instead, as suggested in \citet{tirinzoni2020asymptotically}. These result are shown on Figure~\ref{fig:all_plots_loglog} and confirm the statement in Section~\ref{sec:experiments} that there is no significant difference in the conclusions. However tuning $\beta_t$ to minimize regret \emph{significantly} improves the performance as shown in Figures \ref{fig:asymptotics-tuning-random} and \ref{fig:asymptotics-tuning-eoo}. On the other hand, tuning the learning rate $\eta_s$ has much less effect on the regret. The choice $\eta_s = 1/\sqrt{\beta_s}$ as suggested by the theory leads to good results and can be used to reduce the number of tuning parameters.

\begin{figure}[t]
	\centering
	\includegraphics{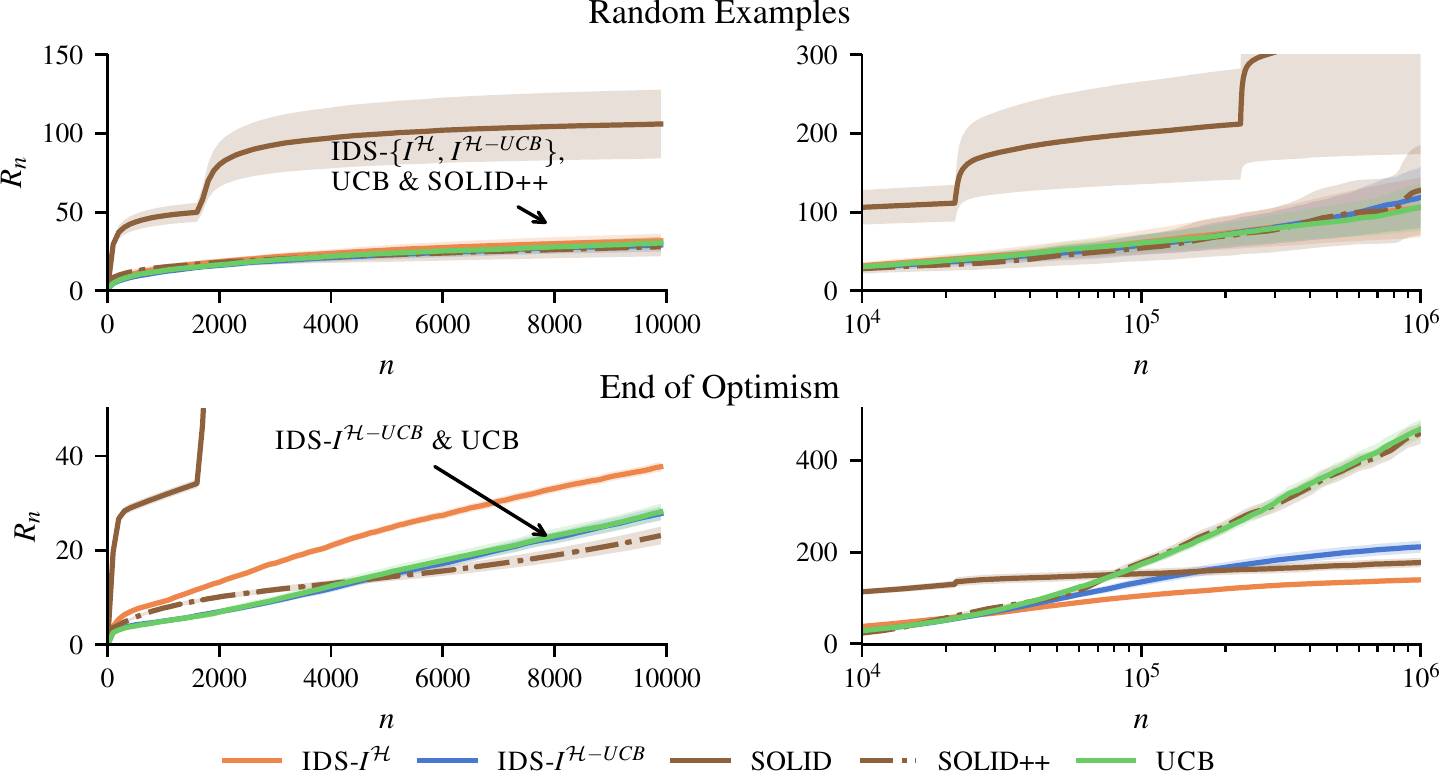}
	\caption{Experiments with $\beta_t=\sigma^2(2\log(t)+d\log\log(t))$. The numerical performance is comparable to the log-determinant confidence coefficient used in the main paper. }
	\label{fig:all_plots_loglog}
\end{figure}

\begin{figure}[p]
	\centering	\includegraphics{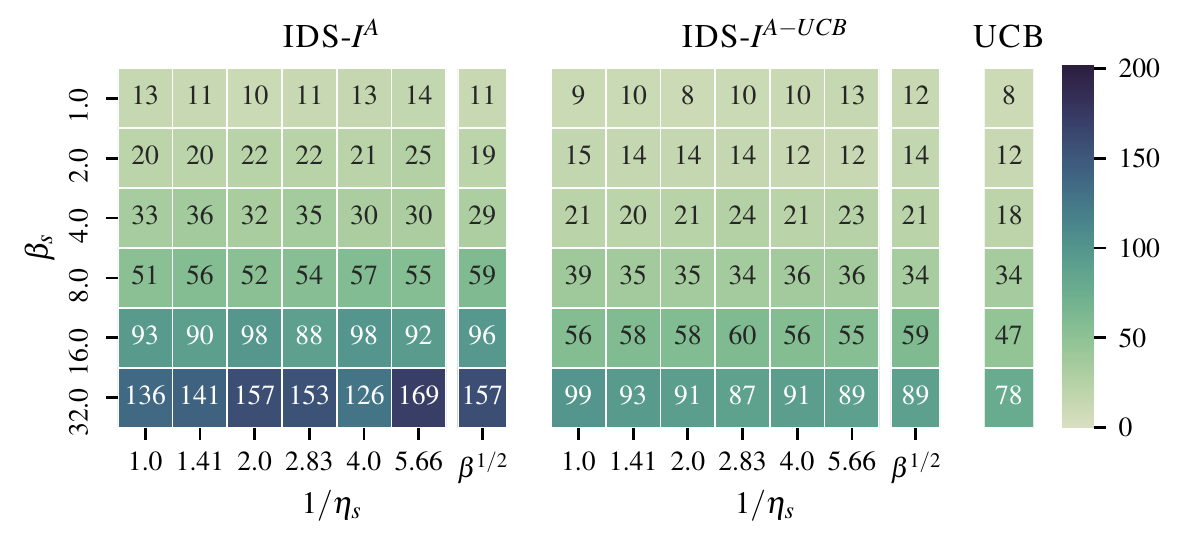}
	\caption{The matrix shows the regret on randomly generated action sets after $n=10^4$ steps for different values of $\beta_s$ and $\eta_s$. The first observation is that the regret can be \emph{significantly} reduced by choosing a smaller value for $\beta_s$. On the other hand, tuning the $q$-learning rate $\eta_s$ affects performance marginally. Tuning \emph{only} $\beta_s$ and setting $\eta_s = 1/\sqrt{\beta_s}$ as suggested by the theory leads to near optimal results. }\label{fig:asymptotics-tuning-random}
\end{figure}

\begin{figure}[p]
	\centering	\includegraphics{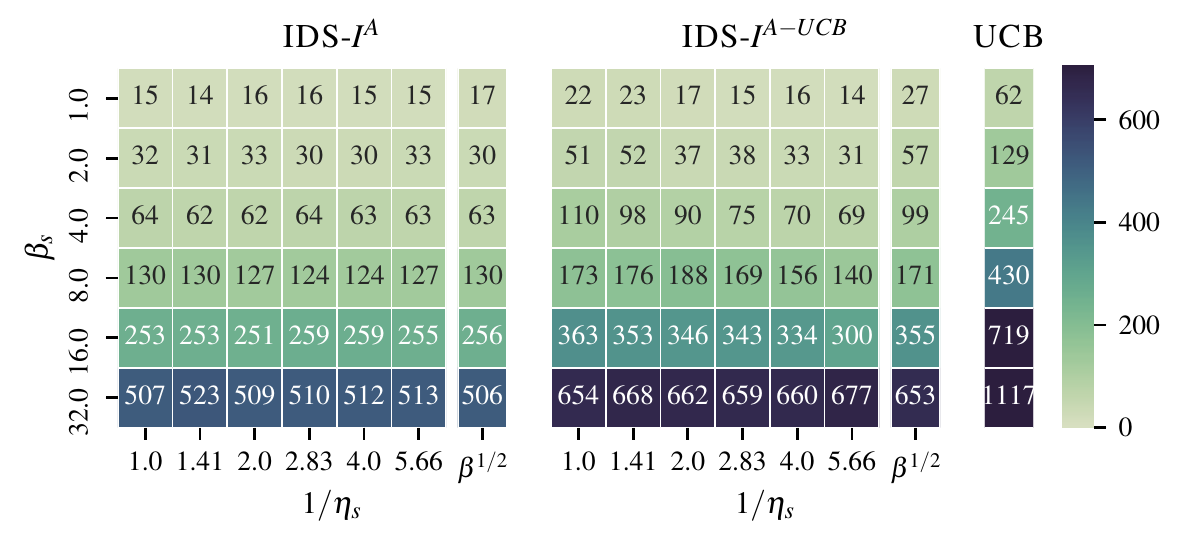}
	\caption{The matrix shows the regret on the `end of optimism' example after $n=10^6$ steps for different values of $\beta_s$ and $\eta_s$. The observations are similar as in Figure \ref{fig:asymptotics-tuning-random}.  Note that IDS is consistently better than UCB for any value of $\beta_s$.}\label{fig:asymptotics-tuning-eoo}
\end{figure}

\subsection{Comparison with Bayesian Methods} \label{app:comparison-bayesian}

In our last empirical benchmark, we include Bayesian methods, specifically Thompson sampling (\TS{}) and an approximation of Bayesian IDS. Our implementation of Bayesian IDS uses the variance-based information gain defined in \eqref{eq:info-var-def}, and we approximate the Bayesian gap estimates and information gain using $10^4$ posterior samples per round as suggested in \citep[Algorithm 6]{Russo2014learning}. The performance plots are in Figure \ref{fig:asymptotics-bayesian}. Thompson sampling significantly outperforms \textsc{UCB} and the frequentist IDS variants, unless we set $\beta_s=1$, which, as noted before, improves performance of the frequentist methods. The approximation of Bayesian IDS is the most effective on our benchmark, outperforming the best frequentist method on the `end of optimism' example roughly by a factor two. Lastly, we show runtime of all methods on a horizon $n=10^6$ in Table~\ref{tbl:runtime}. Note that despite the approximation, Bayesian IDS is computationally much more demanding, whereas the frequentist IDS is only about a factor of 5 slower than Thompson sampling on instances in $\RR^5$ with $k=50$ actions.

\begin{figure}[p]
	\includegraphics{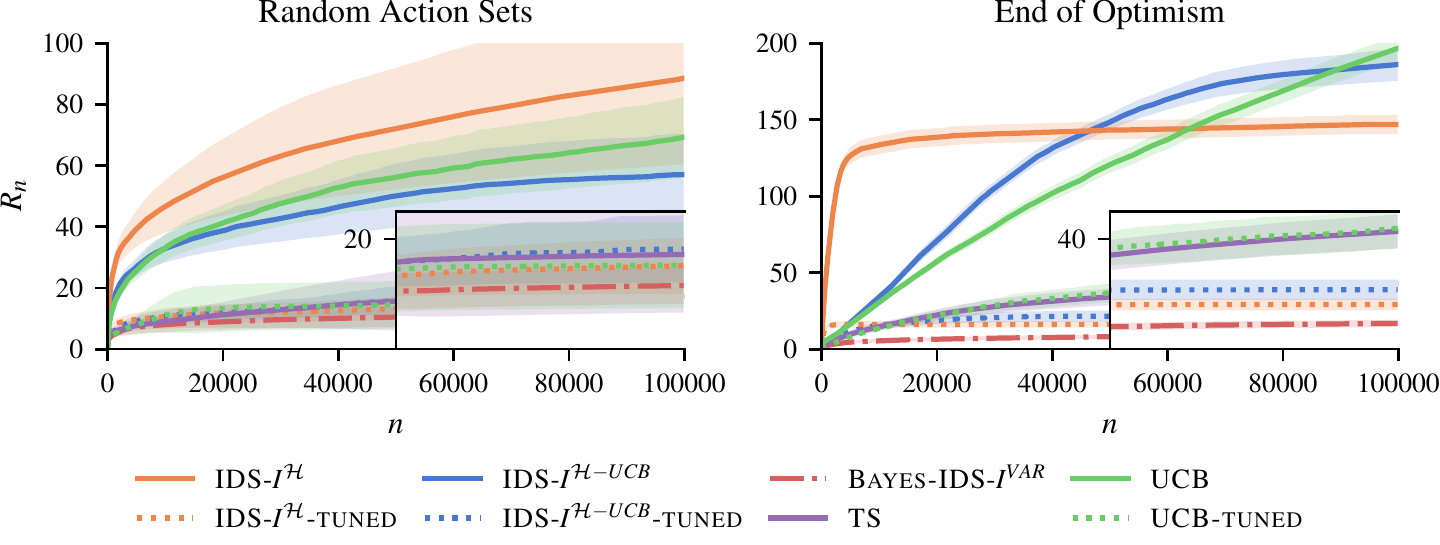}
	\caption{Comparison with Bayesian methods. On these examples, Bayesian IDS outperforms the frequentist methods, even when tuning the frequentist counterpart ($\beta_s=1$).}\label{fig:asymptotics-bayesian}
\end{figure}
\begin{table}[p]
	\begin{center}\noindent\begin{tabular}{lr@{\hskip0pt}lr@{\hskip0pt}l}
			\toprule%
			Algorithm & \multicolumn{2}{c}{$d=2,k=6$} & \multicolumn{2}{c}{$d=5,k=50$}\\
			\midrule%
			\textsc{Bayes-IDS}-$I^\IVAR\textsc{-Exact}$ & $561.7$&$\pm58.8$ & $2560.0$&$\pm78.4$\\
			\textsc{Bayes-IDS}-$I^\IVAR$ & $544.4$&$\pm69.7$ & $1771.9$&$\pm40.5$\\
			\textsc{IDS}-$I^\IAUCB\textsc{-Exact}$ & $50.5$&$\pm22.6$ & $798.5$&$\pm233.5$\\
			\textsc{IDS}-$I^\IAUCB$ & $45.7$&$\pm18.8$ & $106.8$&$\pm28.6$\\
			\textsc{UCB} & $26.9$&$\pm7.7$ & $23.9$&$\pm5.7$\\
			\textsc{TS} & $21.6$&$\pm5.9$ & $22.2$&$\pm6.9$\\
			\bottomrule
		\end{tabular}
	\end{center}
	\caption{Runtime comparison on random action sets with horizon $n=10^5$. The table shows mean and standard-deviation of the runtime in seconds on 50 runs, computed on a single core at 2.30GHz. The \textsc{Exact}-suffix indicates that the IDS distribution is computed exactly, whereas no suffix means that we minimize the tradeoff directly between $\hat x_s$ and an informative action as discussed at the end of Section \ref{sec:ids}.}\label{tbl:runtime}
\end{table}

\end{document}